\newtheorem{example}{Example}
\newtheorem{definition}{Definition}
\newtheorem{theorem}{Theorem}
\newtheorem{lemma}{Lemma}
\newtheorem{proposition}{Proposition}
\newcommand{\R}{\mathbb{R}}
\newcommand{\Rinf}{\overline{\R}}
\newcommand{\sign}{\text{sign\,}}
\newcommand{\prox}{\text{prox}}
\newcommand{\proj}{\text{proj}}
\newcommand{\dom}{\text{dom}\, }
\newcommand{\inter}{\text{int}\, }
\newcommand{\diver}{\text{div}}
\newcommand{\TV}{\text{TV}}
\newcommand{\VTV}{\text{VTV}}
\newcommand{\CTV}{\text{CTV}}
\newcommand{\diag}{\mathop{\mathrm{diag}}}
\title{Collaborative Total Variation: \\A General Framework for Vectorial TV Models\thanks{This work was supported by the Ministerio de Ciencia e Innovaci\'on under grant TIN2011-27539, and by ERC Starting Grant \textquotedblleft Convex Vision\textquotedblright.}}
\author{J. Duran\footnotemark[2] \and M. Moeller\footnotemark[3] \and C. Sbert\footnotemark[2] \and D. Cremers\footnotemark[3]}
\begin{document}

\maketitle

\renewcommand{\thefootnote}{\fnsymbol{footnote}}

\footnotetext[2]{Universitat de les Illes Balears, Department of Mathematics and Computer Science, Anselm Turmeda, Ctra. de Valldemossa km. 7.5, 07122 Palma de Mallorca, Spain (joan.duran@uib.es, catalina.sbert@uib.es). During this work, J. Duran had a fellowship of the Conselleria d'Educaci\'o, Cultura i Universitats of the Govern de les Illes Balears for the realization of his Ph.D. thesis, which was selected under an operational program co-financed by the European Social Fund.}
\footnotetext[3]{Technische Universit{\"a}t M{\"u}nchen, Department of Mathematics and Computer Science, Informatik 9, Boltzmannstrasse 3, 85748 Garching, Germany (michael.moeller@in.tum.de, cremers@tum.de).}
\renewcommand{\thefootnote}{\arabic{footnote}}

\begin{abstract}
Even after over two decades, the total variation (TV) remains one of the most popular regularizations for image processing problems and has sparked a tremendous amount of research, particularly to move from scalar to vector-valued functions. In this paper, we consider the gradient of a color image as a three dimensional matrix or tensor with dimensions corresponding to the spatial extend, the differences to other pixels, and the spectral channels. The smoothness of this tensor is then measured by taking different norms along the different dimensions. Depending on the type of these norms one obtains very different properties of the regularization, leading to novel models for color images. We call this class of regularizations {\it collaborative total variation} (CTV). On the theoretical side, we characterize the dual norm, the subdifferential and the proximal mapping of the proposed regularizers. We further prove, with the help of the generalized concept of singular vectors, that an $\ell^{\infty}$ channel coupling makes the most prior assumptions and has the greatest potential to reduce color artifacts. Our practical contributions consist of an extensive experimental section where we compare the performance of a large number of collaborative TV methods for inverse problems like denoising, deblurring and inpainting.
\end{abstract}


\pagestyle{myheadings}
\thispagestyle{plain}

\section{Introduction}

Many problems in image processing require the choice of a good prior that makes assumptions on the structure of the underlying image we seek to estimate. This prior often takes the form of a regularization term for an energy functional which is to be minimized. Observing that quadratic regularization did not allow recovering sharp discontinuities, Rudin, Osher and Fatemi proposed the total variation (TV) penalty \cite{ROF1992} for solving inverse problems. The total variation pioneered as a discontinuity-preserving regularizer in the sense that it assigns the same energy cost to sharp and smooth transitions. Therefore, it is one of the simplest (convex) variational models that allows discontinuities, yet it disfavours the solution to have oscillations.

Although the TV was originally designed for image denoising, it has become one of the most popular regularizations for many image processing problems and has sparked a tremendous amount of research.  While many extensions like anisotropic TV \cite{Esedoglu2004, Grasmair2010, Weickert1998}, weighted TV \cite{CollDuran2015, GilboaSochen2006, Grasmair2009}, higher order TV \cite{Benning2013, Bredies2010, ChanMarquina2000, Papafitsoros2014, You1996}, nonlocal TV \cite{DuranMoellerNLTV2015, GilboaOsher2007, GilboaOsher2008, Peyre2008, Ranftl2014}, or nonconvex TV \cite{Krishnan2009, Mollenhoff2015} have been proposed, the general idea of penalizing image oscillations with one-homogeneous functions depending on the spatial derivatives of the image remain the same. A lot of recent research has focused on extending the classical TV model for grayscale images to vector-valued (color or multichannel) images. We provide below an initial overview on vectorial total variation, which will be detailed and link to our framework in Section \ref{sec:VTV}.

\subsection{Vector Valued Total Variation}

Let $\Omega\subset\R^M$ be a bounded domain, then the scalar total variation  of a locally integrable function $u\in\mathcal{L}^1_{\text{loc}}\left(\Omega,\R\right)$ is
\begin{equation}\label{scalarTV}
\TV(u):=\sup_{\xi\in\Xi}\left\lbrace \int_{\Omega} u(x) \: \diver\left(\xi(x)\right) \, dx \right\rbrace,
\end{equation}
where $x=(x_1, \ldots, x_M)\in\Omega$ and 
\begin{equation}\label{scalarTVXi}
\Xi = \left\lbrace \xi\in\mathcal{C}_c^1(\Omega,\R^M) \: : \: \|\xi(x)\|\leq 1, \forall x\in\Omega\right\rbrace
\end{equation}
is the set of continuously differentiable and bounded functions with compact support in $\Omega$. The definition given in \eqref{scalarTV} introduces a dual formulation according to which the TV is the convex conjugate of the indicator function of the convex set
$K_{\TV} := \left\lbrace \diver(\xi) \: : \: \xi\in \Xi\right\rbrace$. For a differentiable function $u\in \mathcal{C}^1(\Omega, \mathbb{R})$, one has $\text{TV}(u)=\int_{\Omega} |\nabla u(x)|\,dx$. Note that the TV can be defined differently depending on the norm used in \eqref{scalarTVXi}. For a better understanding, let us restrict ourselves to $u\in \mathcal{C}^1(\Omega, \mathbb{R})$ and denote its gradient by $\nabla u(x)=\left(\partial_{x_1} u(x), \ldots, \partial_{x_M} u(x)\right)\in\R^M$ at each $x\in\Omega$. Therefore, using $\|\cdot\|_2$ as dual norm leads to the isotropic TV, $\int_{\Omega}  \sqrt{ \sum_{m} \left(\partial_{x_m} u(x) \right)^2}\, dx$, whereas the anisotropic TV follows from choosing $\|\cdot\|_{\infty}$ in \eqref{scalarTVXi}, $\int_{\Omega} \sum_{m} |\partial_{x_m} u(x)| \, dx$.

The idea of the vectorial total variation is to extend the above definitions to vector-valued functions $\mathbf{u}:\Omega\rightarrow \R^C$. A major decision with color images is how to couple channels. A straightforward approach proposed by Blomgren and Chan \cite{Blomgren1998} consists in using a global channel coupling by penalizing the $\ell^2$ norm of the TV contributions across channels. However, local coupling outperforms global coupling in many theoretical and practical aspects \cite{Holt2014}. In this setting, most of the methods in color image reconstruction used an $\ell^1$ or $\ell^2$ norm to penalize the TV of the channels at each pixel \cite{Attouch2006, BressonChan2008, DuvalAujol2009}. Additionally, some interesting approaches incorporated a change of color space \cite{ChanKang2001, Condat2012}. Further versions of vectorial TV in literature are based on the singular values of the submatrices one obtains by fixing a pixel location and looking at the remaining matrix in the channel and derivative dimensions. Important cases are the Schatten$-\infty$ norm \cite{Goldluecke2012}, which penalizes the largest singular value, and the nuclear norm or Schatten$-1$ norm \cite{Lefkimmiatis2013},  which is a convex relaxation of minimizing the rank of the image Jacobian at each pixel \cite{Recht2010}.

\subsection{Problem Formulation}

For the sake of simplicity, we will consider discrete versions of the TV for the remainder of this paper. Let us define the Euclidean spaces $X:=\R^{N\times C}$ and $Y:=\R^{N\times M \times C}$, where $N$ is the number of pixels of the image, $M$ is the number of directional derivatives, and $C$ is the number of color channels. We thus consider a color image as a two-dimensional matrix of size $N\times C$ denoted by $\mathbf{u} = \left(\mathbf{u}_1, \ldots, \mathbf{u}_C\right)\in X$, with $\mathbf{u}_k = \left(u_{1,k}, \ldots, u_{N,k}\right)^{\top}\in\R^N$ for each channel $k\in\{1,\ldots, C\}$. On the other hand, we define the linear operator $K:X\rightarrow Y$ such that $K\mathbf{u}\in Y$ is a three-dimensional matrix or tensor. In the rest of the paper, we use the colon to denote all elements along one dimension. For example, the $\ell^p$ norm of $A\in Y$ with respect to its third dimension reads $\|A_{i,j,:}\|_p^p := \sum_{k} |A_{i,j,k}|^p$.

The general problem we are concerned with is
\begin{equation}\label{eq:minproblem}
\min_{\mathbf{u}\in X} G(\mathbf{u}) + \|K\mathbf{u}\|_{\vec{b},a}
\end{equation}
where $G:X\rightarrow\R$ is a proper, convex, l.s.c. functional and $\|\cdot\|_{\vec{b},a}$ is a collaborative sparsity enforcing norm penalizing the gradient of the color image to be detailed later. 

In this paper, we propose a general and intuitive framework that allows us not only to handle pre-existing vectorial total variation models, but also to introduce some new interesting regularizations for color image processing. Our idea is that, in a discrete setting, the gradient of a vector-valued image is nothing but a three dimensional matrix or tensor with the dimensions corresponding to the spatial extend, the directional derivatives considered as linear operators containing the differences to other pixels, and the color channels. The energy or smoothness of this tensor can be measured by taking different norms along the different dimensions. Depending on the types of norms one obtains very different properties of the regularization. 

Two relevant examples immediately arise from the proposed framework. For the sake of clarity, let us write $A:=K\mathbf{u}\in Y$. If we first take the $\ell^p$ norm to the color dimension, then the $\ell^q$ norm along the derivative dimension of the remaining 2D matrix and, finally, the $\ell^r$ norm to the final pixel vector, one obtains the $\ell^{p,q,r}$ norm:
\begin{equation}\label{eq:lpqr}
 \|A\|_{p,q,r} := \left( \sum_{i=1}^N \left( \sum_{j=1}^M \left( \sum_{k=1}^C |A_{i,j,k}|^p\right)^{q/p}\right)^{r/q} \right)^{1/r}.
\end{equation}
In \eqref{eq:lpqr}, any of the indices $p$, $q$ or $r$ being equal to infinity means taking the maximum of the absolute values along the corresponding dimension. A second important example consists of penalizing with the $\ell^p$ norm the singular values of the 2D matrices arising from each pixel (that is, the Schatten$-p$ norm), and then applying the $\ell^q$ norm along the remaining vector:
\begin{equation}\label{eq:Splq}
(S^p, \ell^q)(A):= \left( \sum_{i=1}^N \left\| \left( \begin{array}{ccc} A_{i,1,1} & \cdots & A_{i,1,C} \\ \vdots  & \ddots & \vdots \\ A_{i,M,1} & \cdots & A_{i,M,C}\end{array}  \right) \right\|^q_{S^p} \right)^{1/q}.
\end{equation}

As an illustrative example, Figure \ref{fig:SyntheticEdge} shows the results of a numerical experiment regarding the ability of different channel couplings to suppress color artifacts. We use a synthetic image where we leave open if the colored wave pattern is signal content or noise. We see that the channel-by-channel regularization due to the $\ell^1$ norm eliminates all noise from constant regions but the color structure of the underlying image is not touched.  On the contrary, the $\ell^{\infty}$ norm leads to the strongest channel coupling and is able to remove the color oscillations completely. In between both, the $\ell^2$ channel coupling significantly reduces the colors around the white square but does not eliminate them. Therefore, we expect a color coupling with an $\ell^p$ norm to be stronger the larger $p$ is.

\begin{figure}[!htpb]
\centering
\begin{tabular}{cccc}
   \includegraphics[trim=1cm 1cm 1cm 1cm, clip=true, scale=0.403]{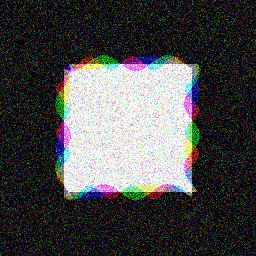} &
   \includegraphics[trim=1cm 1cm 1cm 1cm, clip=true, scale=0.403]{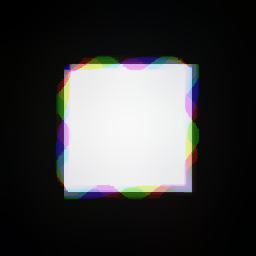} &
   \includegraphics[trim=1cm 1cm 1cm 1cm, clip=true, scale=0.403]{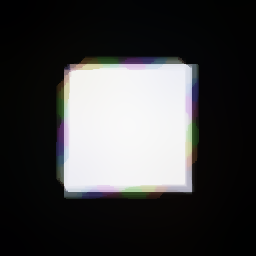} &
   \includegraphics[trim=1cm 1cm 1cm 1cm, clip=true, scale=0.403]{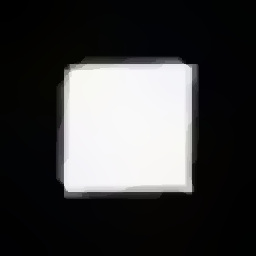} \\
  Noisy & $\ell^1$ coupling & $\ell^2$ coupling & $\ell^{\infty}$ coupling
\end{tabular}
\caption{Denoising a synthetic image where we leave open if the colored wave pattern is signal content or noise. One observes that uncoupling channels ($\ell^1$ norm) keeps the colored waves but the strongest channel coupling ($\ell^{\infty}$ norm) eliminates them. In between both approaches, the $\ell^2$ channel coupling suppresses but does not eliminate the wave pattern. }
\label{fig:SyntheticEdge}
\end{figure}

\subsection{Contributions and Preliminary Works}

We streamline below the novelty of our approach. The major contributions of this work are:
\begin{itemize}
\item The introduction of a large family of (discrete) convex energy functionals that generalize the TV to vector-valued images. Motivated by recent advances in compressed sensing, we interpret the total variation as looking for an image for which the gradient is sparse. We use collaborative sparsity \cite{YuanLin2006} to model different types of TV which are then used in a variational formulation to provide regularized solutions of ill-posed inverse problems in color imaging. We call this family of regularizers {\it Collaborative Total Variation} (CTV).

\item The definition of general collaborative sparsity enforcing norms that characterize all CTV regularizations. We further compute their dual norms and their subdifferentials, which play a direct role in computing optimality conditions of several regularized problems.

\item The proof, with the help of the generalized concept of singular vectors \cite{BenningBurger2013}, that an $\ell^{\infty}$ channel coupling leads to the strongest correlation, makes the most prior assumptions and has the greatest potential to reduce color artifacts.

\item The proposal of sophisticated collaborative norms such as $\ell^{\infty,1,1}$, $\ell^{\infty,2,1}$, $\ell^{2,\infty,1}$, and  $\ell^{\infty,\infty,1}$, which lead to novel methods for color images. All variants can be solved very efficiently by using the same splitting scheme, for instance, the primal-dual hybrid gradient (PDHG) method \cite{ChambollePock2011, EsserZhang2010, Zhu2008}. Since the key to obtaining a fast PDHG algorithm is an efficient evaluation of the proximity operators, they are provided in detail.

\item An extensive experimental evaluation of some of the proposed CTV methods on several image processing problems, such as denoising, deblurring or inpainting. A detailed performance comparison on different databases for color image denoising using the ROF model together with the proposed collaborative TV regularizations is provided in the companion paper \cite{DuranMoellerIPOL2015}. We further include some experiments for cartoon and texture decomposition. Code and an online demo to reproduce all examples will be made available soon. 
\end{itemize}

In the original conference paper \cite{DuranMoellerNLTV2015}, which contains preliminary parts of this work, we proposed to penalize the $\ell^{p,q,r}$ norm of the three-dimensional structure underlying the nonlocal gradient for color image reconstruction. In particular, the newly proposed $\ell^{\infty,1,1}-$NLTV model yielded superior results. In the current paper, we extend the original framework in order to include more general collaborative norms: we propose novel $\ell^{p,q,r}$ norms and further incorporate Schatten $(S^p, \ell^q)$ norms. We also provide a mathematical justification of the superiority of the $\ell^{\infty}$ coupling for restoring high inter-channel correlated images, as well we develop general properties for collaborative norms useful in optimization. Finally, we give a detailed performance comparison of more vectorial TV methods derived from the proposed framework in additional image processing problems.

During the wording of this work, the conference paper by Miyata and Sakai \cite{Miyata2012}, which pioneered the $\ell^{\infty}$ channel coupling, came to our hands. To the best of our knowledge,  \cite{Miyata2012} is the only paper that uses the supremum norm for vectorial TV. However, the authors proposed to first perform a color transform that reduces the inter-channel correlation. From our point of view, this change of color space is counter-intuitive when combined with the strong inter-channel coupling of $\ell^{\infty}$. One of our main contributions is to introduce the $\ell^{\infty}$ norm in a straightforward way and efficiently exploit its properties.

\subsection{Outline of the Paper}

The rest of the paper is organized as follows. The next section introduces the definition of collaborative norms and develops some general properties which play a direct role when computing optimality conditions of regularized problems. In Section \ref{sec:VTV} we summarize the current literature on different definitions for extending the TV to multichannel images. All of them are analyzed as special cases of the proposed approach. We investigate in Section \ref{sec:singVectors} which channel coupling leads to the strongest correlation, makes the most prior assumptions and has the greatest potential to reduce color artifacts. In Section \ref{sec:numerics}, we give detailed explanations on how to determine minimizers of typical image processing problems using CTV as a prior. Particularly, we write down the proximity operators for all types of regularizations discussed in this paper. We compare different CTV methods in numerical experiments for denoising, deblurring and inpainting of color images in Section \ref{sec:results}, before we draw conclusions in Section \ref{sec:conclusions}.

\section{Collaborative Total Variation Regularization}\label{sec:CTV}

In the following, we introduce a novel regularization family which we use to solve inverse problems in vector-valued image processing within a variational setting. The proposed models are based on the use of collaborative sparsity enforcing norms, which will be abbreviated as {\it collaborative norms}, that are defined below.

\subsection{Definition of Collaborative Norms}

By considering the derivatives of a color image as a linear operator, one obtains a three-dimensional matrix or tensor with one dimension corresponding to the pixels in the image, one dimension corresponding to the directional derivatives, and one dimension corresponding to the color channels. 

\begin{example}
For illustrative purposes, suppose that a color image given on a rectangular domain of size $N_{w}\times N_{h}$ has been rearranged from left to right and from top to bottom into a matrix $\mathbf{u} = (\mathbf{u}_1, \mathbf{u}_2, \mathbf{u}_3)\in \R^{N\times 3}$. Consider $K$ to be the standard gradient computed via forward differences along $x-$ and $y-$directions. Then, the two-dimensional submatrix obtained by fixing the $n-$th pixel in the first dimension is
$$
\left(
\begin{array}{ccc}
u_{n+1,1} - u_{n,1} & u_{n+1,2} - u_{n,2} & u_{n+1,3} - u_{n,3} \\
u_{n+N_w,1} - u_{n,1} & u_{n+N_w,2} - u_{n,2} &  u_{n+N_w,3} - u_{n,3}\\
\end{array}
\right).
$$
\end{example}

\begin{example}
Let us see how a neighbourhood filter fits in our framework for a color image with four pixels. Let $K$ be defined as the nonlocal gradient with respect to a weighting function $w$, which measures the similarity between two pixels in the image. In this case, we have $N=4$, $M=4$, and $C=3$. Contrary to the previous example, we fix here the color dimension to the $k-$th channel. Therefore, the remaining two-dimensional submatrix along pixel and derivative dimensions is
$$
\left(
\begin{array}{cccc}
0 & w_{1,2} \left(u_{1,k}-u_{2,k}\right) & w_{1,3} \left(u_{1,k}-u_{3,k}\right) & w_{1,4} \left(u_{1,k}-u_{4,k}\right)\\
w_{2,1} \left(u_{2,k}-u_{1,k}\right) & 0 & w_{2,3} \left(u_{2,k}-u_{3,k}\right) & w_{2,4} \left(u_{2,k}-u_{4,k}\right)\\
w_{3,1} \left(u_{3,k}-u_{1,k}\right) & w_{3,2} \left(u_{3,k}-u_{2,k}\right) & 0 & w_{3,4} \left(u_{3,k}-u_{4,k}\right)\\
w_{4,1} \left(u_{4,k}-u_{1,k}\right) & w_{4,2} \left(u_{4,k}-u_{2,k}\right) & w_{4,3} \left(u_{4,k}-u_{3,k}\right) & 0
\end{array}
\right).
$$
In general, the previous matrix is of size $N\times N$. However, one usually uses a few nonzero weights in practical applications.
\end{example}

Although in the literature only $\ell^1$ and $\ell^2$ norms have been mainly used so far, it makes sense to look at vectorial TV as applying the more and more popular mixed norms to the gradient of the image (see \cite{EsserMoeller2013, HeinsMoeller2014, Kowalski2009} and references therein). For a general tensor $A\in Y$, we introduce the following family of norms that we call {\it collaborative norms}.

\begin{definition}
Let $\|\cdot\|_a:\R^{N}\rightarrow\R$ be any vector norm and $\|\cdot\|_{\vec{b}}:\R^{M\times C}\rightarrow\R$ any matrix norm. Then, the \emph{collaborative norm} of $A\in\R^{N\times M\times C}$,  which will be denoted by $\|\cdot\|_{\vec{b},a}:\R^{N\times M\times C}\rightarrow\R$, is defined as
\begin{equation}\label{eq:matrixnorm}
\|A\|_{\vec{b},a}:= \|v\|_a, \quad \text{with}\quad v_i=\|A_{i,:,;}\|_{\vec{b}}, \:\: \forall i\in\{1,\ldots, N\},
\end{equation}
where $A_{i,:,:}$ is the (two-dimensional) submatrix obtained by staking the second and third dimensions of $A$ at each $i-$th position in the first dimension.
\end{definition}

We note that the examples given in \eqref{eq:lpqr} and \eqref{eq:Splq} follow from the above definition. Indeed, the $\ell^{p,q,r}$ norm arises from taking $\|\cdot\|_{\vec{b}}$ as the matrix $\ell^{p,q}$ norm and $\|\cdot\|_a$ as the $\ell^r$ norm. On the contrary, the $(S^p,\ell^q)$ norm is obtained when one considers $\|\cdot\|_{\vec{b}}$ to be the matrix Schatten-$p$ norm, that is penalizing the $\ell^p$ norm of the singular values of the submatrix $A_{i,:,:}$, and $\|\cdot\|_a$, the $\ell^q$ norm.

Since the collaborative norms defined in \eqref{eq:matrixnorm} are non invariant to permutations of the dimensions, we propose to denote $\|A\|_{\vec{b},a}(col, der, pix)$ for first applying the matrix norm $\|\cdot\|_{\vec{b}}$ to the submatrix obtained by fixing each pixel and looking at the remaining derivative and channel dimensions, and then using the vectorial norm $\|\cdot\|_a$ along the pixel dimension. Importantly, note that our framework covers any transform along each of the dimensions, in particular, allows us to incorporate color space transforms before applying any collaborative norm.

\subsection{General Properties of Collaborative Norms}

It is well known that duality plays a direct role in computing optimality conditions of several regularized problems. The following result characterizes the dual norm to any collaborative norm.

\begin{theorem}
\label{thm:dualnorm}
Let $\|\cdot\|_{\vec{b}^{*}}$ and $\|\cdot\|_{a^{*}}$ denote the dual norms to $\|\cdot\|_{\vec{b}}$ and $\|\cdot\|_{a}$, respectively. Consider $A\in\R^{N\times M\times C}$ and define $v\in\R^N$ such that $v_i:=\|A_{i,:,:}\|_{\vec{b}^{*}}$ for each $i\in\{1,\ldots, N\}$. If $\|v\|_{a^{*}}$ only depends on the absolute values of $v_i'$s, then the dual norm to $\|\cdot\|_{\vec{b},a}$, denoted by $\|\cdot\|_{\vec{b}^{*},a^{*}}$, is
\begin{equation}\label{eq:matrixnormdual}
\|A\|_{\vec{b}^{*},a^{*}}= \|v\|_{a^{*}}, \quad \text{with}\quad v_i=\|A_{i,:,;}\|_{\vec{b}^{*}}, \:\: \forall i\in\{1,\ldots, N\}.
\end{equation}
In other words, the dual norm of the composite is the composite of the dual norms. 
\end{theorem}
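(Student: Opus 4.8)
The plan is to compute the dual norm straight from its definition as a support function,
\[
\|A\|_{\vec{b}^{*},a^{*}} = \sup\{\langle A, B\rangle : \|B\|_{\vec{b},a}\le 1\},
\]
and to exploit the fact that the Euclidean inner product on $\R^{N\times M\times C}$ decouples along the pixel index: $\langle A, B\rangle = \sum_{i=1}^N \langle A_{i,:,:}, B_{i,:,:}\rangle$, each summand being a Frobenius inner product of $M\times C$ submatrices. Since the constraint $\|B\|_{\vec{b},a}\le 1$ reads $\|w\|_a \le 1$ with $w_i = \|B_{i,:,:}\|_{\vec{b}}$, I would split the maximization into an inner problem over the shape of each submatrix $B_{i,:,:}$ with its norm $w_i$ frozen, and an outer problem over the admissible profile $w$.

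For the inner problem I would fix a nonnegative vector $w$ and maximize $\sum_i \langle A_{i,:,:}, B_{i,:,:}\rangle$ subject to $\|B_{i,:,:}\|_{\vec{b}} = w_i$. The pixel submatrices decouple, so this is $N$ independent problems, and positive homogeneity of the dual norm together with the definition $\|A_{i,:,:}\|_{\vec{b}^{*}} = \sup\{\langle A_{i,:,:}, M\rangle : \|M\|_{\vec{b}}\le 1\}$ gives the $i$-th optimal value as $w_i\,\|A_{i,:,:}\|_{\vec{b}^{*}} = w_i v_i$, with $v_i$ as in the statement. As every nonnegative vector is a realizable profile, this collapses the whole supremum to the finite-dimensional program
\[
\|A\|_{\vec{b}^{*},a^{*}} = \sup\{\langle v, w\rangle : w\ge 0,\ \|w\|_a \le 1\}.
\]

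It then remains to recognize the right-hand side as $\|v\|_{a^{*}} = \sup\{\langle v, w\rangle : \|w\|_a\le 1\}$, i.e., to drop the sign constraint $w\ge 0$, and this is the step I expect to be the main obstacle. Since each $v_i$ is a norm value we have $v\ge 0$, so replacing any feasible $w$ by its componentwise modulus $|w|$ can only increase $\langle v, w\rangle$; the delicate point is that $|w|$ must remain feasible. Here I would invoke the Bauer--Stoer--Witzgall characterization (a norm depends only on the absolute values of its entries if and only if it is monotone) together with the fact that absoluteness of a norm and of its dual are equivalent, so that the stated hypothesis on $\|\cdot\|_{a^{*}}$ forces $\|\cdot\|_a$ to be absolute as well. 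Then $\||w|\|_a = \|w\|_a \le 1$, $|w|$ is admissible, and dropping the constraint leaves the supremum unchanged, yielding $\|A\|_{\vec{b}^{*},a^{*}} = \|v\|_{a^{*}}$. Without this absoluteness the maximizer in the definition of $\|v\|_{a^{*}}$ could carry negative components, and the restriction to nonnegative profiles---which is forced on us because the $w_i$ collect norm values---would give a strictly smaller value; the inner/outer decomposition and the per-pixel computation are otherwise routine applications of duality and homogeneity.
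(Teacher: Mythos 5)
Your proposal is correct, but it follows a genuinely different route from the paper's proof, essentially going in the opposite direction. You evaluate the dual norm of $\|\cdot\|_{\vec{b},a}$ directly: decouple $\langle A,B\rangle$ over the pixel index, marginalize over the profile $w_i=\|B_{i,:,:}\|_{\vec{b}}$, solve the inner problems by homogeneity to reduce everything to $\sup\{\langle v,w\rangle : w\ge 0,\ \|w\|_a\le 1\}$, and then drop the sign constraint by symmetrization. The paper instead shows that the dual of the \emph{candidate} norm $\|\cdot\|_{\vec{b}^{*},a^{*}}$ equals $\|\cdot\|_{\vec{b},a}$: H\"older applied twice gives $\langle A,B\rangle\le\|A\|_{\vec{b},a}$ for every $B$ with $\|B\|_{\vec{b}^{*},a^{*}}\le 1$, and equality is attained by the explicit construction $B_{i,j,k}=z_i\,y^i_{j,k}$, where $z$ attains the $a$-duality pairing (made nonnegative by flipping signs, which is exactly where the hypothesis on $\|\cdot\|_{a^{*}}$ enters) and each $y^i$ attains the $\vec{b}$-duality pairing at pixel $i$; the statement as phrased then follows by biduality of norms in finite dimensions. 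Two observations on the trade-off. First, your direction matches the statement literally and needs no attainment construction, only suprema. Second, you use the hypothesis in transferred form: the theorem assumes $\|\cdot\|_{a^{*}}$ is absolute, whereas your feasibility step needs $\|\cdot\|_a$ absolute, so you must invoke the classical Bauer--Stoer--Witzgall facts (absolute iff monotone, and absoluteness passes between a norm and its dual). That appeal is legitimate, but it is not a free lunch: the proof of that equivalence is essentially the same sign-flipping argument the paper performs by hand on the attaining vector $z$, so the paper's version is self-contained where yours outsources the key step; in exchange, the paper's first chain of inequalities also delivers the H\"older inequality for collaborative norms recorded as the lemma immediately after the theorem.
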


\begin{proof}
We aim at proving that
$$
\sup\left\lbrace \langle A, B\rangle \: : \: B\in\R^{N\times M\times C}, \: \|B\|_{\vec{b}^{*}, a^{*}}\leq 1 \right\rbrace = \|A\|_{\vec{b}, a},
$$
where $\|\cdot\|_{\vec{b}^{*},a^{*}}$ is defined in \eqref{eq:matrixnormdual}. Let $B \in \mathbb{R}^{N\times M\times C}$ satisfying $\|B\|_{\vec{b}^{*},a^{*}}\leq 1$ be fixed but arbitrary, and define $v_i^{\vec{b}} :=  \|A_{i,:,:}\|_{\vec{b}}$ and $v_i^{\vec{b}^{*}}:= \|B_{i,:,:}\|_{\vec{b}^{*}}$ for each $i\in\{1,\ldots, N\}$. Applying H{\"o}lder inequality for both $\|\cdot\|_{\vec{b}}$ and $\|\cdot\|_{a}$ norms yields
\begin{equation*}
\begin{aligned}
\langle A, B \rangle &= \sum_{i=1}^N \sum_{j=1}^{M}\sum_{k=1}^C A_{i,j,k} B_{i,j,k} \leq \sum_{i=1}^N   \|A_{i,:,:}\|_{\vec{b}}  \|B_{i,:,:}\|_{\vec{b}^{*}} =  \langle v^{\vec{b}} , v^{\vec{b}^{*}} \rangle \\
&\leq  \|v^{\vec{b}}\|_a \|v^{\vec{b}^{*}}\|_{a^{*}} = \|A\|_{\vec{b},a} \|B\|_{\vec{b}^{*},a^{*}}\leq \|A\|_{\vec{b},a}.
\end{aligned}
\end{equation*}
The proof reduces now to show that there exists some $B\in\R^{N\times M\times C}$ satisfying $\|B\|_{\vec{b}^{*},a^{*}}\leq 1$ for which the equality holds.

Since $\| \cdot \|_{a^{*}}$ is the dual norm to $\| \cdot \|_{a}$, there exists some $z\in\R^N$, $\| z \|_{a^{*}} \leq 1$, such that $ \langle v^{\vec{b}}, z \rangle = \| v^{\vec{b}}\|_a$. We can additionally assume that $z_i\geq 0$  for all $i\in\{1,\ldots, N\}$. Indeed, suppose that $z_j<0$ for some $j\in\{1,\ldots,N\}$ and define $\widetilde{z}$ with $\widetilde{z}_i=z_i$ for $i\neq j$ and $\widetilde{z}_j=-z_j$. Since $\| z \|_{a^{*}}$ only depends on the absolute values of the $z_i$'s, it follows that $\widetilde{z}$ meets $\|\widetilde{z}\|_{a^{*}}\leq 1$. If $v_j^{\vec{b}}>0$, then one deduces that $\langle v^{\vec{b}},z\rangle  < \langle v^{\vec{b}},\widetilde{z}\rangle$, which contradicts the definition of $\|v^{\vec{b}}\|_a$. If $v_j^{\vec{b}}=0$, then $\|v^{\vec{b}}\|_a=\langle v^{\vec{b}}, z\rangle = \langle v^{\vec{b}}, \widetilde{z}\rangle$ so that we need only to take $\widetilde{z}$ instead of $z$.

On the other hand, since $\| \cdot \|_{\vec{b}^{*}}$ is the dual norm to $\| \cdot \|_{\vec{b}}$, there exists $y^i\in\R^{M\times C}$, $\| y^i \|_{\vec{b}^{*}} \leq 1$, such that $ \langle A_{i,:,:}, y^i \rangle = \| A_{i,:,:}\|_{\vec{b}}$ for all $i\in\{1,\ldots,N\}$. 

Now, it follows from the definitions of $z\in\R^N$ and each $y^i\in\R^{M\times C}$ that
$$
\|A\|_{\vec{b},a} = \|v^{\vec{b}}\|_a = \langle v^{\vec{b}}, z\rangle = \sum_{i=1}^N z_i \|A_{i,:,:}\|_{\vec{b}} =  \sum_{i=1}^N z_i \langle A_{i,:,:}, y^i \rangle =  \sum_{i=1}^N \sum_{j=1}^{M}\sum_{k=1}^C z_i A_{i,j,k} y^i_{j,k},
$$
from where $\|A\|_{\vec{b},a} = \langle A,B \rangle$ by choosing $B_{i,j,k} = z_i y^i_{j,k}$. Furthermore, 
$$
v_i^{\vec{b}^{*}}=\|B_{i,:,:}\|_{\vec{b}^{*}} =  \|z_i y^i\|_{\vec{b}^{*}} = |z_i|\cdot \|y^i\|_{\vec{b}^{*}} \leq |z_i| = z_i, \quad \forall i\in\{1,\ldots, N\}.
$$
Let $v\in\R^N$, $\|v\|_a\leq 1$, be fixed but arbitrary, then
$$
\langle v^{\vec{b}^{*}}, v\rangle = \sum_{i=1}^Nv_i^{\vec{b}^{*}}v_i \leq \sum_{i=1}^N z_i v_i \leq \|z\|_{a^{*}} \|v_i\|_a \leq 1,
$$
which implies $\|B\|_{\vec{b}^{*},a^{*}} = \|v^{\vec{b}^{*}}\|_{a^{*}}=\sup\{ \langle v^{\vec{b}^{*}}, v\rangle : v\in\R^N, \|v\|_a\leq 1\}\leq 1$. This means that we found $B\in\R^{N\times M\times C}$, $\|B\|_{\vec{b}^{*},a^{*}} \leq 1$, such that $\langle A, B \rangle =  \|A\|_{\vec{b},{a}}$, which concludes the proof.
\end{proof}

Theorem \ref{thm:dualnorm} states that the $\ell^{p^{*},q^{*},r^{*}}$ norm is dual to $\ell^{p,q,r}$, where $p^{*}$, $q^{*}$, and $r^{*}$ denote the H{\"o}lder conjugate exponents of $p$, $q$, and $r$, respectively. Similarly, the dual norm of the Schatten $(S^p,\ell^q)$ norm is $(S^{p^{*}},\ell^{q^{*}})$. Furthermore, we have implicitly proved a H{\"o}lder's inequality for collaborative norms.
\begin{lemma}
Under conditions of Theorem \ref{thm:dualnorm}, we have
$$
|\langle A, B\rangle |\leq \|A\|_{\vec{b},a} \cdot \|B\|_{\vec{b}^{*},a^{*}},
$$
for any $A,B\in\R^{N\times M\times C}$.
\end{lemma}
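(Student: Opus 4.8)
The plan is to observe that the desired inequality is exactly the first chain of estimates already carried out in the proof of Theorem~\ref{thm:dualnorm}, so essentially no new work is needed. I would begin from the expansion $\langle A, B\rangle = \sum_{i=1}^N \langle A_{i,:,:}, B_{i,:,:}\rangle$, where each inner pairing is the Frobenius product taken over the derivative and channel dimensions at the fixed pixel $i$. For each such $i$, the defining inequality of the dual matrix norm gives $\langle A_{i,:,:}, B_{i,:,:}\rangle \leq \|A_{i,:,:}\|_{\vec{b}}\,\|B_{i,:,:}\|_{\vec{b}^{*}}$.

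Next I would package these pixelwise bounds as vectors: setting $v_i^{\vec{b}} := \|A_{i,:,:}\|_{\vec{b}}$ and $v_i^{\vec{b}^{*}} := \|B_{i,:,:}\|_{\vec{b}^{*}}$ and summing over $i$ yields $\langle A, B\rangle \leq \langle v^{\vec{b}}, v^{\vec{b}^{*}}\rangle$. Applying the dual-norm pairing inequality a second time, now for the vector norms $\|\cdot\|_a$ and $\|\cdot\|_{a^{*}}$, gives $\langle v^{\vec{b}}, v^{\vec{b}^{*}}\rangle \leq \|v^{\vec{b}}\|_a\,\|v^{\vec{b}^{*}}\|_{a^{*}}$. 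By the definition in \eqref{eq:matrixnorm}, the right-hand side is precisely $\|A\|_{\vec{b},a}\,\|B\|_{\vec{b}^{*},a^{*}}$, so the claimed bound holds for the signed pairing $\langle A, B\rangle$. To obtain the absolute value on the left, I would run the identical argument with $-B$ in place of $B$; since every norm is symmetric we have $\|-B\|_{\vec{b}^{*},a^{*}} = \|B\|_{\vec{b}^{*},a^{*}}$, whence $\langle A, -B\rangle$ obeys the same bound, and combining the two estimates gives $|\langle A, B\rangle| = \max\{\langle A, B\rangle, \langle A, -B\rangle\} \leq \|A\|_{\vec{b},a}\,\|B\|_{\vec{b}^{*},a^{*}}$.

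I do not expect a genuine obstacle here: the argument uses nothing beyond the two-fold application of the dual-norm pairing inequality together with norm symmetry. The one point worth flagging is that the inequality part, unlike the equality-attainment half of Theorem~\ref{thm:dualnorm}, does \emph{not} require the extra hypothesis that $\|v\|_{a^{*}}$ depend only on the absolute values of its entries; that assumption enters only when constructing an explicit $B$ achieving equality. Thus the lemma is a strictly weaker statement than the theorem and follows directly from the computation already recorded.
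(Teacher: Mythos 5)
Your proof is correct and is essentially the paper's own argument: the paper states the lemma as having been ``implicitly proved'' by the first chain of inequalities in the proof of Theorem~\ref{thm:dualnorm}, which is exactly the two-fold dual-norm pairing you reproduce, and your handling of the absolute value via $-B$ is a routine completion. Your closing remark is also accurate --- the hypothesis that $\|\cdot\|_{a^{*}}$ depends only on absolute values is used solely in the equality-attainment half of the theorem, not in this inequality.
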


We also furnish ourselves with the subdifferential of the proposed collaborative norms that will be useful for computing their proximal mappings.

\begin{theorem}
\label{thm:subdiffnorm}
Consider $A\in\R^{N\times M\times C}$ and define $v\in\R^N$ such that $v_i:=\|A_{i,:,:}\|_{\vec{b}}$ for each $i\in\{1,\ldots, N\}$. If $\|v\|_{a^{*}}$ only depends on the absolute values of $v_i$'s, then the subdifferential of $\|\cdot\|_{\vec{b},a}$ is given by
\begin{equation}\label{eq:subdiff}
\partial \big( \|A\|_{\vec{b},a}\big) = \left\{ B\in \R^{N\times M\times C} \: :\: \|B\|_{\vec{b}^{*}, a^{*}}\leq 1 \; \mbox{ and } \;\langle B, A\rangle = \| A\|_{\vec{b},a} \right\}.
\end{equation}
\end{theorem}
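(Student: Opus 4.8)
The plan is to recognize the claimed identity as the standard characterization of the subdifferential of a norm, specialized to the collaborative norm $\|\cdot\|_{\vec{b},a}$ whose dual has just been identified. The first step is to invoke Theorem \ref{thm:dualnorm}: because the hypothesis states that $\|v\|_{a^{*}}$ depends only on the absolute values of the $v_i$'s, that theorem applies and tells me the dual norm to $\|\cdot\|_{\vec{b},a}$ is exactly the composite $\|\cdot\|_{\vec{b}^{*},a^{*}}$. This is the only place the hypothesis enters, and it is precisely what lets me phrase the right-hand side of \eqref{eq:subdiff} in terms of the concrete dual norm rather than an abstract one. (The same condition also guarantees that $\|\cdot\|_{\vec{b},a}$ is genuinely a norm, via monotonicity of $\|\cdot\|_a$, so that its subdifferential is well defined.)

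With the dual norm in hand, I would prove the two set inclusions separately, using the H\"older inequality for collaborative norms established in the preceding Lemma. For the inclusion $\supseteq$, suppose $B$ satisfies $\|B\|_{\vec{b}^{*},a^{*}}\leq 1$ and $\langle B,A\rangle = \|A\|_{\vec{b},a}$. To check $B\in\partial\big(\|A\|_{\vec{b},a}\big)$ I must verify the subgradient inequality $\|C\|_{\vec{b},a}\geq \|A\|_{\vec{b},a} + \langle B, C-A\rangle$ for every $C\in\R^{N\times M\times C}$. Substituting the equality $\langle B,A\rangle = \|A\|_{\vec{b},a}$ collapses the right-hand side and reduces the claim to $\langle B,C\rangle \leq \|C\|_{\vec{b},a}$, which is immediate from the collaborative H\"older inequality together with $\|B\|_{\vec{b}^{*},a^{*}}\leq 1$.

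For the reverse inclusion $\subseteq$, let $B\in\partial\big(\|A\|_{\vec{b},a}\big)$, so that $\|C\|_{\vec{b},a}\geq \|A\|_{\vec{b},a}+\langle B,C-A\rangle$ holds for all $C$. Testing this at $C=0$ yields $\langle B,A\rangle \geq \|A\|_{\vec{b},a}$, while testing at $C=2A$ and using one-homogeneity of the norm yields $\langle B,A\rangle \leq \|A\|_{\vec{b},a}$; together these force $\langle B,A\rangle = \|A\|_{\vec{b},a}$. Feeding this equality back into the subgradient inequality reduces it to $\langle B,C\rangle \leq \|C\|_{\vec{b},a}$ for every $C$, and taking the supremum over the unit ball $\|C\|_{\vec{b},a}\leq 1$ gives $\|B\|_{\vec{b}^{*},a^{*}}\leq 1$ by the very definition of the dual norm. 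Hence $B$ belongs to the right-hand set of \eqref{eq:subdiff}.

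I do not expect a genuine obstacle here: the argument is the textbook subdifferential-of-a-norm computation, and the only real content specific to this setting is the bookkeeping of invoking Theorem \ref{thm:dualnorm} exactly once to pin down the dual norm, and using the collaborative H\"older inequality rather than its scalar counterpart. The one spot deserving a little care is the homogeneity trick with $C=0$ and $C=2A$ that extracts the equality $\langle B,A\rangle=\|A\|_{\vec{b},a}$, but this too is routine.
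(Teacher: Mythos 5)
Your proof is correct, and it proves exactly the set identity \eqref{eq:subdiff}, but by a more self-contained route than the paper. The paper's proof quotes two standard facts of convex analysis --- the characterization of the subdifferential of a positively one-homogeneous functional, $\partial J(A) = \left\{ B \: : \: \langle B,A\rangle = J(A) \text{ and } \langle B,M\rangle \le J(M)\ \forall M\right\}$, and the fact that the Legendre--Fenchel conjugate of a norm is the indicator function of the dual unit ball --- and then invokes Theorem \ref{thm:dualnorm} once to identify that dual ball with $\left\{B \: : \: \|B\|_{\vec{b}^{*},a^{*}}\leq 1\right\}$. You instead derive both inclusions directly from the subgradient inequality: in the $\supseteq$ direction you substitute the alignment identity $\langle B,A\rangle=\|A\|_{\vec{b},a}$ and close with the collaborative H\"older inequality (the paper's Lemma), and in the $\subseteq$ direction you extract that identity by testing at $C=0$ and $C=2A$, then take the supremum over the unit ball and use Theorem \ref{thm:dualnorm} to pass from the abstract dual norm to the composite one. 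In both treatments the hypothesis on $\|\cdot\|_{a^{*}}$ enters only through Theorem \ref{thm:dualnorm} (and, in your case, its H\"older corollary). What your version buys is self-containedness --- the one-homogeneous characterization that the paper cites as well known is the very thing your two inclusions establish --- at the cost of a few extra lines; the paper's version buys brevity by outsourcing that bookkeeping to the cited convex-analysis literature.
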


\begin{proof}
Since  $\|\cdot\|_{\vec{b},a}$ is positively one-homogeneous, it is well-known that its subdifferential is given by
$$
\partial \big(\|A\|_{\vec{b},a}\big) = \left\{ B\in \R^{N\times M\times C}  \:  : \: \langle B, A\rangle = \|A\|_{\vec{b},a}\:\:\text{and} \:\:\langle B, M\rangle \leq \|M\|_{\vec{b},a}\: \forall M\right\}.
$$
Recall now that the Legendre-Fenchel transform of a proper convex function is defined as $f^{*}(y):=\sup_{x} \left\{ \langle y,x\rangle - f(x)\right\}$. Furthermore, the Legendre-Fenchel transform of a norm $f(x):=\|x\|$ turns to be the indicator function on the unit ball of the dual norm:
$$
f^{*}(y)=\left\lbrace \begin{array}{ll} 0 & \text{if } \|y\|_{*}\leq 1, \\ +\infty & \text{otherwise}. \end{array}\right.
$$
We refer the reader to \cite{Hiriart1996, Rockafellar1997} for more details. Therefore, taking the supremum over all $M\in\R^{N\times M \times C}$ in $\langle B, M\rangle \leq \|M\|_{\vec{b},a}$ yields $f^{*}(B)\leq 0$. Due to Theorem \ref{thm:dualnorm}, we necessarily have that $\|B\|_{\vec{b}^{*},a^{*}}\leq 1$, which ends the proof.
\end{proof}

\section{Vectorial TV Revisited}\label{sec:VTV}

The collaborative norms defined in the previous section support most of  pre-existing definitions of TV for vector-valued images, the most relevant of which are displayed in Table \ref{tab:VTVOverview}. For nonlocal TV based models, we refer the reader to our conference paper \cite{DuranMoellerNLTV2015}.

\begin{table}[!htbp]
\centering
\footnotesize
 \begin{tabular}{|c|c|c|}
\hline
  Literature & Continuous Formulation & Collaborative TV \\
  \hline\hline
 \cite{Attouch2006} & $\displaystyle\sum_{k=1}^C \int_\Omega \sqrt{(\partial_x u_k(x))^2 + (\partial_y u_k(x))^2} \, dx$ & $\ell^{2,1,1}(der,pix, col)$\\
   \hline
    $\begin{array}{c} \text{Anisotropic}\vspace{-0.1cm} \\ \text{variant}\end{array}$ & $\displaystyle\sum_{k=1}^C\int_\Omega  \big(|\partial_x u_k(x)| + |\partial_y u_k(x)|\big) \, dx $ & $\ell^{1,1,1}(der,pix, col)$ \\
   \hline\hline
 \cite{Blomgren1998} & $\displaystyle\sqrt{\sum_{k=1}^C \left( \int_\Omega \sqrt{(\partial_x u_k(x))^2 + (\partial_y u_k(x))^2} \, dx \right)^2}$ & $\ell^{2,1,2}(der,pix,col)$ \\
  \hline
       $\begin{array}{c} \text{Anisotropic}\vspace{-0.1cm} \\ \text{variant}\end{array}$ & $\displaystyle\sqrt{\sum_{k=1}^C \left( \int_\Omega ~\big( |\partial_x u_k(x)| + |\partial_y u_k(x)|\big) \, dx \right)^2}$ & $\ell^{1,1,2}(der,pix,col)$ \\
  \hline\hline
 \cite{BressonChan2008, Sapiro1996} & $\displaystyle\int_\Omega \sqrt{\sum_{k=1}^C\Big( \big(\partial_x u_k(x)\big)^2 + \big(\partial_y u_k(x)\big)^2\Big)} \,dx $ & $\ell^{2,2,1}(der,col,pix)$ \\
 \hline
 \multirow{2}{1.84cm}{$ \ $ \vspace{0.25cm}$\begin{array}{c} \text{Anisotropic}\vspace{-0.1cm} \\ \text{variants}\end{array}$}  & $\displaystyle\int_\Omega\left( \sqrt{\sum_{k=1}^C(\partial_x u_k(x))^2} + \sqrt{\sum_{k=1}^C(\partial_y u_k(x))^2} \right) dx$ & $\ell^{2,1,1}(col,der,pix)$ \\
 \cline{2-3} 
 & $\displaystyle\int_\Omega \sqrt{\sum_{k=1}^C \big(|\partial_x u_k(x)| + |\partial_y u_k(x)|\big)^2} \, dx $ & $\ell^{1,2,1}(der,col,pix)$ \\
 \hline\hline
 \multirow{2}{1.48cm}{$\vspace{-0.25cm}$ $\begin{array}{c} \text{Strong}\vspace{-0.1cm} \\ \text{coupling}\end{array}$}  & $\displaystyle\int_\Omega \left( \max_{1\leq k\leq C} |\partial_x u_k(x)| + \max_{1\leq k\leq C}|\partial_y u_k(x)| \right)\, dx$ & $\ell^{\infty,1,1}(col,der,pix)$ \\
 \cline{2-3}
& $\displaystyle\int_\Omega \max_{1\leq k \leq C} \big(\left|\partial_x u_k(x)\right|+ \left|\partial_y u_k(x) \right|\big)\, dx$ & $\ell^{1,\infty,1}(der,col,pix)$ \\
 \hline
   \multirow{2}{1.48cm}{$\vspace{-0.25cm}$ $\begin{array}{c} \text{Isotropic}\vspace{-0.1cm} \\ \text{variants}\end{array}$} & $\displaystyle\int_\Omega \sqrt{\left(\max_{1\leq k\leq C} |\partial_x u_k(x)|\right)^2 + \left(\max_{1\leq k\leq C}|\partial_y u_k(x)|\right)^2} dx$ & $\ell^{\infty,2,1}(col,der,pix)$ \\ 
   \cline{2-3}
& $\displaystyle\int_\Omega \max_{1\leq k \leq C} \sqrt{\left(\partial_x u(x)\right)^2 + \left(\partial_y u(x) \right)^2}\, dx$ & $\ell^{2,\infty1}(der,col,pix)$ \\
\hline
$\begin{array}{c} \text{Supremum}\vspace{-0.1cm} \\ \text{variant}\end{array}$ & $\displaystyle\int_\Omega \left( \max \left\lbrace \max_{1\leq k\leq C} |\partial_x u_k(x)| , \max_{1\leq k\leq C} |\partial_y u_k(x)|\right\rbrace\right)\, dx$ & $\ell^{\infty,\infty,1}(col,der,pix)$\\
 \hline\hline
  \cite{Lefkimmiatis2013, Sapiro1996} &  $\displaystyle\int_{\Omega} \sum_{i=1}^{r}\left|\sigma_i \left(\nabla u(x)\right)\right| \,dx $ & $(S^1(col,der), \ell^{1}(pix))$ \\ \hline
$\begin{array}{c} \text{Frobenius}\vspace{-0.1cm} \\ \text{norm}\end{array}$ &  $\displaystyle\int_{\Omega} \sqrt{\sum_{i=1}^{r}\left(\sigma_i \left(\nabla u(x)\right)\right)^2} \,dx$ & $(S^2(col,der), \ell^{1}(pix))$ \\
 \hline
  \cite{Goldluecke2012, Sapiro1996} & $\displaystyle\int_{\Omega} \max_{1\leq i\leq r} \sigma_i \left(\nabla u(x)\right) \,dx$ & $(S^\infty(col,der), \ell^{1}(pix))$ \\
 \hline
\end{tabular}
\caption{Overview of local vectorial TV approaches and the way they fit in our framework.}
\label{tab:VTVOverview}
\end{table}

Approaches to defining vectorial TV regularizations can roughly be divided into two classes. The first class of methods extend the definition of the scalar case \eqref{scalarTV} to vector-valued images by introducing a suitable channel coupling. The second class of approaches emerges when considering the Riemann geometry of the image manifold. All of them are analyzed below as special cases of CTV. We formally use continuous notations even though our framework is given in the discrete setting.

\subsection{Vectorial TV Models from Channel Coupling}

The first known extension of the total variation to vector-valued images is due to Blomgren and Chan \cite{Blomgren1998}. They applied the Euclidean norm to the vector obtained from the TV contributions across channels, that is,
\begin{equation}\label{eq:VTVBlomgren}
\VTV(\mathbf{u}):= \sqrt{\sum_{k=1}^C \left(\TV(u_k)\right)^2}.
\end{equation}
From the Euler-Lagrange equation associated to \eqref{eq:VTVBlomgren}, one easily observes that there is a global weak channel coupling so that the same per-channel weight is used for all pixels. Consequently, this model favours the restoration of images for which similar noise is measured in each channel. In our framework,  the vectorial TV proposed in \cite{Blomgren1998} can be written as an $\ell^{2,1,2}(der, pix, col)$ penalty.

Probably, the most simple way to introduce multichannel TV is to sum up the contributions of each channel separately \cite{Attouch2006}, which leads to
\begin{equation}\label{eq:VTVAttouch}
\VTV(\mathbf{u}):=\sum_{k=1}^C \TV(u_k).
\end{equation}
Depending on the coupling used along the derivative dimension, one obtains the isotropic version, $\ell^{2,1,1}(der,pix,col)$, which was the one originally proposed in \cite{Attouch2006}, or the anisotropic version, $\ell^{1,1,1}(der,pix,col)$. As pointed out by Goldluecke {\it et al.} \cite{Goldluecke2012}, the drawbacks underlying this approach are color smearing and edge distortion because of the missing channel coupling. We can expect \eqref{eq:VTVAttouch} to be a good choice if there is no particular relation between channels.

In \cite{BressonChan2008}, the isotropic vectorial TV with a local $\ell^2$ channel coupling is proposed:
\begin{equation}\label{eq:VTVBresson}
\VTV(\mathbf{u}) := \int_{\Omega}\sqrt{\sum_{k=1}^C \left(  \left( \partial_x u_k(x)\right)^2 + \left(\partial_y u_k(x) \right)^2 \right)}\,dx,
\end{equation}
which is equivalent to the collaborative norm $\ell^{2,2,1}(der, col, pix)$.
Blomgren and Chan \cite{Blomgren1998} noted that this method actually favours gray-value images over colored ones, which leads to color smearing in denoising applications. 

The inclusion of additional color transforms has been proposed to improve the performance of vectorial TV methods. It is well known that RGB channels of natural images are highly correlated. In view of this, some researchers incorporated different color transforms into the definition of vectorial TV and penalized the gradient of each component in the new basis separately \cite{ChanKang2001, Condat2012}:
\begin{equation}\label{eq:VTVCondat}
\VTV(\mathbf{u}):= \sum_{k=1}^C \TV(u_k\circ \psi),
\end{equation}
where $\psi:R^C\rightarrow\R^{\widetilde{C}}$ is an orthonormal transform between color spaces. The key idea is to choose $\psi$ such that it provides effective reduction of the correlation among channels. Note that \eqref{eq:VTVCondat} is equivalent to penalize the collaborative norm $\ell^{1,1,1} (der, pix, \psi(col))$ for the anisotropic variant and $\ell^{2,1,1}(der, pix, \psi(col))$ for the isotropic variant.


\subsection{Vectorial TV Models from Riemann Geometry}

A color image can be considered as a parametric two-dimensional manifold embedded in a $C-$dimensional space \cite{DiZenzo1986}. In this framework, the metric tensor of the manifold is analogous to the structure tensor of the image, that is, $\left(\nabla\mathbf{u}\right)^{\top}\nabla\mathbf{u}$. Therefore, the eigenvectors of  $\left(\nabla\mathbf{u}\right)^{\top}\nabla\mathbf{u}$ determine the directions of maximal and minimal change and the eigenvalues, which will be respectively denoted by $\lambda^{+}$ and $\lambda^{-}$, give their rate of change.

In this setting, Sapiro \cite{Sapiro1996} introduced the following general vectorial TV model:
\begin{equation}\label{eq:VTVSapiro}
\VTV(\mathbf{u}):=\int_{\Sigma} f(\lambda^{+}, \lambda^{-})\, dx,
\end{equation}
where $\Sigma$ denotes the image manifold and $f$ is a suitable scalar-valued function. In general, \eqref{eq:VTVSapiro} is defined for differentiable functions, but only for special cases one has dual formulations that extend it to locally integrable functions. This is the case of the Frobenius norm of the gradient given by
\begin{equation}\label{eq:VTVFrobenius}
\VTV(\mathbf{u}):=\int_{\Omega} \|\nabla \mathbf{u}(x)\|_F \,dx,
\end{equation}
which follows from \eqref{eq:VTVSapiro} by considering $f(\lambda^{+}, \lambda^{-})=\sqrt{\lambda^{+}+\lambda^{-}}$. Note that \eqref{eq:VTVFrobenius} is equal to the definition of the vectorial TV given in \eqref{eq:VTVBresson} and, thus, either $\ell^{2,2,1}(der, col, pix)$ or $(S^2(col, der),\ell^1(pix))$ can be used in our framework.

Based on  the class of methods presented by Sapiro, Goldluecke {\it et al.} \cite{Goldluecke2012} showed that the natural choice for vectorial TV arising from geometric measure theory is to penalize the largest singular value of the Jacobian:
\begin{equation}\label{eq:VTVGoldluecke}
\VTV(\mathbf{u}) = \int_{\Omega} \sigma_1(\nabla\mathbf{u})\, dx,
\end{equation}
where $\sigma_1$ is the largest singular value of $\nabla\mathbf{u}$ or, equivalently, the largest eigenvalue of the structure tensor $\left(\nabla\mathbf{u}\right)^{\top}\nabla\mathbf{u}$. The regularization introduced in \cite{Goldluecke2012} is known as the spectral or Schatten$-\infty$ norm and fits in our framework as $(S^{\infty}(col, der), \ell^1(pix))$.

Recently, Holt \cite{Holt2014} interpreted \eqref{eq:VTVGoldluecke} as a special case of spatially-local coupling models. The author proposed to smoothen a differentiable function $\mathbf{u}$ by penalizing its Jacobian matrix:
\begin{equation}\label{eq:VTVHolt}
\VTV(\mathbf{u}):= \int_{\Omega} \phi\left(J_{\mathbf{u}}(x)\right) dx,
\end{equation}
where $J_{\mathbf{u}}:\Omega\rightarrow\R^{M\times C}$ denotes the Jacobian matrix of $\mathbf{u}$, so $\left[J_{\mathbf{u}}(x)\right]_{j,k}=\frac{\partial}{\partial x_j}u_k(x)$. This Jacobian framework is closely related to \eqref{eq:VTVSapiro}, since the structure tensor is given by $J_{\mathbf{u}}(x) J^{\top}_{\mathbf{u}}(x)$ at each point in the image. Note that \eqref{eq:VTVAttouch} and \eqref{eq:VTVBresson} are special cases of \eqref{eq:VTVHolt}, however, any method using spatially-global coupling such as \eqref{eq:VTVBlomgren} is not covered by Holt's approach. In \cite{Holt2014}, the author considered only functions that are written in terms of the singular values of $J_{\mathbf{u}}$. Therefore, the Frobenius norm \eqref{eq:VTVFrobenius} follows from $\phi:=\sqrt{\lambda^{+} + \lambda^{-}}$, the spectral norm \eqref{eq:VTVGoldluecke} follows from $\phi:=\sqrt{\lambda^{+}}$, and the nuclear norm \cite{Lefkimmiatis2013} follows from $\phi:=\sqrt{\lambda^{+}} + \sqrt{\lambda^{-}}$. In our framework, the regularizations arising from \eqref{eq:VTVHolt} are given by $(S^p(col, der), \ell^1(pix))$.

Another relevant approach based on Riemann geometry was pioneered in the framework by Kimmel, Malladi, and Sochen \cite{KimmelMalladi2000, Sochen1998}, who considered the graph of an image embedded in a $(C+2)-$dimensional space and proposed an area minimizing flow. This class of regularizations lead to diffusion equations with the direction given by the Beltrami flow. Roussos and Maragos \cite{Roussos2010} generalized the Beltrami flow by using higher dimensional mappings which depend on image patches:
\begin{equation}\label{eq:VTVBeltrami}
\VTV(\mathbf{u}):= \int_{\Omega} \psi\left( \lambda_r^{+} , \lambda_r^{-}\right)\, dx,
\end{equation}
where $\psi$ is increasing with respect to both arguments, and $\lambda^{+}_r$ and $\lambda^{-}_r$ are the larger and smaller eigenvalues of the structure tensor $K_r \ast \left(\nabla\mathbf{u}\right)^{\top}\nabla\mathbf{u}$, with $K_r$ being a non-negative, rotationally symmetric convolution kernel. In posterior works \cite{LefkimmiatisRoussos2015, Lefkimmiatis2013}, a deeper analysis for the particular choice $\psi(\lambda^{+}_r, \lambda^{-}_r) = \big\|\big( \sqrt{\lambda^{+}_r}, \sqrt{\lambda^{-}_r}\big)\big\|_p$ was developed. There, the {\it tensor} TV that arises when $p=1$ is renamed as the {\it nuclear norm}, which can be written as $(S^1(col, der), \ell^1(pix))$ in our case. In order to incorporate information from the vicinity of every point in the image domain as in \eqref{eq:VTVBeltrami}, we only have to incorporate the nonlocal gradient operator (see \cite{DuranMoellerNLTV2015} for more details) and penalize the resulting structure with the help of the $(S^p(col, der), \ell^1(pix))$ norm. Contrary to our approach, neither spatially-global coupling norms like \eqref{eq:VTVBlomgren} nor TV with $\ell^{\infty}$ channel coupling are covered by \eqref{eq:VTVBeltrami}.

\subsection{Other Vectorial TV Models}

For the sake of completeness, we should also mention that there exist several further TV variants, such as nonconvex regularizations based on $\ell^p$ norms with $p<1$ \cite{Krishnan2009}, and nonconvex extensions for minimizing the rank of submatrices in a TGV framework \cite{Mollenhoff2015}. Additional work has been done on improving TV with the help of Bregman iteration \cite{MoellerBrinkmann2014, Osher2005}. The study of the previously mentioned classes of methods, however, goes beyond the scope of this paper.

\section{Which Channel Coupling Disfavours Color Artifacts?}\label{sec:singVectors}

For discussing the question which CTV methods work better, we have to understand what kind of properties they try to impose on the reconstructed image. In this section, we analyze the differences between a color coupling in the $\ell^1$, $\ell^2$ and $\ell^\infty$ fashion with the help of the generalized concept of singular vectors \cite{BenningBurger2013}. The question whether a strong or a weak coupling leads to better results depends on the type of correlation in the data. Our investigation explains why the $\ell^\infty$ norm leads to the strongest relation, makes the most prior assumptions and has the greatest potential to reduce color artifacts.

Benning and Burger developed in \cite{BenningBurger2013} a generalization of the concept of singular vectors and singular values for arbitrary convex regularizations, and showed that a signal can be restored particularly well if it is a singular vector to the regularization for which is used. The authors also showed that, even in the case of noisy data, an exact reconstruction (up to a loss of contrast) is possible under certain conditions. In this sense, they provided a theoretical basis for explaining that TV regularization works particularly well for piecewise constant images.

In order to analyze the behaviour of collaborative norms for vectorial TV, we restrict ourselves to the case of image denoising modelled by anisotropic vectorial TV, namely comparing $\ell^{1,1,1}(col,der,pix)$, $\ell^{2,1,1}(col,der,pix)$ and $\ell^{\infty,1,1}(col,der,pix)$ norms. Let $D$ denote the usual local discrete gradient operator such that $D\mathbf{u}\in Y$, and consider only the energy due to the regularization, that is, $J(\mathbf{u}):=F(D\mathbf{u})= \|D\mathbf{u}\|_{\vec{b},a}$. We fix $M=2$ since only $x-$ and $y-$derivatives are considered. Furthermore, let $(x,y)$ denote any pixel of the image, with $x$ being the row and $y$ the column in the rectangular domain. We provide below the definitions of singular value and singular vector for image denoising problems.

\begin{definition}
Let $J$ be a convex functional with $\partial J(\mathbf{u})\neq \emptyset$ at every $\mathbf{u}\in\dom J$. Then, every function $\mathbf{u}_{\lambda}\in X$ satisfying $\|\mathbf{u}_{\lambda}\|=1$ and $\lambda \mathbf{u}_\lambda \in \partial J(\mathbf{u}_\lambda)$ is called a \emph{singular vector} of $J$ with corresponding \emph{singular value} $\lambda$.
\end{definition}

In the case of $J$ being one-homogeneous we even have that $\lambda \mathbf{u}_{\lambda}\in \partial J(\mathbf{u}_{\lambda})$ is equivalent to $\lambda=J(\mathbf{u}_{\lambda})$, which easily follows from Euler's identity \cite{YangWei2008}: $\langle \mathbf{v}, \mathbf{u}\rangle = J(\mathbf{u})$ for any $\mathbf{v}\in \partial J(\mathbf{u})$. Note that, for any $\mathbf{u} \in \partial J(\mathbf{u})$, one can define $\lambda := \sqrt{J(\mathbf{u})}=\|\mathbf{u}\|$ and $\mathbf{u}_\lambda := \frac{1}{\lambda}\mathbf{u}$ such that $\mathbf{u}_{\lambda}$ is a singular vector. We will therefore omit $\lambda$ and focus on the construction of $\mathbf{u}\in X$ satisfying $\mathbf{u} \in \partial  J(\mathbf{u})$. Since $J(\mathbf{u})=\|D\mathbf{u}\|_{\vec{b},a}$, the latter condition is met if $\mathbf{u}$ can be written as $\mathbf{u} = D^T \mathbf{z}$ for some $\mathbf{z}\in \partial_{D\mathbf{u}} \big(\|D\mathbf{u}\|_{\vec{b},a}\big)$
which, by applying Theorem \ref{thm:subdiffnorm}, is equivalent to
\begin{equation} \label{eq:subdiff1}
\langle \mathbf{z}, D\mathbf{u} \rangle = \|D \mathbf{u}\|_{\vec{b},a} \:\: \text{and}\:\: \|\mathbf{z}\|_{\vec{b}^{*}, a^{*}}\leq 1.
\end{equation}
 
In what follows, all mathematical proofs have been moved to Appendix \ref{app:singVectors}. We aim at finding some $\mathbf{z}\in Y$ satisfying \eqref{eq:subdiff1} to determine $\mathbf{u}=D^T\mathbf{z} \in \partial J(\mathbf{u})$. Motivated by \cite{BenningBurger2013}, it makes sense to consider piecewise linear funtions whose changes happen only at  $\{-1,+1\}$. More specifically, we choose
\begin{equation} \label{eq:zEquation}
z^1_k(x,y) = c_k^1 l_k^1(x)\:\: \text{and}\:\: z^2_k(x,y) = c_k^2 l_k^2(y),\quad \forall k\in\{1,\ldots, C\},
\end{equation}
for some $c_k^r \in \mathbb{R}$, and $l_k^r$ having the following properties: $|l_k^r(x)| \leq 1$ for all $x$, $l_k^r$ piecewise linear, and the linearity changing at $x$ only if $|l_k^r(x)| =1$. The details for why these functions have to look like this are left for the proof in Appendix \ref{app:singVectors}. We simply illustrate examples for $z^1_k$ and $z^2_k$ in Figure \ref{fig:zExamples}. It is remarkable that singular vectors to the CTV methods under consideration can all be written in the form of \eqref{eq:zEquation} and only differ in two aspects. First, the $\ell^{1}$ case allows different $l^r_k$ for different color channels, while the $\ell^{2}$ and $\ell^{\infty}$ norms do not. Second, the coefficients $c^r_k$ are different for each regularization.

\begin{figure}[!htpb]
\centering
\begin{tabular}{cc}
   \includegraphics[scale=0.35]{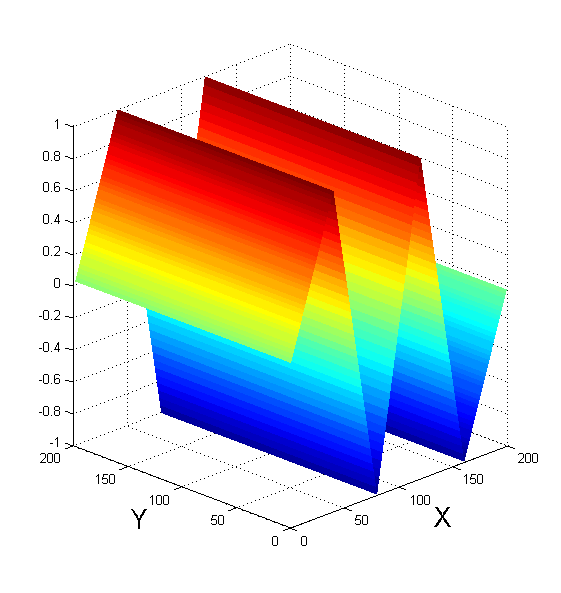} &
   \includegraphics[scale=0.35]{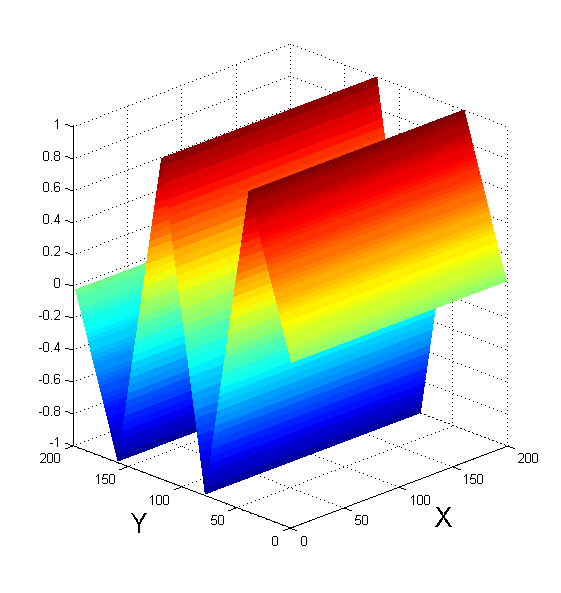} \\
   Illustration of $z_k^1$ & Illustration of $z_k^2$ \\ 
\end{tabular}
\caption{Examples for functions $z_k^1$ and $z_k^2$. As we can see, they are constant in one direction, piecewise linear in the other, and the points where the piecewise linearity changes are at $\{-1,+1\}$.}
\label{fig:zExamples}
\end{figure}

Table \ref{tab:singularVectors} shows the precise construction of singular vectors. The results displayed there meet what we would expect based on the regularization behavior of the different methods. For the $\ell^{1}$ case, each channel can have its own $l^r_k$ such that jumps can be at different positions in the different channels. Since no relation on the positions of the jumps in different channels is imposed, we can expect the $\ell^{1,1,1}$ norm to not suppress color artifacts and not change the position of the edges. This is a theoretical explanation for what we saw in Figure \ref{fig:SyntheticEdge}. Both $\ell^{2}$ and $\ell^{\infty}$ couplings require the $l^r_k$ to be independent of $k$, that is, jumps in different color channels are encouraged to be at the same position. The difference between them is that the size of the jumps, corresponding to the coefficients $c^r_k$, are allowed to be arbitrary in the $\ell^{2}$ case, while they have to be either zero or of equal magnitude in the $\ell^{\infty}$ norm. Equal magnitude of the jumps in all three color channels leads to a grayscale image. This tells us that the regularization based on $\ell^{\infty,1,1}$, opposed to $\ell^{2,1,1}$, encourages jumps that occur in all three channels to only change the intensity but not the color of the image. Looking at the results in Figure \ref{fig:SyntheticEdge}, we can see again that the singular vector analysis confirms exactly what we observed in practice. 

\begin{table}[!htpb]
\footnotesize
\begin{center}
\newcolumntype{C}[1]{>{\centering\let\newline\\\arraybackslash\hspace{0pt}}m{#1}}
\renewcommand{\arraystretch}{2}
\begin{tabular}{|c|c|C{4.5cm}|}
\hline
Regularization & Singular Vectors & Properties \\
\hline\hline
$\|D\mathbf{u}\|_{1,1,1}$  & $u_k(x,y) = -c_k^1 ~ D_x l_k^1(x) - c_k^2 ~ D_y l_k^2(y)$ & $c^r_k \in \{0, \pm 1\}$\\
\hline

$\|D\mathbf{u}\|_{2,1,1}$  & $u_k(x,y) = -c_k^1 ~ D_xl^1(x) - c_k^2 ~ D_yl^2(y)$ & The piecewise linear functions $l^r$ do not depend on $k$, $\|c^r\|_2 =1$ \\
\hline

$\|D\mathbf{u}\|_{\infty,1,1}$  & $u_k(x,y) = -c_k^1 ~ D_xl^1(x) - c_k^2 ~ D_yl^2(y)$ & The piecewise linear functions $l^r$ do not depend on $k$, $c^r_k \in \{0, \pm 1\}$  \\
\hline
\end{tabular}
\end{center}
\caption{Comparison of singular vectors for coupling the color channels in an $\ell^1$, $\ell^2$ and $\ell^\infty$ fashion. In this setting, $D_x$ and $D_y$ denote the differences in the horizontal and vertical directions, respectively.}
\label{tab:singularVectors}
\end{table}

For illustration purposes, Figure \ref{fig:singularVectors} shows some examples of singular vectors. Depending on the type of jumps in the data, that is, jumps in different color channels being independent of one another, jumps being at the same position but changing the color, or jumps being at the same position and likely not changing the color, the $\ell^{1,1,1}$, the $\ell^{2,1,1}$ or the $\ell^{\infty,1,1}$ norms will show a superior performance. Interestingly, our numerical results in Section \ref{sec:results} indicate that a suppression of color artifacts by using $\ell^{\infty,1,1}$ is more important than making weaker and more general assumptions on the types of jumps in natural images. 

\begin{figure}[!htpb]
\centering
\begin{tabular}{p{3.92cm} p{3.92cm} p{3.92cm}}
 	\includegraphics[scale=0.56]{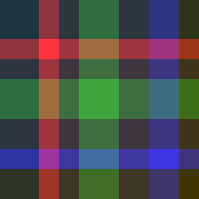} &
 	\includegraphics[scale=0.56]{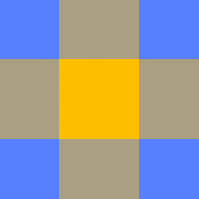} &
 	\includegraphics[scale=0.56]{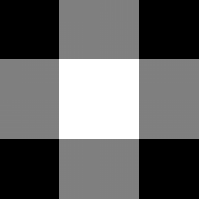} \\
        Image generated from $z_k^r$ which are different for all $k$, resulting in an image where all three color channels have jumps at different positions. Up to a scaling, this is a singular vector to $\ell^{1,1,1}$ but not to $\ell^{2,1,1}$ or $\ell^{\infty,1,1}$. & 
        Image for which the jumps are at the same positions (all $z_k^r$ are equal), but at which the coefficients $c^r_k$ are $[0.54, 0.2, -0.82]$. Up to a scaling, this is a singular vector to $\ell^{2,1,1}$ but not to $\ell^{1,1,1}$  or $\ell^{\infty,1,1}$. \vspace{0.2cm} &
        Image with jumps at the same positions and with all coefficients being equal to one. Up to a scaling, this is a singular vector to $\ell^{1,1,1}$, $\ell^{2,1,1}$, and $\ell^{\infty,1,1}$. \\ 
 	\includegraphics[scale=0.56]{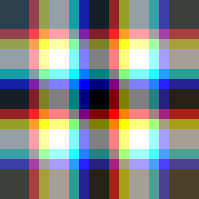} &
 	\includegraphics[scale=0.56]{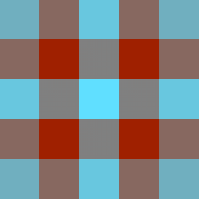} &
 	\includegraphics[scale=0.56]{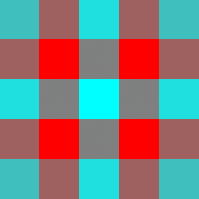} \\
        Image with four instead of two jumps in each channel and in each direction. Again, the edge sets of the different color channels are different. Up to a scaling, this is a singular vector to $\ell^{1,1,1}$ but not to $\ell^{2,1,1}$ or $\ell^{\infty,1,1}$. &
        Image with four jumps in each direction, jumps in the different color channels being aligned, and color channel coefficients being $[0.2, -0.59, -0.78]$. Up to a scaling, this is a singular vector to $\ell^{2,1,1}$ but not to $\ell^{1,1,1}$ or $\ell^{\infty,1,1}$. &
         Image with four jumps in each direction, jumps in the different color channels being aligned, and color channel coefficients being $[1,-1,-1]$. Up to a scaling, this is a singular vector to $\ell^{1,1,1}$, $\ell^{2,1,1}$, and $\ell^{\infty,1,1}$. \\ 
\end{tabular}
\caption{Illustrating singular vectors to vectorial TV regularization using collaborative norms. While the true singular vectors have zero mean, the above images have been rescaled to lie between zero and one for visualization purposes.}
\label{fig:singularVectors}
\end{figure}

\section{Numerical Minimization}\label{sec:numerics}

It is remarkable that all variants of different norms imposed on the three-dimensional structure can be solved very efficiently by using splitting techniques. The only thing that changes when changing the regularization is the proximity operator, which is discussed below.

Recall that the proximity operator of a proper, convex, and l.s.c. function $f$ is
\begin{equation}\label{def:proxop}
\prox_{\tau f} (x) = \arg \min_{y} \left\lbrace \dfrac{1}{2} \|y-x\|^2 + \tau f(y)\right\rbrace,
\end{equation}
where $\alpha>0$ is a scalar parameter. Furthermore, {\it Moreau's identity} connects the proximity operator and its Legendre-Fenchel transform in the following way: 
\begin{equation}\label{eq:Moreauidentity}
x = \prox_{\tau f}(x) + \tau \prox_{\frac{1}{\tau}f^{*}}\left(\dfrac{x}{\tau}\right).
\end{equation}

\subsection{Proximal Map of CTV Regularizers}
Theorem \ref{thm:subdiffnorm} allows us to write the optimality condition to \eqref{def:proxop}  as 
\begin{equation}\label{eq:proxformula}
\widehat{A}=\prox_{\tau\|\cdot\|_{\vec{b},a}}(A)\: \Leftrightarrow\: \|A-\widehat{A}\|_{\vec{b}^{*}, a^{*}}\leq \tau\:\: \text{and} \:\:\langle A, \widehat{A}\rangle = \tau\|\widehat{A}\|_{\vec{b},a} + \|\widehat{A}\|^2_2,
\end{equation}
for any $A\in Y$. In this setting, $\|\cdot\|_2$ denotes the Euclidean norm applied to the vectorial structure obtained by rearranging the original three-dimensional matrix into a vector. When it is not possible to obtain an explicit solution from \eqref{eq:proxformula}, one usually invokes duality through Moureau's identity \eqref{eq:Moreauidentity}:
\begin{equation}\label{eq:proxformuladual}
\widehat{A} = A - \proj_{\frac{1}{\tau}\|\cdot\|_{\vec{b}^{*},a^{*}}\leq 1}(A),
\end{equation}
where $\proj_{\frac{1}{\tau}\|\cdot\|_{\vec{b}^{*},a^{*}}\leq 1}$ denotes the projection operator onto the dual ball of radius $\tau$.

\begin{example}
We now display the proximal mappings of the regularizations based on $\ell^{p,q,r}$ norms which will be used in the experimental section.

\begin{itemize}
\item The proximity operator of the $\ell^{1,1,1}$ norm decouples in all variables and each problem just contains an absolute value penalty:
$$
 \left(\prox_{\tau\|\cdot\|_{1,1,1}}(A) \right)_{i,j,k} = \max \left( |A_{i,j,k}|-\tau, 0\right) \sign\big(A_{i,j,k}\big).
$$

\item By a short computation, one obtains the proximal mapping of the $\ell^{2,1,1}$ norm as the (generalized) shrinkage:
$$
\left(\prox_{\tau\|\cdot\|_{2,1,1}}(A) \right)_{i,j,k} = \max \left( \|A_{i,j,:}\|_2-\tau, 0\right)\dfrac{A_{i,j,k}}{\|A_{i,j,:}\|_2},
$$
as well the proximal mapping of the $\ell^{2,2,1}$ norm:
$$
\left(\prox_{\tau\|\cdot\|_{2,2,1}}(A) \right)_{i,j,k} = \max \left( \|A_{i,:,:}\|_{2,2}-\tau, 0\right)\dfrac{A_{i,j,k}}{\|A_{i,:,:}\|_{2,2}}.
$$

\item Whenever the supremum norm is involved, it is more convenient to use \eqref{eq:proxformuladual} to express the proximity operator by the proximity operator of its dual. For the $\ell^{\infty,1,1}$ norm, one has 
$$
\left(\prox_{\tau\|\cdot\|_{\infty,1,1}}(A) \right)_{i,j,k} = A_{i,j,k} -\tau \sign\big(A_{i,j,k}\big)\left(\proj_{\|\cdot\|_1\leq 1} \left(\dfrac{1}{\tau}|A_{i,j,:}| \right)\right)_{i,j,k},
$$
where $|A_{i,j,:}|$ denotes the component-wise absolute value of vector $A_{i,j:}$ and $\proj_{\|\cdot\|_1\leq 1}$, the projection onto the unit $\ell^1$ norm ball. Similarly, we obtain the proximity operator of the $\ell^{\infty,\infty,1}$ norm as
$$
\left(\prox_{\tau\|\cdot\|_{\infty,\infty,1}}(A) \right)_{i,j,k} = A_{i,j,k} -\tau \sign\left(A_{i,j,k}\right)\hspace{-0.05cm}\left(\proj_{\|\cdot\|_{1,1}\leq 1} \left(\dfrac{1}{\tau}|A_{i,:,:}| \right)\right)_{i,j,k},
$$
with $\proj_{\|\cdot\|_{1,1}\leq 1}$ denoting the projection operator onto the unit $\ell^{1,1}$ ball. Finally, the proximity operator of the $\ell^{\infty,2,1}$ norm is
$$
\left(\prox_{\tau\|\cdot\|_{\infty,2,1}}(A) \right)_{i,j,k} = A_{i,j,k} -\tau \sign\big(A_{i,j,k}\big)\left(\proj_{\|\cdot\|_{1,2}\leq 1} \left(\dfrac{1}{\tau}|A_{i,:,:}| \right)\right)_{i,j,k},
$$
where $\proj_{\|\cdot\|_{1,2}\leq 1}$ denotes the projection operator onto the unit $\ell^{1,2}$ ball. 
\end{itemize}
\end{example}

Let us now discuss the proximity operator of the $\ell^{2,\infty,1}$ norm. For that purpose, we require a previous result that states the chain rule for subdifferentials. The proof is outlined in Appendix \ref{app:ThmSubdif}.

\begin{theorem}[Chain Rule of Subdifferentials]\label{th:chainRule}
Let $f:\R^n\rightarrow \R^m$ be a vector-valued function such that $f_j:\R^n \rightarrow \R$ is proper and convex for each $j\in\{1,\ldots, m\}$. Let $g:\R^m \rightarrow \Rinf$ be convex, proper and nondecreasing in each argument. Then,
\begin{equation}\label{eq:chainRule1}
\partial(g \circ f)(x_0) \supseteq \left\lbrace \xi \in \R^n \: : \: \xi = \sum_{j=1}^m q_j v_j, \begin{array}{c} q=(q_1, \ldots, q_m)\in\partial g(f(x_0)), \vspace{0.1cm}\\ v_j \in \partial f_j(x_0), \,\,\forall 1 \leq j \leq m\end{array}\right\rbrace
\end{equation}
at any $x_0\in \dom (g\circ f)$. If further $x_0\in \inter \dom (g\circ f)$ and all $f_j$ are locally l.s.c., then the inclusion in \eqref{eq:chainRule1} becomes an equality.
\end{theorem}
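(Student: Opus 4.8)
The plan is to handle the two assertions separately: first the inclusion \eqref{eq:chainRule1}, which holds with no qualification, and then its promotion to equality under $x_0\in\inter\dom(g\circ f)$. Two preliminary facts organize everything. First, $g\circ f$ is convex: for $\lambda\in[0,1]$ the coordinatewise bounds $f_j(\lambda x+(1-\lambda)y)\leq \lambda f_j(x)+(1-\lambda)f_j(y)$, followed by monotonicity and then convexity of $g$, give convexity of $g\circ f$, so that $\partial(g\circ f)$ is meaningful. Second, monotonicity forces every $q\in\partial g(y)$ to have nonnegative entries: testing the subgradient inequality for $g$ at the point $y-t e_j$ with $t>0$ and using $g(y-te_j)\leq g(y)$ gives $q_j\geq 0$. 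This sign information is precisely what lets the combination $\sum_j q_j v_j$ interact correctly with the subgradient inequalities of the $f_j$.

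For the inclusion, I would take $q\in\partial g(f(x_0))$ and $v_j\in\partial f_j(x_0)$ and chain the defining inequalities. For arbitrary $x$, the subgradient inequality for $g$ at $f(x_0)$ yields the first estimate below, while multiplying each $f_j(x)\geq f_j(x_0)+\langle v_j,x-x_0\rangle$ by $q_j\geq 0$ (nonnegative since $g$ is nondecreasing) and summing yields the second:
\[
g(f(x))\geq g(f(x_0))+\langle q,f(x)-f(x_0)\rangle\geq g(f(x_0))+\Big\langle \sum_j q_j v_j,\,x-x_0\Big\rangle,
\]
which is exactly the assertion $\sum_j q_j v_j\in\partial(g\circ f)(x_0)$. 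This direction is routine and needs no qualification.

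The reverse inclusion is the crux, and here I would argue through support functions. Write $h:=g\circ f$ and let $S$ denote the right-hand set of \eqref{eq:chainRule1}; the goal is to show $\partial h(x_0)=S$, for which it suffices to check that both are nonempty closed convex sets with the same support function. The support function of $\partial h(x_0)$ is the directional derivative $h'(x_0;\cdot)$ by the max-formula of convex analysis. The key lemma I would establish is the chain rule for directional derivatives, $h'(x_0;d)=g'\big(f(x_0);f'(x_0;d)\big)$: the lower bound comes from the monotonicity of the difference quotients of convex functions, so that $f(x_0+td)\geq f(x_0)+t\,f'(x_0;d)$ coordinatewise and the monotonicity of $g$ applies; the upper bound uses that the qualification makes $g$ finite and locally Lipschitz around $f(x_0)$, so the increments $\frac{1}{t}\big(f(x_0+td)-f(x_0)\big)$ may be replaced by their limit $f'(x_0;d)$ with a vanishing error. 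Substituting the max-formulas $g'(f(x_0);w)=\sup_{q\in\partial g(f(x_0))}\langle q,w\rangle$ and $f_j'(x_0;d)=\sup_{v_j\in\partial f_j(x_0)}\langle v_j,d\rangle$, and pulling the nonnegative weights $q_j$ inside the suprema, identifies $g'(f(x_0);f'(x_0;d))$ with $\sup_{\xi\in S}\langle\xi,d\rangle$, the support function of $S$.

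It then remains to check that $S$ is closed and convex, so that it is determined by its support function. Convexity follows by an explicit reparametrization: given $\sum_j q_j v_j$ and $\sum_j q_j' v_j'$, set $\bar q:=\lambda q+(1-\lambda)q'\in\partial g(f(x_0))$ and, on the coordinates with $\bar q_j>0$, take $\bar v_j$ to be the induced convex combination of $v_j$ and $v_j'$, which lies in $\partial f_j(x_0)$; this recovers $\lambda\sum_j q_j v_j+(1-\lambda)\sum_j q_j' v_j'$ as an element of $S$. Closedness I would get from compactness: under the qualification $\partial g(f(x_0))$ and each $\partial f_j(x_0)$ are nonempty compact convex, and $S$ is the continuous image of a compact set under $(q,v_1,\dots,v_m)\mapsto\sum_j q_j v_j$. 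With $\partial h(x_0)$ and $S$ both nonempty, compact, convex, and carrying the support function $h'(x_0;\cdot)$, they coincide. I expect the main obstacle to be the directional-derivative chain rule --- specifically its upper bound, where the monotonicity of $g$ points the ``wrong way'' against the convex difference quotients and one cannot avoid controlling $g$ near $f(x_0)$ directly. The genuine work of the qualification is to transfer the interiority of $x_0$ in $\dom(g\circ f)$ into good local behavior of $g$ at $f(x_0)$ --- finiteness, local Lipschitz continuity, and compactness of $\partial g(f(x_0))$ --- and verifying this transfer is the second delicate point.
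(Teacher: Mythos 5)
Your proof of the inclusion is correct and is essentially the paper's own argument, with one improvement: you make explicit that monotonicity of $g$ forces every $q\in\partial g(f(x_0))$ to have nonnegative entries, which is exactly what legitimizes multiplying the subgradient inequalities of the $f_j$ by $q_j$ and summing; the paper uses this fact silently. For the equality part you take a genuinely different route. The paper invokes variational-analysis machinery from Rockafellar--Wets (regular and horizon subdifferentials, strict continuity, and their Theorem 10.49), whereas you propose a self-contained convex-analysis argument: identify the support function of $\partial(g\circ f)(x_0)$ with the directional derivative via the max-formula, prove the chain rule $h'(x_0;d)=g'\big(f(x_0);f'(x_0;d)\big)$, expand both $g'$ and $f_j'$ by their max-formulas, and conclude by comparing support functions of compact convex sets. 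Each individual step you outline (lower bound from monotone difference quotients plus monotonicity of $g$; upper bound from local Lipschitz continuity of $g$; convexity of the right-hand set $S$ by reparametrization; compactness of $S$; equality of compact convex sets with equal support functions) is sound, and together they form a complete, more elementary proof than the paper's --- \emph{provided} $f(x_0)\in\inter\dom g$.

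That proviso is the genuine gap, and it cannot be closed. You defer as ``the second delicate point'' the transfer of the hypothesis $x_0\in\inter\dom(g\circ f)$ into good behavior of $g$ at $f(x_0)$ (finiteness nearby, local Lipschitz continuity, nonemptiness and compactness of $\partial g(f(x_0))$). This transfer is false, and in fact the asserted equality itself fails under the stated hypotheses. Take $n=1$, $m=2$, $f(x)=(x,0)$ and
\[
g(y_1,y_2)=\left\{\begin{array}{ll} y_1-\sqrt{-y_2} & \text{if } y_2\leq 0,\\ +\infty & \text{otherwise},\end{array}\right.
\]
which is proper, convex, l.s.c.\ and nondecreasing in each argument. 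Then $(g\circ f)(x)=x$, so every $x_0$ lies in $\inter\dom(g\circ f)$ and $\partial(g\circ f)(x_0)=\{1\}$; yet a subgradient $(q_1,q_2)$ of $g$ at $f(x_0)=(x_0,0)$ would need $q_2\geq 1/\sqrt{-y_2}$ for all $y_2<0$, so $\partial g(f(x_0))=\emptyset$ and the right-hand set of \eqref{eq:chainRule1} is empty: equality fails. In this example $g$ is not finite near $f(x_0)$, not locally Lipschitz there, and the max-formula for $g$ collapses, so every ingredient you planned to extract from the qualification is unavailable (the $\partial f_j(x_0)$, by contrast, are unconditionally nonempty and compact since the $f_j$ are finite-valued). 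You are in good company: the paper's own proof makes the same illegitimate transfer when it deduces $\partial^{\infty}g(f(x_0))=\{0\}$ ``due to $x_0\in\inter\dom(g\circ f)$''; in the example above $\partial^{\infty}g(f(x_0))=\{0\}\times[0,\infty)$, so the constraint qualification of \cite[Theorem 10.49]{Rockafellar1998} fails as well. The correct repair, for your argument and for the theorem, is to strengthen the hypothesis to $f(x_0)\in\inter\dom g$ --- e.g.\ $g$ finite-valued, as it is in every application in the paper, where $g$ is a norm; under that hypothesis your outline assembles into a complete and correct proof.
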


Although it seems to be difficult to compute the proximal mapping of the $\ell^{2,\infty,1}$ norm at first glance,  Theorem \ref{th:chainRule} leads to a particularly interesting observation regarding functionals having an $\ell^2$ norm as an inner regularization. The following result will provide us the key for computing this class of proximal operators.

\begin{theorem}
\label{thm:compositionL2Prox}
Let $f:\R^{n\times m} \rightarrow \R^n$ be defined  as
$$
f_i(u) := \|u_{i,:}\|_2 = \sqrt{\sum_{j=1}^m u_{i,j}^2}, \quad \forall i\in\{1,\ldots, n\}, \quad \forall u \in \R^{n\times m},
$$
and let $g: \R^n \rightarrow \R$ be any proper, convex function that is nondecreasing in each argument. Then, the proximity operator of $g\circ f$ is
$$
\left(\prox_{\tau(g \circ f)}(u)\right)_{i,j} = \frac{u_{i,j}}{\|u_{i,:}\|_2} ~ \max(\|u_{i,:}\|_2 - \tau v_i,0), \quad \forall i,j,
$$
where $v_i$, $i\in\{1,\ldots, n\}$, are the components of the vector $v\in\R^{n}$ given by
\begin{equation} \label{eq:sFormula}
v = \arg\min_{w\in\R^n} \dfrac{1}{2} \left\| w - \dfrac{1}{\tau}f(u) \right\|^2_2 + \frac{1}{\tau} g^{*}(w).
\end{equation}
\end{theorem}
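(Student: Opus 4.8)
The plan is to combine the first-order optimality characterization of the proximity operator \eqref{def:proxop} with the chain rule for subdifferentials (Theorem \ref{th:chainRule}), and then to collapse the matrix-valued problem into a nonnegative shrinkage of the row norms. Denote the output by $\hat u := \prox_{\tau(g\circ f)}(u)$ and write $\rho := f(u)$, $s := f(\hat u)$ for the vectors of row norms. First I would use that $\hat u$ solves \eqref{def:proxop} if and only if $u - \hat u \in \tau\,\partial(g\circ f)(\hat u)$. Since $g$ is proper, convex and nondecreasing and each $f_i(\cdot)=\|\cdot_{i,:}\|_2$ is convex and finite on all of $\R^{n\times m}$, we have $\dom(g\circ f)=\R^{n\times m}$, every point is interior, and all $f_i$ are continuous; hence Theorem \ref{th:chainRule} applies with equality and gives $\partial(g\circ f)(\hat u)=\{\sum_i q_i w^i : q\in\partial g(f(\hat u)),\ w^i\in\partial f_i(\hat u)\}$.

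Next I would exploit the row structure. Because $f_i$ depends only on the $i$-th row, each $w^i$ is supported on that row, with $w^i_{i,:}=\hat u_{i,:}/\|\hat u_{i,:}\|_2$ when $\hat u_{i,:}\neq 0$ and $\|w^i_{i,:}\|_2\leq 1$ otherwise. The optimality condition then decouples into $u_{i,:}-\hat u_{i,:}=\tau q_i\, w^i_{i,:}$ for every $i$. From this I read off that $\hat u_{i,:}$ must be a nonnegative multiple of $u_{i,:}$, and taking norms (using $q_i\geq 0$, which holds because $g$ is nondecreasing) yields the row-wise shrinkage $\hat u_{i,:}=(u_{i,:}/\|u_{i,:}\|_2)\,\max(\|u_{i,:}\|_2-\tau q_i,0)$. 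This is exactly the claimed formula with $q_i$ in place of $v_i$, and in particular $s=\max(\rho-\tau q,0)$ with $q\in\partial g(s)$. Equivalently, minimizing the quadratic term over the direction of each row at fixed norm shows that $s$ solves the reduced scalar problem $\min_{s\geq 0}\tfrac12\|s-\rho\|_2^2+\tau g(s)$, so that only the magnitudes remain to be determined.

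Finally I would identify the multiplier $q$ with the explicitly defined vector $v$. The plan here is to relate the reduced problem to the unconstrained map $\prox_{\tau g}(\rho)$ through Moreau's identity \eqref{eq:Moreauidentity}, which yields $\prox_{\tau g}(\rho)=\rho-\tau\,\prox_{\frac1\tau g^*}(\rho/\tau)=\rho-\tau v$, and then to argue that the nonnegativity of the row norms turns the constraint $s\geq 0$ into the truncation $\max(\cdot,0)$, so that $s=\max(\rho-\tau v,0)$. By Fenchel duality $v$ satisfies $v\in\partial g(\rho-\tau v)$, and combining this with the complementarity between the index sets $\{s_i=0\}$ and $\{\rho_i\leq\tau q_i\}$ one matches $\max(\rho-\tau q,0)$ with $\max(\rho-\tau v,0)$ and recovers the stated expression.

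I expect this last identification to be the main obstacle, precisely because of the nonnegativity of the row norms: one must show that the truncation produced by the constraint $s\geq 0$ in the reduced problem is consistently reproduced by the \emph{unconstrained} dual prox of $g^*$, i.e. that projecting the unconstrained solution onto the nonnegative orthant coincides with solving the constrained problem. This step is transparent for the regularizers actually used in the paper, where $g$ is positively one-homogeneous so that $g^*$ is the indicator of the dual-norm ball and $v$ is simply a projection; the argument then rests on the geometry of that projection together with the monotonicity $q\geq 0$, and care should be taken with the degenerate rows $u_{i,:}=0$, which force $\hat u_{i,:}=0$ directly.
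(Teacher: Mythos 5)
You take the necessity route: the optimality condition for \eqref{def:proxop}, the equality case of Theorem \ref{th:chainRule}, row-wise decoupling, and reduction to the constrained problem $\min_{w\geq 0}\frac{1}{2}\|w-\rho\|_2^2+\tau g(w)$ with $\rho:=f(u)$; the paper instead \emph{verifies} sufficiency of the explicit candidate, which needs only the easy inclusion half of the chain rule. Up to the characterization $s_i=\max(\rho_i-\tau q_i,0)$ with $q\in\partial g(s)$ for $s=f(\widehat u)$, your argument is sound. But the step you yourself flag as the main obstacle is a genuine gap, and it cannot be closed: under the stated hypotheses the identification of the multiplier $q$ with the vector $v$ of \eqref{eq:sFormula} is false. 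The conceptual reason is that $g\circ f$ depends only on the restriction of $g$ to the nonnegative orthant, whereas $v$ is built from $g^{*}$, i.e.\ from the values of $g$ on all of $\R^n$. Concretely, take $n=2$, $m=1$, $\tau=1$, $u=(1/4,\,1/2)^{\top}$, and compare $g_1(w)=\max\{2w_1,\,3w_1+w_2\}$ with $g_2(w)=3w_1+w_2$: both are finite, convex, nondecreasing in each argument, and even positively one-homogeneous, and they coincide on $\R^2_{\geq 0}$, so $g_1\circ f=g_2\circ f=3|\cdot_1|+|\cdot_2|$ and both compositions have the same proximal map, namely weighted soft-thresholding, which at $u$ gives $(0,0)$. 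For $g_2$ one has $g_2^{*}=\iota_{\{(3,1)\}}$, so $v=(3,1)$ and the formula correctly returns $(0,0)$; but $g_1^{*}$ is the indicator of $\co\{(2,0),(3,1)\}$, the projection of $\rho=(1/4,1/2)$ onto that segment is $v=(2,0)$, and the formula returns $(0,1/2)\neq(0,0)$. This also refutes your fallback that one-homogeneity makes the identification transparent: $g_1$ is one-homogeneous, and the failure occurs precisely through the complementarity mismatch you were worried about.

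Your instinct about where the difficulty sits is exactly right, and it is also the soft spot of the paper's own proof: there the issue is buried in the claim that, $g$ being nondecreasing, its proximity operator is ``nonnegativity preserving'', so that $t:=f(u)-\tau v\geq 0$ and the $\max$ never truncates. That claim is false in general (already $g(w)=3w_1+w_2$ gives $\prox_{\tau g}(\rho)=\rho-\tau(3,1)$; in the example above $t=(-7/4,\,1/2)$), and once truncation occurs the paper's key identity $t_i=\|\widehat u_{i,:}\|_2$, hence $v\in\partial g(f(\widehat u))$, breaks down, exactly as the counterexample shows. Both the paper's verification and your derivation become correct under the additional hypothesis that $\prox_{\tau g}$ maps the nonnegative orthant into itself; a convenient sufficient condition is invariance under componentwise sign changes, $g(w)=g(|w|)$, since then for $\rho\geq 0$ replacing a negative component of a candidate minimizer by its absolute value cannot increase the objective, so $t\geq 0$, the constrained and unconstrained reduced problems have the same solution, and your complementarity matching goes through. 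Every outer function $g$ actually used in the paper is such a norm of componentwise absolute values (with $v$ a projection onto the dual ball), so the paper's applications are unaffected; but this is an extra assumption, not a consequence of monotonicity, and neither your sketch nor the paper's argument proves the theorem at the stated level of generality.
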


\begin{proof}
The optimality condition arising from \eqref{def:proxop} yields
\begin{equation}\label{eq:sol}
 \prox_{\tau (g\circ f)}(u)= u - \tau \xi, \:\:\: \text{for some}\:\:  \xi \in \partial (g \circ f)(\widehat{u}).
\end{equation}
Let us define $\widehat{u}$ as
\begin{equation}\label{eq:compositionL2Prox}
\widehat{u}_{i,j}= \frac{u_{i,j}}{\|u_{i,:}\|_2} ~ \max(\|u_{i,:}\|_2 - \tau v_i,0), \quad \forall i,j,
\end{equation}
where $v_i$, $i\in\{1,\ldots, n\}$, are the components of the vector $v\in\R^{n}$ solving \eqref{eq:sFormula}. We aim at proving that $\widehat{u}$ satisfies \eqref{eq:sol}. For that purpose, note that $\widehat{u}$ in \eqref{eq:compositionL2Prox} can be stated as the solution of a weighted $\ell^{1,2}$ regularized problem:
\begin{equation}\label{eq:sol1}
\widehat{u}_{i,:} = u_{i,:} - \tau v_i  z_i, \:\:\: \text{for some}\:\:  z_i \in \partial \left( \|\widehat{u}_{i,:}\|_2\right), \quad \forall i\in\{1, \ldots, n\}.
\end{equation}
In view of \eqref{eq:sol} and \eqref{eq:sol1}, we only need to prove that the matrix with columns $v_iz_i$ is in $\partial (g \circ f)(\widehat{u})$. Due to the chain rule stated in Theorem \ref{th:chainRule}, this follows whenever $z_i \in \partial f_i(\widehat{u})$, which is true by definition of $z_i$, and $v \in \partial g(f(\widehat{u}))$. In order to prove the latter, note that \eqref{eq:sFormula} yields the optimality condition 
\begin{equation}
\label{eq:tDefinition}
t = f(u) - \tau v,
\end{equation} 
for some $t \in \partial g^*(v)$ or, equivalently, $v \in \partial g(t)$. It is thus sufficient to show that $t_i = \|\widehat{u}_{i,:}\|_2\equiv f_i(\widehat{u})$. From \eqref{eq:tDefinition}, we see that $t_i = \|u_{i,:}\|_2 - \tau v_i$ for each $i\in\{1,\ldots, n\}$. Since $g$ is nondecreasing in each argument, then its proximity operator is nonnegativity preserving and so $t_i \geq 0$. Consequently, if $\|\widehat{u}_{i,:}\|_2=0$, then \eqref{eq:compositionL2Prox} implies $ \|u_{i,:}\|_2 \leq \tau v_i$ and, thus,  $t_i=0$. Otherwise, it follows that
$$
\|\widehat{u}_{i,:}\|_2 = \left\| \dfrac{u_{i,:}}{\|u_{i,:}\|_2} (\|u_{i,:}\|_2 - \tau v_i) \right\|_2 = \big|\|u_{i,:}\|_2 - \tau v_i\big| = t_i, $$
which completes the proof.
\end{proof}

\begin{example}
By Theorem \ref{thm:compositionL2Prox}, the proximity operator of the $\ell^{2,\infty,1}$ norm is
$$
\left(\prox_{\tau\|\cdot\|_{2,\infty,1}}(A)\right)_{i,j,k} =\dfrac{A_{i,j,k}}{\|A_{i,j,:}\|_2} ~ \max\left(\|A_{i,j,:}\|_2 - \tau v_{i,j},0\right), \\
$$
where
$$
v_{i,j} = \left(\proj_{\| \cdot \|_1 \leq 1}\left(\dfrac{1}{\tau}\big(\|A_{i,j,:}\|_2\big)_j\right)\right)_{i,j}.
$$
In the above formula, $(\|A_{i,j,:}\|_2)_j$ denotes the vector we obtain by stacking $\|A_{i,j,:}\|_2$ for all $j\in\{1,\ldots, M\}$. Note that the proximal mappings associated to $\ell^{2,1,1}$ and $\ell^{2,2,1}$ can also be computed by means of Theorem \ref{thm:compositionL2Prox}.
\end{example}

Finally, the proximal mapping associated to the $(S^p,\ell^q)$ norm is a simple combination of a singular value decomposition followed by the proximity operator of the corresponding $\ell^p$ norm. Since the regularizations considered in this paper, which base on $(S^1, \ell^1)$ and $(S^{\infty}, \ell^1)$ norms, have an outer $\ell^1$ norm, then the computation of their proximity operators decouples at each pixel. By denoting $B:=A_{i,:,:}^{\top}$, we are thus left with a problem of the form
\begin{equation}\label{eq:proxSp_1}
\min_{D\in\R^{C\times M}}\dfrac{1}{2}\|D-B\|_2^2 + \tau\|D\|_{S^p},
\end{equation}
the solution of which is given in the following proposition.

\begin{proposition}\label{prop:proxSp}
Let $U\Sigma_0 V^T$ be the singular value decomposition of a matrix $B\in\R^{C\times M}$. Then, the proximity operator of the $S^p-$norm is given by
$$
\prox_{\tau S^p} (B) = BV\widehat{\Sigma}\Sigma_0^{\dag}V^{\top},
$$
where $\Sigma_0^{\dag}$ denotes the pseudo-inverse matrix of $\Sigma_0$ and $\widehat{\Sigma}=\prox_{\tau\|\cdot\|_{p}}(\diag(\Sigma_0))$.
\end{proposition}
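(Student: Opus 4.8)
The plan is to exploit that the Schatten-$p$ norm is a spectral function: it depends on $D$ only through its vector of singular values $\sigma(D)$, with $\|D\|_{S^p}=\|\sigma(D)\|_p$. I would first expand the objective in \eqref{eq:proxSp_1} as $\frac{1}{2}\|D\|_2^2 - \langle D,B\rangle + \frac{1}{2}\|B\|_2^2 + \tau\|\sigma(D)\|_p$ and write $D=P\,\diag(d)\,Q^{\top}$ in SVD form, so that $\|D\|_2^2=\|d\|_2^2$ and $\|D\|_{S^p}=\|d\|_p$ depend only on the singular values $d=\sigma(D)\ge 0$. The sole term coupling the singular vectors $P,Q$ of $D$ to those of $B$ is $\langle D,B\rangle=\tr(D^{\top}B)$. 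By von Neumann's trace inequality one has $\tr(D^{\top}B)\le\langle d,\sigma\rangle$, where $\sigma:=\diag(\Sigma_0)$, with equality attained in particular by the alignment $P=U$, $Q=V$. Hence, for every fixed vector of singular values $d$, the objective is minimized by matching the singular vectors of $D$ to those of $B$.

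Substituting this optimal alignment reduces the matrix problem to the vector problem $\min_{d\ge 0}\frac{1}{2}\|d-\sigma\|_2^2+\tau\|d\|_p$ (up to an additive constant), whose unique minimizer is by definition $\widehat{d}=\prox_{\tau\|\cdot\|_p}(\sigma)$; since $\sigma\ge 0$, this minimizer is automatically nonnegative and therefore a legitimate vector of singular values. This yields the classical form $\widehat{D}=U\widehat{\Sigma}V^{\top}$ with $\widehat{\Sigma}=\diag(\widehat d)=\prox_{\tau\|\cdot\|_p}(\diag(\Sigma_0))$. It then remains to rewrite this expression without $U$: because $BV=U\Sigma_0$ and the diagonal matrices $\Sigma_0$, $\widehat{\Sigma}$, $\Sigma_0^{\dag}$ commute, we get $BV\widehat{\Sigma}\Sigma_0^{\dag}V^{\top}=U\Sigma_0\widehat{\Sigma}\Sigma_0^{\dag}V^{\top}=U\widehat{\Sigma}V^{\top}$, where the last equality uses that $\Sigma_0\Sigma_0^{\dag}$ is the identity on the support of the nonzero singular values together with the fact that $\prox_{\tau\|\cdot\|_p}$ sends a vanishing coordinate to zero. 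This is exactly the claimed formula.

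The main obstacle is the equality case of von Neumann's trace inequality when $B$ has repeated or zero singular values, in which the optimal singular vectors are not unique. I would circumvent this by only verifying that the particular choice $P=U$, $Q=V$ is feasible and attains the bound, which is enough to identify a minimizer of the (strictly convex) proximal problem, rather than characterizing all optimal alignments. A secondary point is the bookkeeping for non-square $B$ and the pseudo-inverse $\Sigma_0^{\dag}$, which I would handle by working with the thin SVD so that $\Sigma_0$ is a square diagonal matrix and $\Sigma_0^{\dag}$ simply zeroes the vanishing singular values.
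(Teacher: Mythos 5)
The paper states Proposition \ref{prop:proxSp} without any proof (it is followed directly by the example of the $(S^p,\ell^1)$ prox), so there is no in-paper argument to compare yours against; judged on its own, your proof is correct and follows the classical route for unitarily invariant matrix functions: expand the objective, use von Neumann's trace inequality to decouple singular vectors from singular values, solve the resulting vector problem, and verify attainment. You handle the two genuine subtleties properly: (i) you avoid characterizing the equality case of von Neumann's inequality by only checking that the aligned choice $P=U$, $Q=V$ attains the lower bound, which together with strict convexity of the proximal objective identifies the unique minimizer; (ii) you justify the pseudo-inverse rewriting $U\widehat{\Sigma}V^{\top}=BV\widehat{\Sigma}\Sigma_0^{\dag}V^{\top}$ by the fact that $\prox_{\tau\|\cdot\|_p}$ maps vanishing coordinates to zero, which is exactly what makes the stated formula valid. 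Two small points you should make explicit in a final write-up: von Neumann's bound $\tr(D^{\top}B)\leq\langle \sigma(D),\sigma(B)\rangle$ pairs the singular values in decreasing order, so the lower bound should be stated with the sorted vector $d^{\downarrow}$ (harmless, since $\|d\|_2$ and $\|d\|_p$ are permutation invariant, and your attainment step is a direct computation of $F\bigl(U\widehat{\Sigma}V^{\top}\bigr)$ that never invokes the inequality); and the nonnegativity- and zero-preservation of $\prox_{\tau\|\cdot\|_p}$ on the nonnegative orthant, which you use twice, follow from the sign-symmetry of the $\ell^p$ norm together with uniqueness of the prox (flipping the sign of an offending coordinate cannot increase the objective) and deserve a one-line justification.
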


In the following example, we show the proximal mappings of the CTV regularizations using the Schatten norms we are interested in.

\begin{example}
For $A\in\R^{N\times M\times C}$, the proximity operator of the $(S^p,\ell^1)$ norm is
$$
\left(\prox_{\tau (S^p,\ell^1)}(A)\right)_{i} = A_{i,:,:}^{\top}V_i\big(\prox_{\tau\|\cdot\|_p}(\diag(\Sigma_i))\big)\Sigma_i^{\dag}V_i^{\top},\quad\forall i\in\{1,\ldots, N\},
$$
where $A_{i,:,:}^{\top}=U_i\Sigma_iV_i^{\top}$ is the singular value decomposition of $A_{i,:,:}^{\top}\in\R^{C\times M}$.
\end{example}

\subsection{Solving the Minimization Problem}

For solving the optimization problem \eqref{eq:minproblem} that arises from the proposed collaborative TV, we use the primal-dual hybrid gradient (PDHG) method \cite{ChambollePock2011, EsserZhang2010, GoldsteinEsser2013, Zhu2008}, a powerful optimization algorithm that breaks complex problems into simple sub-steps and can handle non-smoothness of the energy functional.

By introducing an auxiliary variable $\mathbf{g}\in Y$ and the constraint $K\mathbf{u}=\mathbf{g}$ in \eqref{eq:minproblem}, then we obtain the following formulation of the original problem:
$$
\min_{\mathbf{u}\in X, \, \mathbf{g}\in Y}  G(\mathbf{u}) + \|\mathbf{g}\|_{\vec{b},a} \quad \text{subject to} \quad K\mathbf{u}=\mathbf{g}.
$$
Now, consider the Lagrangian $L\left(\mathbf{u},\mathbf{g},\mathbf{q}\right) = G(\mathbf{u}) + \|\mathbf{g}\|_{\vec{b},a} + \langle \mathbf{q}, K\mathbf{u}-\mathbf{g}\rangle_Y$, then the associated primal-dual problem is
\begin{equation}\label{eq:primaldualproblem2}
\max_{\mathbf{q}\in Y}\min_{\mathbf{u}\in X,\, \mathbf{g}\in Y} G(\mathbf{u}) + \|\mathbf{g}\|_{\vec{b},a} +  \langle \mathbf{q}, K\mathbf{u}-\mathbf{g}\rangle_Y.
\end{equation}
The PDHG algorithm for solving \eqref{eq:minproblem} iteratively computes the solution of the associated saddle-point problem \eqref{eq:primaldualproblem2} by means of
\begin{equation*}
\begin{aligned}
  \mathbf{u}^{n+1} &= \prox_{\tau_n G}(\mathbf{u}^n - \tau_n K^{\top}\mathbf{q}^n ),\\
  \bar{\mathbf{u}}^{n+1} &=  \mathbf{u}^{n+1} + \left(\mathbf{u}^{n+1}-\mathbf{u}^n\right),\\
  \mathbf{g}^{n+1} &= \prox_{\frac{1}{\sigma_n}\|\cdot\|_{\vec{b},a}}\left(K\bar{\mathbf{u}}^{n+1} + \frac{\mathbf{q}^n}{\sigma_n}\right),\\
  \mathbf{q}^{n+1} &= \mathbf{q}^n + \sigma_n(K\bar{\mathbf{u}}^{n+1} - \mathbf{g}^{n+1}),
\end{aligned}
\end{equation*}
where $n\geq 0$ is the iteration number, and $\tau_n, \sigma_n>0$ are the step-size parameters. The algorithm basically consists of alternating a gradient descent in the primal variables $\mathbf{u}$ and $\mathbf{g}$, and a gradient ascent in the dual variable $\mathbf{q}$. 


\section{Applications to Image Processing} \label{sec:results}

We present an extensive performance evaluation of different CTV based methods on several inverse problems in color imaging such as denoising, deblurring, and inpainting. In these cases, one typically introduces a positive weighting constant $\lambda\geq 0$ that controls the trade-off between $G$, which forces the solution of the optimization problem to be close to some given data, and the regularization term:
$$
\min_{\mathbf{u}\in X} \frac{\lambda}{2}G(\mathbf{u}) +  \|K\mathbf{u}\|_{\vec{b},a}.
$$
For the sake of consistency among comparisons, we solved each problem with a range of different values of $\lambda$ and only reported the best result for each regularization and each degradation condition in terms of the highest peak signal-to-noise ratio (PSNR). Furthermore, we chose the linear operator $K$ to be the discrete local gradient computed via forward differences. In all tests, we used images from the Kodak collection (\url{http://r0k.us/graphics/kodak/}), and all results were saved in integer values relative to the intensity range $[0,255]$.

In view of the optimality conditions of \eqref{eq:primaldualproblem2}, one defines the following sequences of primal and dual residuals:
\begin{equation*}
\begin{aligned}
P_{n+1} &:= \dfrac{1}{\tau_n} \left(\mathbf{u}^n-\mathbf{u}^{n+1}\right) -K^{\top} \left( \mathbf{q}^{n}-\mathbf{q}^{n+1}\right),\\
D_{n+1} &:= \dfrac{1}{\sigma_n} \left(\mathbf{q}^n-\mathbf{q}^{n+1}\right) - K\left( \mathbf{u}^n-\mathbf{u}^{n+1}\right).
\end{aligned}
\end{equation*}
As stopping criterion we used a tolerance value of $10^{-5}$ for the average of the above residuals per pixel. In any case, we stopped the algorithm after $1000$ iterations even if the tolerance was not reached.

\subsection{Image Denoising: $\CTV-\ell^2$ Model}\label{sec:denoisingL2}

We propose to extend the widely mentioned ROF model to color images by using CTV regularization. The primal problem is therefore given by
\begin{equation}\label{eq:TVL2denoising}
 \min_{\mathbf{u}\in X} \dfrac{\lambda}{2}\|\mathbf{u}-\mathbf{f}\|^2_2 + \| K\mathbf{u}\|_{\vec{b},a}.
 \end{equation}
The $\ell^2$ norm is the most suitable choice for suppressing Gaussian noise, since the energy \eqref{eq:TVL2denoising} corresponds to the maximum a posteriori estimate. The proximity operator of the fidelity term $G(\mathbf{u}):= \frac{\lambda}{2}\|\mathbf{u}-\mathbf{f}\|^2_2$ is
$$
\widehat{\mathbf{u}} = \prox_{\tau G}(\mathbf{u}) \: \Leftrightarrow\: \widehat{\mathbf{u}} = \dfrac{\mathbf{u} + \tau\lambda \mathbf{f}}{1 + \tau\lambda}.
$$

To determine the general behaviour of several CTV regularizations with respect to changing the balancing parameter, Figure \ref{fig:psnrvslmb} shows the plots of the PSNR each method achieved for certain values of $\lambda$. For these tests, we artificially added zero-mean Gaussian noise of standard deviation $25$ to a noise-free color image. One observes that the peaks of the PSNR curves of the regularizations using $\ell^{\infty,1,1}$, $(S^1, \ell^1)$, $\ell^{2,1,1}$, and $\ell^{\infty,2,1}$ norms achieve the highest values. Interestingly, although $\ell^{1,1,1}$ shows one of the lowest performances in terms of the maximal PSNR, its corresponding curve seems to drop slower as one overestimates $\lambda$. As it is well known, the optimal value of $\lambda$ does not always lead to a complete noise removal. However, a huge reduction of the balancing parameter provides an over-smoothed result and, thus, significant information is lost. In the end, the optimal value in terms of the PSNR is obtained as a compromise between removing noise and preserving signal content.

\begin{figure}[!htbp]
\begin{center}
  \includegraphics[width=0.55\linewidth]{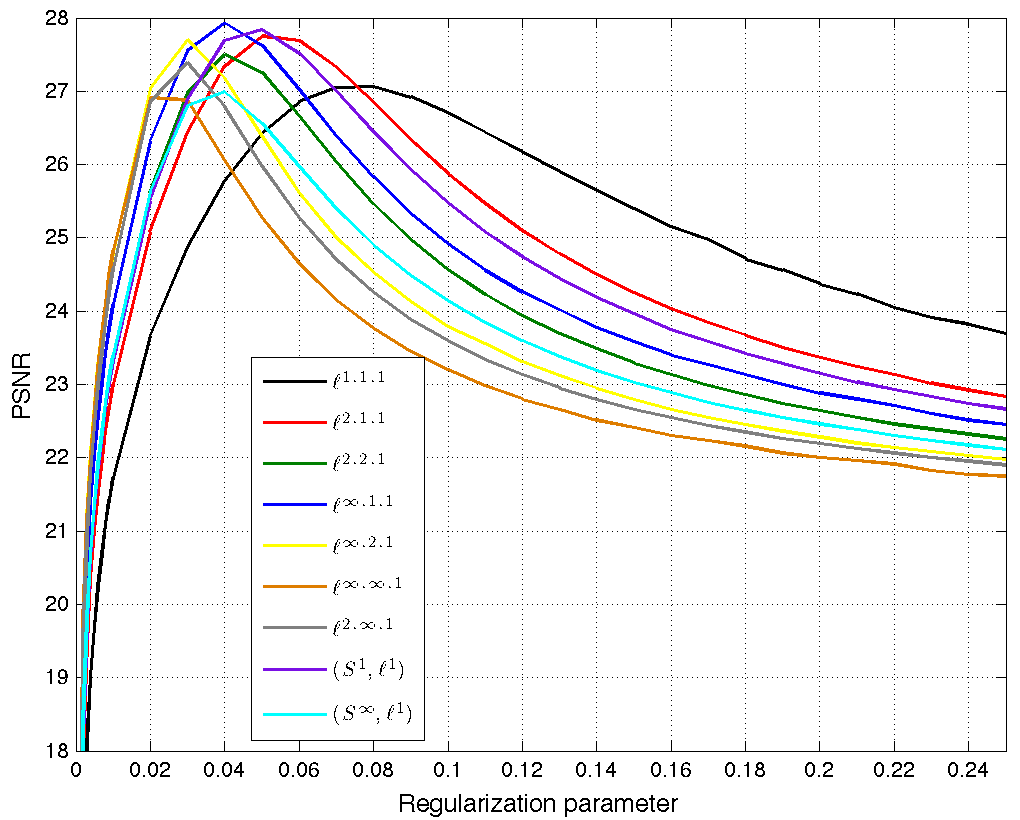}
\caption{Comparison of CTV methods using different values of $\lambda$. The peaks of the curves of $\ell^{\infty,1,1}$, $(S^1, \ell^1)$, $\ell^{2,1,1}$ and $\ell^{\infty,2,1}$ achieve the highest values. Although $\ell^{1,1,1}$ shows one of the lowest performances in terms of the maximal PSNR, its curve drops slower as one overestimates $\lambda$.}
\label{fig:psnrvslmb}
\end{center}
\end{figure}

As an example of our experiments on $\CTV-\ell^2$ denoising, we artificially added Gaussian noise with standard deviation $30$ to the twenty-third Kodak image and computed the PSNR value for each reconstruction by comparing to the noise-free image. Picking the optimal value of $\lambda$ in terms of the PSNR for each method, we obtained the results shown in Figure \ref{fig:denoisingL2}. We clearly observe that the CTV regularization based on the $\ell^{\infty,1,1}$ norm provides the best PSNR value, and its denoised image is superior to the others in visual quality. Indeed, see the strong color artifacts on the parrot's cheek for all results except for the $\ell^{\infty,1,1}$ norm. Although the $(S^1,\ell^1)$ norm shows nice denoising properties, a derivative matrix which has two derivative vectors being equal to zero also has rank one such that colored edges are not actively suppressed.
The large inter-channel correlation of images in the Kodak dataset explains why the $\ell^{\infty,1,1}$ norm, which encourages jumps that occur in all channels in the sense given in Section \ref{sec:singVectors}, performs visually the best. On the other hand, $\ell^{1,1,1}$ shows one of the worst performances since it neither couples the colors nor the derivatives. Furthermore, $\left( S^{\infty}, \ell^1\right)$ does not work very well. It seems that imposing jumps of different color channels to point into the same direction can more effectively be enforced by the convex relaxation $(S^1, \ell^1)$ than having a single direction in the dual variable as in the $\left(S^{\infty}, \ell^1\right)$ approach. Finally, the isotropic $\ell^{2,2,1}$ is beaten by the anisotropic $\ell^{2,1,1}$, and the new-proposed $\ell^{2,\infty, 1}(der, col, pix)$ outperforms $\ell^{\infty,2,1}(col, der, pix)$ in terms of both PSNR and visual quality assessment.

\begin{figure}[!htbp]
\begin{center}
\begin{tabular}{cc}
	\includegraphics[trim= 5.9cm 8.8cm 18.7cm 7cm, clip=true, scale=1.57]{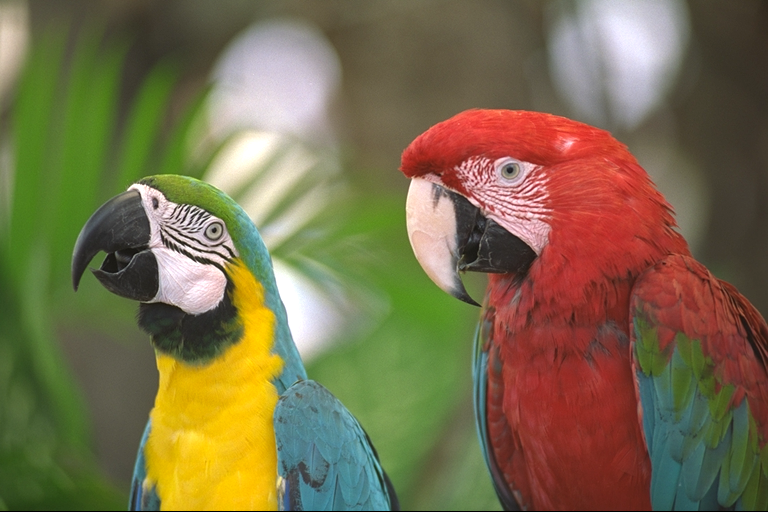} &
	\includegraphics[trim= 5.9cm 8.8cm 18.7cm 7cm, clip=true, scale=1.57]{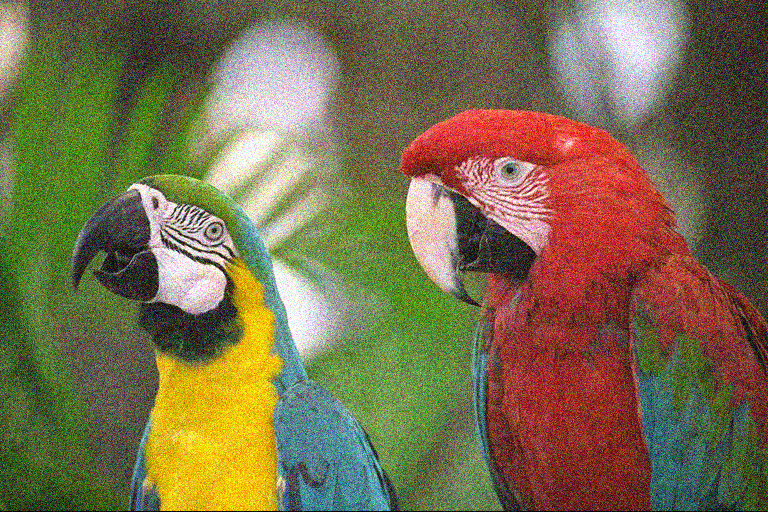}\\
	Clean & Noisy
\end{tabular}
\begin{tabular}{ccc}
	\includegraphics[trim= 5.9cm 8.8cm 18.7cm 7cm, clip=true, scale=1.57]{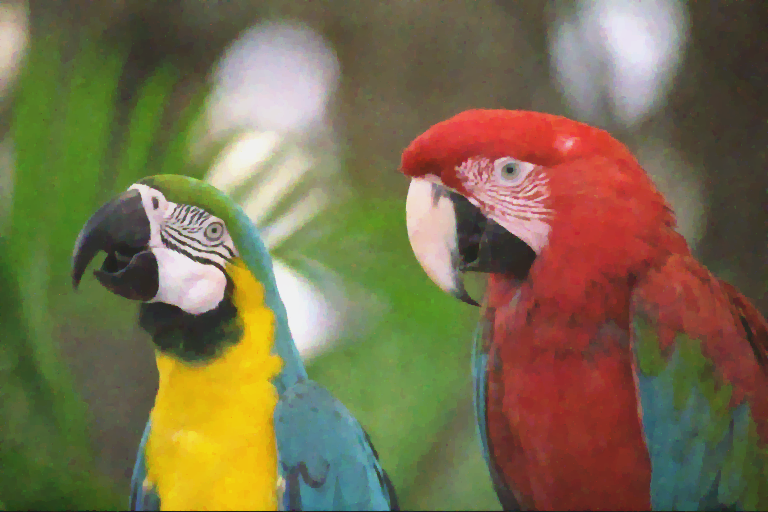} &
	\includegraphics[trim= 5.9cm 8.8cm 18.7cm 7cm, clip=true, scale=1.57]{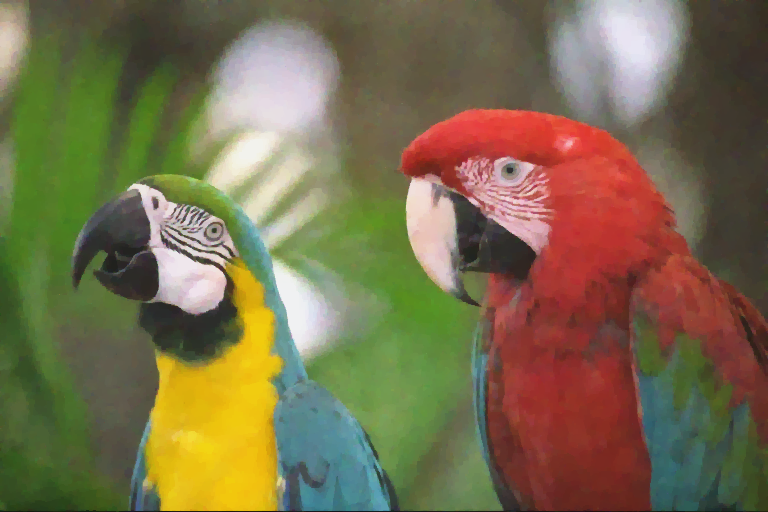} &
	\includegraphics[trim= 5.9cm 8.8cm 18.7cm 7cm, clip=true, scale=1.57]{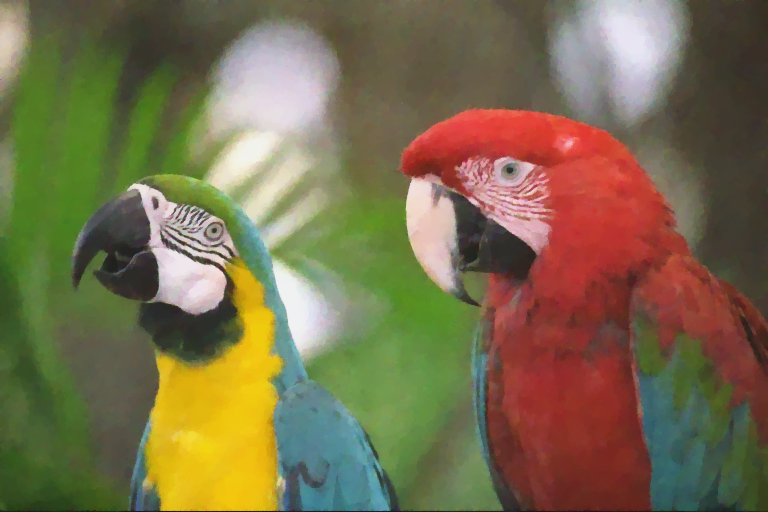} \\
	$\begin{array}{c} \ell^{1,1,1}(col,der,pix) \\ \text{PSNR} = 30.14 \end{array}$ & $\begin{array}{c}\ell^{2,1,1}(col,der,pix) \\ \text{PSNR} = 31.00 \end{array}$  & $\begin{array}{c} \ell^{2,2,1}(col,der,pix) \\ \text{PSNR} = 30.92\end{array}$ \\
	\includegraphics[trim= 5.9cm 8.8cm 18.7cm 7cm, clip=true, scale=1.57]{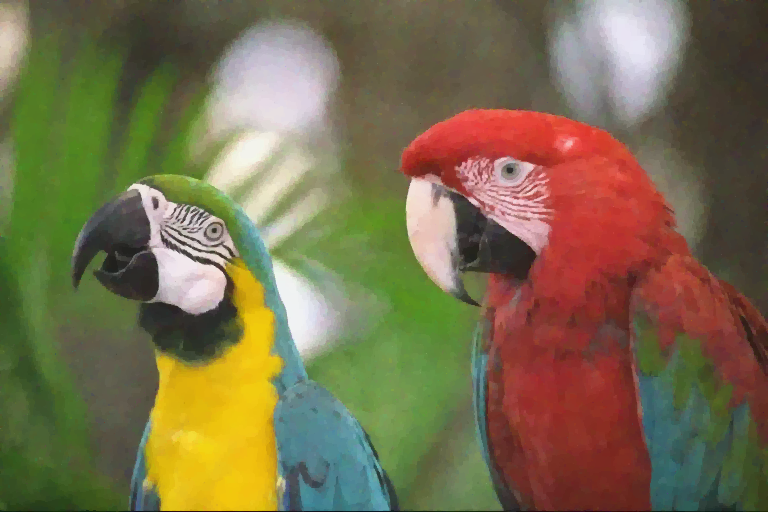} &
	\includegraphics[trim= 5.9cm 8.8cm 18.7cm 7cm, clip=true, scale=1.57]{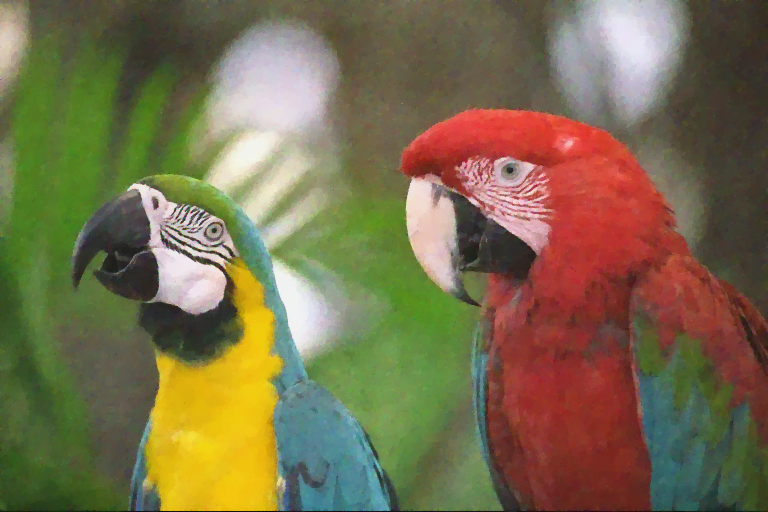} &
	\includegraphics[trim= 5.9cm 8.8cm 18.7cm 7cm, clip=true, scale=1.57]{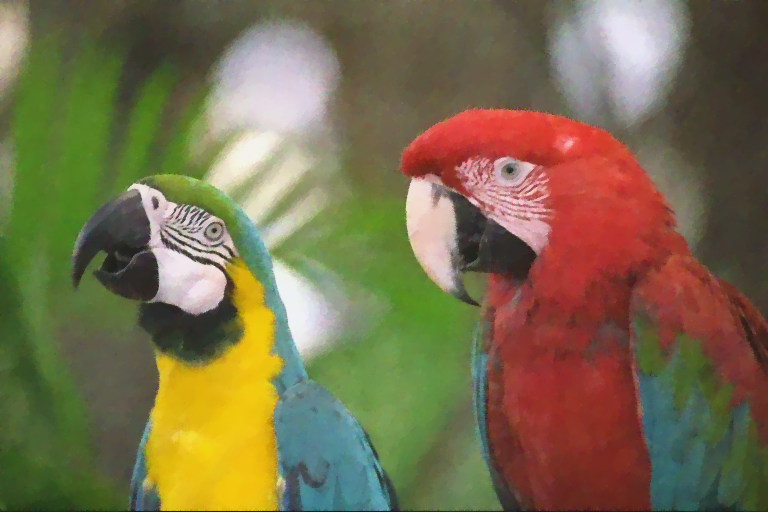} \\
	$\begin{array}{c} \ell^{\infty,1,1}(col,der,pix) \\ \text{PSNR} = 31.13 \end{array}$ & $\begin{array}{c}\ell^{\infty,2,1}(col,der,pix) \\ \text{PSNR} = 30.91 \end{array}$ & $\begin{array}{c} \ell^{\infty,\infty,1}(col,der,pix) \\ \text{PSNR} = 30.71\end{array}$ \\
	\includegraphics[trim= 5.9cm 8.8cm 18.7cm 7cm, clip=true, scale=1.57]{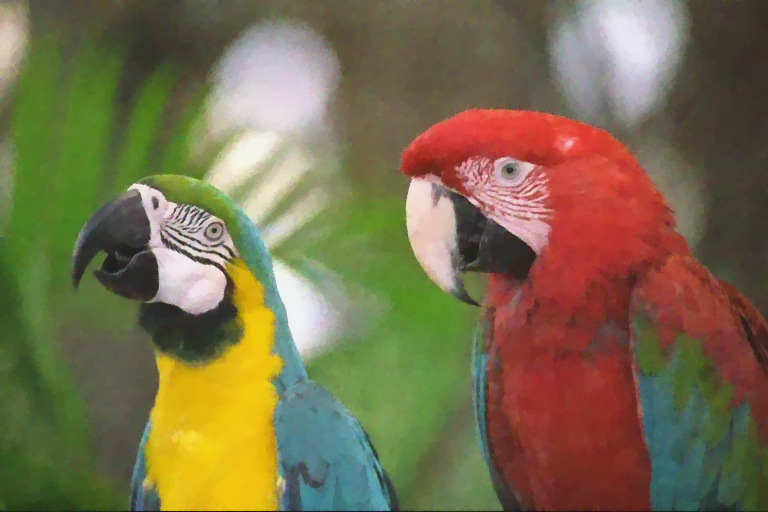}&
	\includegraphics[trim= 5.9cm 8.8cm 18.7cm 7cm, clip=true, scale=1.57]{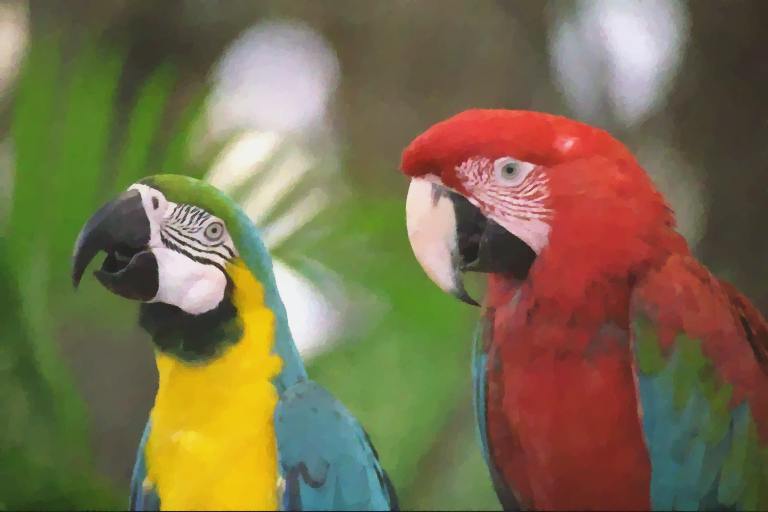} &
	\includegraphics[trim= 5.9cm 8.8cm 18.7cm 7cm, clip=true, scale=1.57]{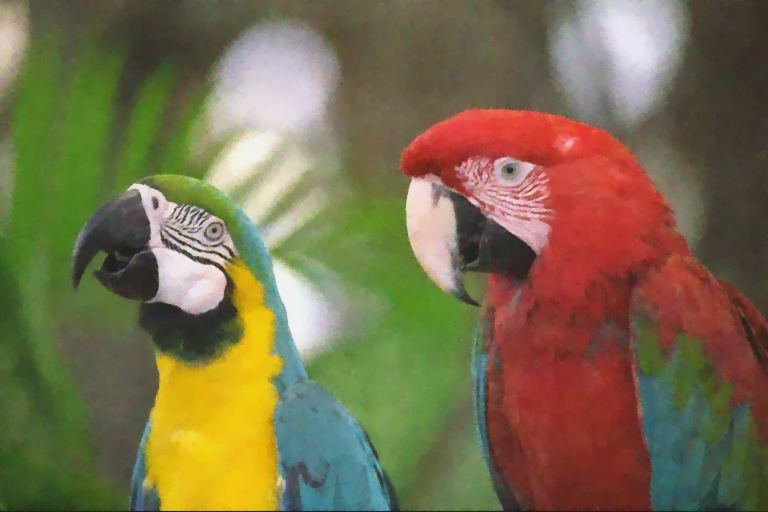} \\
	$\begin{array}{c} \ell^{2,\infty,1}(der,col,pix) \\ \text{PSNR} = 30.97 \end{array}$ & $\begin{array}{c} \left(S^1(col,der), \ell^1(pix)\right) \\ \text{PSNR} = 31.05 \end{array}$ & $\begin{array}{c} \left(S^{\infty}(col,der), \ell^1(pix)\right) \\ \text{PSNR} = 30.46 \end{array}$ 
\end{tabular}
\caption{Close-ups of the ground truth, the input noisy data (additive Gaussian noise of s.d. $30$), and the denoised images obtained from the minimization of \eqref{eq:TVL2denoising} on the twenty-third Kodak image. For each method, the value of $\lambda$ which gave the best PSNR value was determined experimentally. The PSNR value for each result is noted below the image. We observe that strong color artifacts remain on the parrot's cheek in all results except for the $\ell^{\infty,1,1}$ norm. In fact, this method gives rise to significantly better visual quality and provides the best PSNR value. Although $(S^1,\ell^1)$ most closely approaches $\ell^{\infty,1,1}$ in terms of the numerical assessment, it is still far from suppressing spots and color artifacts as $\ell^{\infty,1,1}$ does.}
\label{fig:denoisingL2}
\end{center}
\end{figure}

A more detailed comparison analysis on color image denoising by the CTV$-\ell^2$ model, supporting software and an online demo will be made available soon. 

\subsection{Image Denoising: $\CTV-\ell^1$ Model}

If we replace the $\ell^2$ norm in the data-penalty term of \eqref{eq:TVL2denoising} by the more robust $\ell^1$ norm, the $\CTV-\ell^1$ model arises:
\begin{equation}\label{eq:TVL1denoising}
 \min_{\mathbf{u}\in X} \lambda\|\mathbf{u}-\mathbf{f}\|_1 +  \| K\mathbf{u}\|_{\vec{b},a}.
 \end{equation}
Some well-known advantages of \eqref{eq:TVL1denoising} over the classical ROF model are contrast invariance and more effectiveness in removing noise containing strong outliers such as the salt-and-pepper type noise. In this case, the proximity operator of the fidelity term $G(\mathbf{u}):= \lambda\|\mathbf{u}-\mathbf{f}\|_1$ is
$$
\widehat{\mathbf{u}} = \prox_{\tau G}(\mathbf{u}) \: \Leftrightarrow\: \widehat{u}_{i,k} = \left\lbrace \begin{array}{ll} u_{i,k} - \tau\lambda & \text{if }\, u_{i,k}-f_{i,k} > \tau\lambda, \\ u_{i,k}+\tau\lambda & \text{if }\, u_{i,k}-f_{i,k} < -\tau\lambda, \\ f_{i,k} & \text{if }\, |u_{i,k}-f_{i,k}|\leq \tau\lambda. \end{array}\right.
$$
Note that the $\CTV-\ell^1$ model poses a nonsmooth optimization problem, which is also treatable by the PDHG algorithm.

Given the probability $\alpha\in[0,1]$ that a pixel is corrupted, we introduced salt-and-pepper noise by setting a fraction of $\frac{\alpha}{2}$ randomly selected pixels to black, and another fraction of $\frac{\alpha}{2}$ randomly selected pixels to white. We display in Figure \ref{fig:denoisingL1} the optimal result each method provided on parts of the fifth Kodak image for $\alpha=0.15$. At first glance, the regularization using the newly-proposed $\ell^{\infty,1,1}$ norm is the most successful in suppressing color spots. The numerical results confirm the previous visual inspection, since the PSNR value associated to the denoised image given by $\ell^{\infty,1,1}$ is clearly superior to all others. In fact, this is the unique method that actively suppresses the input noise and preserves sharp edges. For instance, observe that the edges separating saturated regions, such as the contours of the green and yellow front mudguards, are specially damaged with all regularizations except $\ell^{\infty,1,1}$. Finally, it is worth stressing that $\ell^{2,\infty,1}$ clearly outperforms $\ell^{\infty,2,1}$.

\begin{figure}[!htbp]
\begin{center}
\begin{tabular}{cc}
	\includegraphics[trim= 5.5cm 5cm 15.5cm 7.5cm, clip=true, scale=0.64]{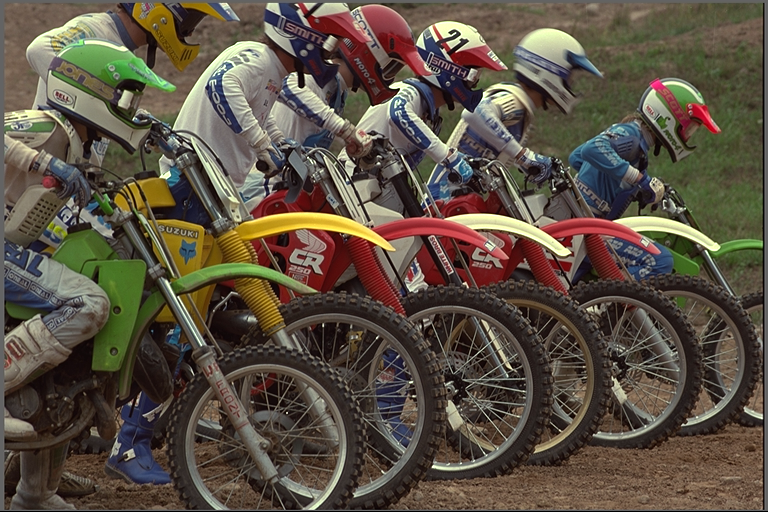} &
	\includegraphics[trim= 5.5cm 5cm 15.5cm 7.5cm, clip=true, scale=0.64]{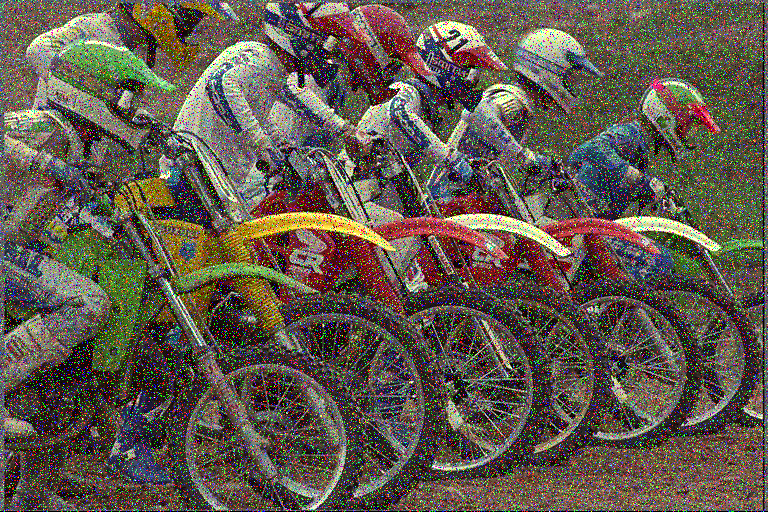}\\
	Clean & Noisy
\end{tabular}
\begin{tabular}{ccc}
	\includegraphics[trim= 5.5cm 5cm 15.5cm 7.5cm, clip=true, scale=0.64]{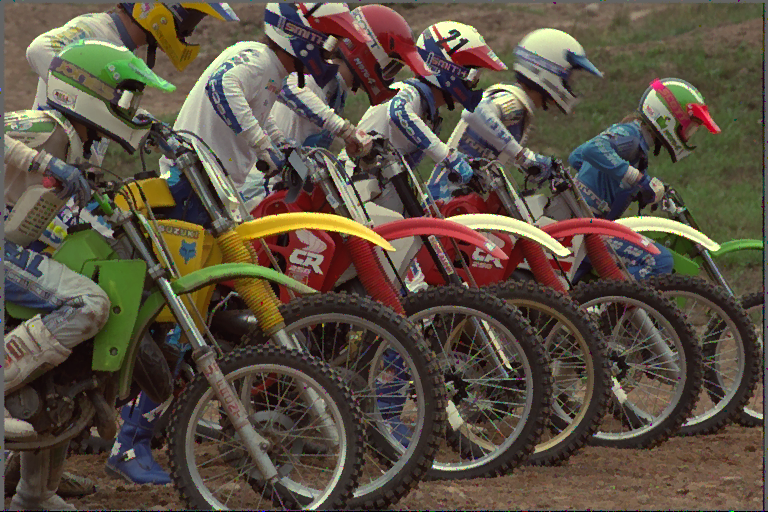} &
	\includegraphics[trim= 5.5cm 5cm 15.5cm 7.5cm, clip=true, scale=0.64]{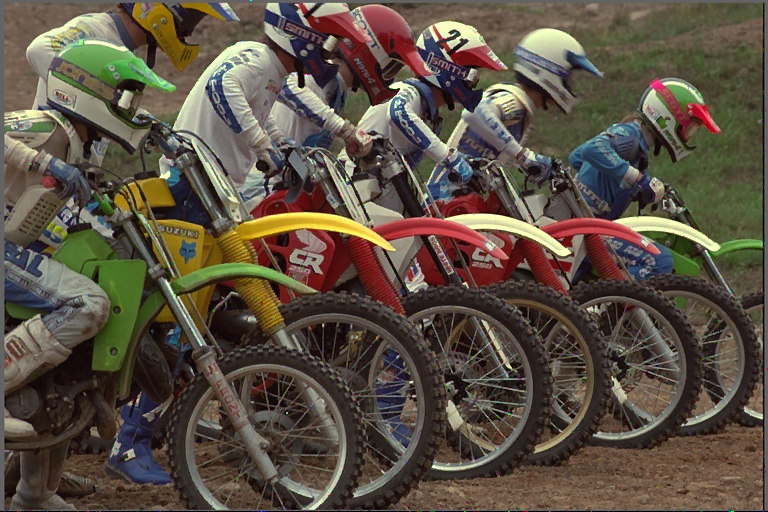} &
	\includegraphics[trim= 5.5cm 5cm 15.5cm 7.5cm, clip=true, scale=0.64]{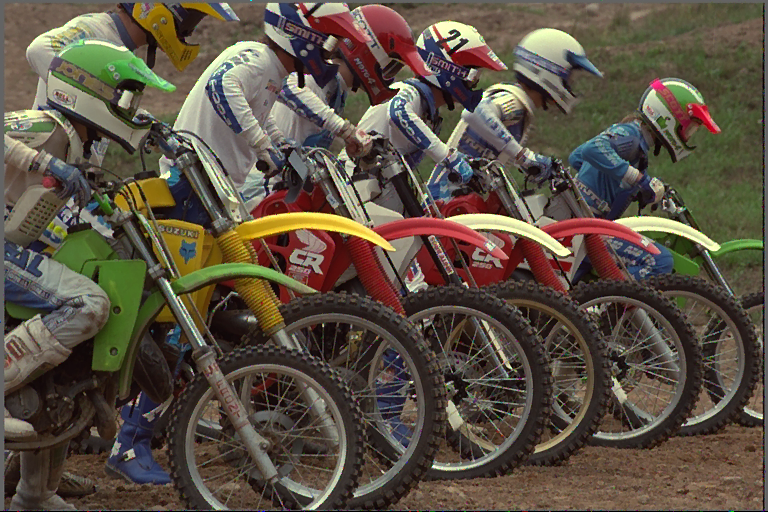} \\
	$\begin{array}{c} \ell^{1,1,1}(col,der,pix) \\ \text{PSNR} = 26.40 \end{array}$ & $\begin{array}{c}\ell^{2,1,1}(col,der,pix) \\ \text{PSNR} = 29.33 \end{array}$  & $\begin{array}{c} \ell^{2,2,1}(col,der,pix) \\ \text{PSNR} = 28.77\end{array}$ \\
	\includegraphics[trim= 5.5cm 5cm 15.5cm 7.5cm, clip=true, scale=0.64]{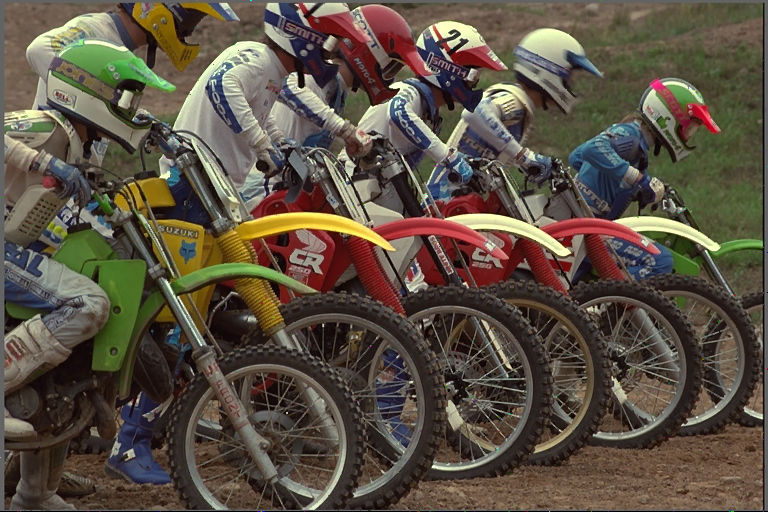} &
	\includegraphics[trim= 5.5cm 5cm 15.5cm 7.5cm, clip=true, scale=0.64]{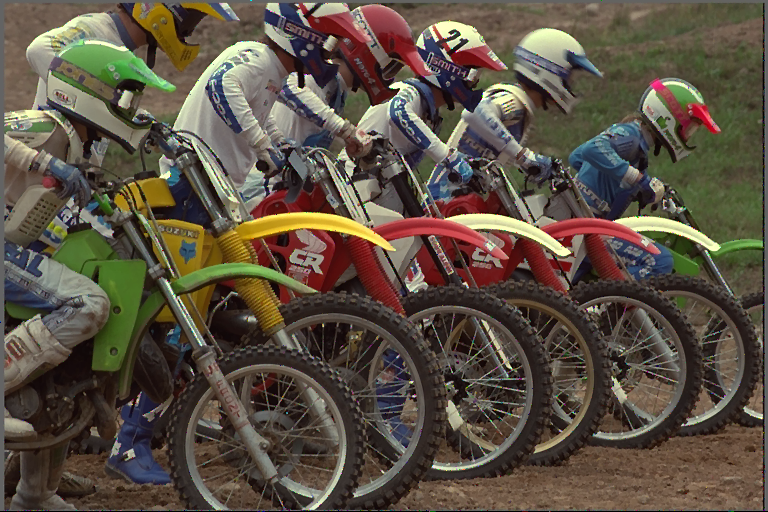} &
	\includegraphics[trim= 5.5cm 5cm 15.5cm 7.5cm, clip=true, scale=0.64]{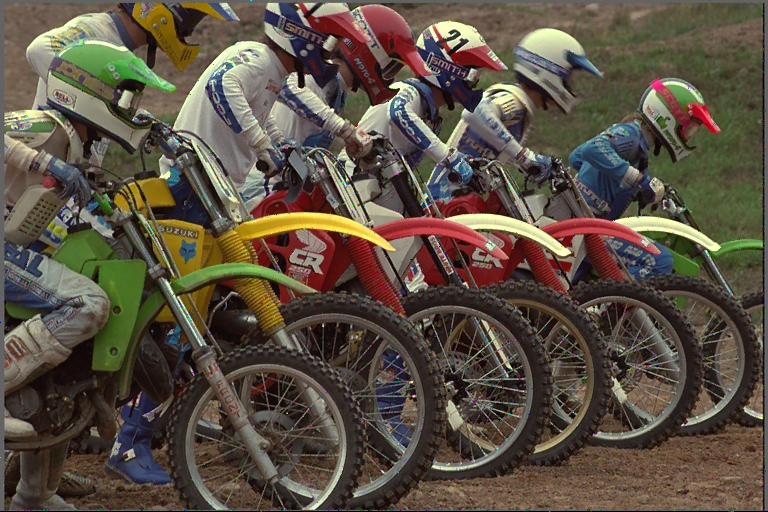} \\
	$\begin{array}{c} \ell^{\infty,1,1}(col,der,pix) \\ \text{PSNR} = 31.67 \end{array}$ & $\begin{array}{c}\ell^{\infty,2,1}(col,der,pix) \\ \text{PSNR} = 28.62 \end{array}$ & $\begin{array}{c} \ell^{\infty,\infty,1}(col,der,pix) \\ \text{PSNR} = 29.75\end{array}$ \\
	\includegraphics[trim= 5.5cm 5cm 15.5cm 7.5cm, clip=true, scale=0.64]{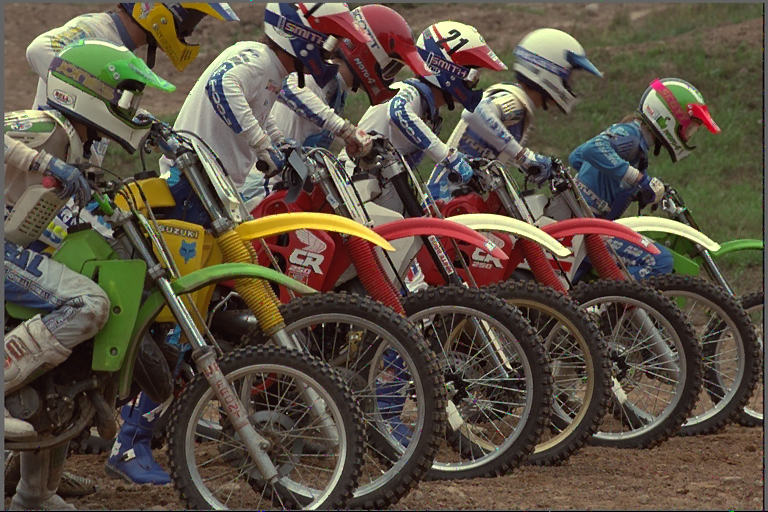}&
	\includegraphics[trim= 5.5cm 5cm 15.5cm 7.5cm, clip=true, scale=0.64]{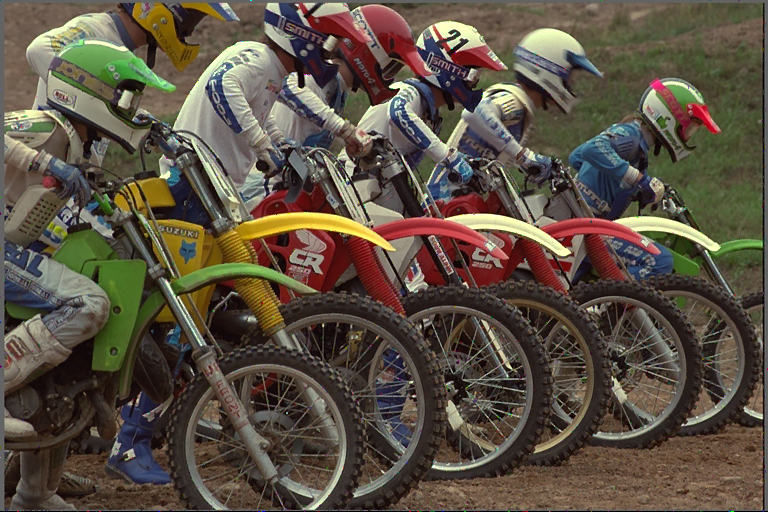} &
	\includegraphics[trim= 5.5cm 5cm 15.5cm 7.5cm, clip=true, scale=0.64]{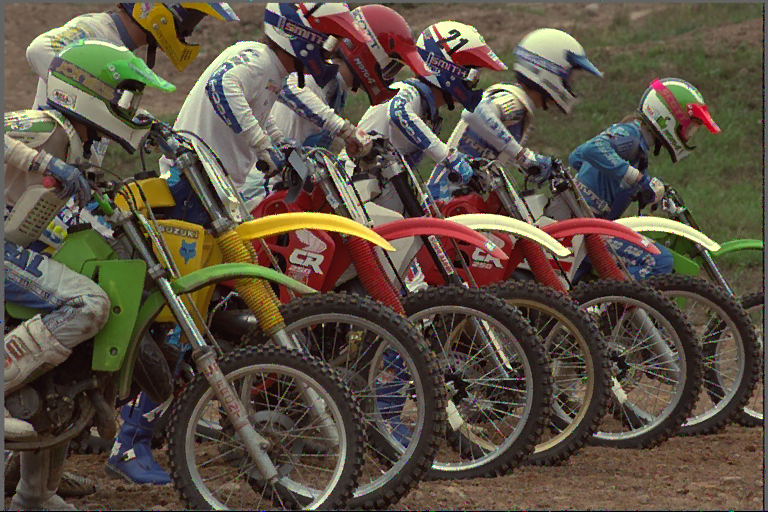} \\
	$\begin{array}{c} \ell^{2,\infty,1}(der,col,pix) \\ \text{PSNR} = 30.50 \end{array}$ & $\begin{array}{c} \left(S^1(col,der), \ell^1(pix)\right) \\ \text{PSNR} = 30.86 \end{array}$ & $\begin{array}{c} \left(S^{\infty}(col,der), \ell^1(pix)\right) \\ \text{PSNR} = 27.15 \end{array}$ 
\end{tabular}
\caption{Close-ups of the ground truth, the input noisy data (15\% of pixels with salt-and-pepper noise), and the denoised images obtained from the minimization of \eqref{eq:TVL1denoising} on the fifth Kodak image. For each method, the value of $\lambda$ which gave the best PSNR value was determined experimentally. The PSNR value for each result is noted below the image. We observe that the $\ell^{\infty,1,1}$ norm is clearly superior in terms of the error as well from a visual inspection. Indeed, it is the most effective regularization to remove spots and preserve colors. Note also that $\ell^{\infty,1,1}$ and (to a lesser extend) $\ell^{2,\infty,1}$ produce denoised images with sharp contours, which does not happen in all other cases since edges separating colored regions are damaged.}
\label{fig:denoisingL1}
\end{center}
\end{figure}

\subsection{Image Deblurring}
The extension of the variational ROF model for image deblurring involves the minimization of the primal energy
$$
\min_{\mathbf{u}\in X}  \dfrac{\lambda}{2} \| A\mathbf{u} -\mathbf{f} \|^2_2 + \|K\mathbf{u}\|_{\vec{b},a},
$$
where $A$ is a linear operator modeling the degradation of $\mathbf{u}$ caused by blur and possibly noise. For the following experiments, we focus on image deconvolution, which refers to the case where the blur to be removed is linear and shift-invariant so that it may be expressed as a convolution of the image with a point spread function. Accordingly, the linear operator is given by $A\mathbf{u} = \varphi \ast \mathbf{u}$, where $\varphi$ is a Gaussian convolution kernel.

The proximal mapping of the fidelity term $G(\mathbf{u}):=\frac{\lambda}{2}\|A\mathbf{u}-\mathbf{f}\|^2_2$ is given by
\begin{equation}\label{eq:proxblur}
\widehat{\mathbf{u}}= \text{prox}_{\tau G}(\mathbf{u})  \: \Leftrightarrow\: (I + \tau\lambda A^{*}A)\widehat{\mathbf{u}} = \mathbf{u} + \tau\lambda A^{*}\mathbf{f}.
\end{equation}
Note that the above formula requires to compute $(I + \tau\lambda A^{*}A)^{-1}$, which is huge time consuming in the spatial domain for large values of the standard deviation of the kernel. This drawback is solved by working in the Fourier domain where the convolution becomes a mere multiplication. Hence, using the convolution theorem of Fourier transforms, the solution of \eqref{eq:proxblur} can be efficiently computed as
\begin{equation}\label{eq:prox_deblurFFT}
\widehat{\mathbf{u}} = \mathcal{F}^{-1} \left(  \dfrac{\mathcal{F}(\mathbf{u}) + \tau\lambda \mathcal{F}(A)^{*} F(\mathbf{f})}{1 + \tau\lambda \mathcal{F}(A)^2}\right),
\end{equation}
where $\mathcal{F}$ and $\mathcal{F}^{-1}$ denote the Fast Fourier Transform (FFT) and the inverse FFT, respectively. Note that all operations in the above formula are componentwise.

We tested all CTV regularizations on the third Kodak image. The degraded data was simulated by convolving the ground truth with a Gaussian kernel of standard deviation $2$ and further adding white Gaussian noise of standard deviation $0.5$. The quality of the restored images with optimal values of $\lambda$ can be evaluated both visually and numerically in Figure \ref{fig:deblurring}. We observe that the blur has been almost suppressed  in all cases even though some geometry and texture cannot be recovered from the corrupted data. As expected from any TV based model, the restored images tend to be piecewise smooth. In general terms, it seems that isotropic regularization is more suitable for image deblurring -- at least with very little noise -- than anisotropic filtering. Indeed, $\ell^{2,2,1}$ and the new-proposed $\ell^{\infty,2,1}$ provide the best PSNR values together with the nuclear norm $(S^1,\ell^1)$. On the other hand, one realizes that $\ell^{\infty,1,1}$ and $(S^1, \ell^1)$ are superior in removing color artifacts at the text on the cap. In the end, the nuclear norm compromises between removing blur and avoiding color spots. 

\begin{figure}[!htbp]
\begin{center}
\begin{tabular}{cc}
	\includegraphics[trim= 5cm 7cm 16.2cm 5.5cm, clip=true, scale=0.65]{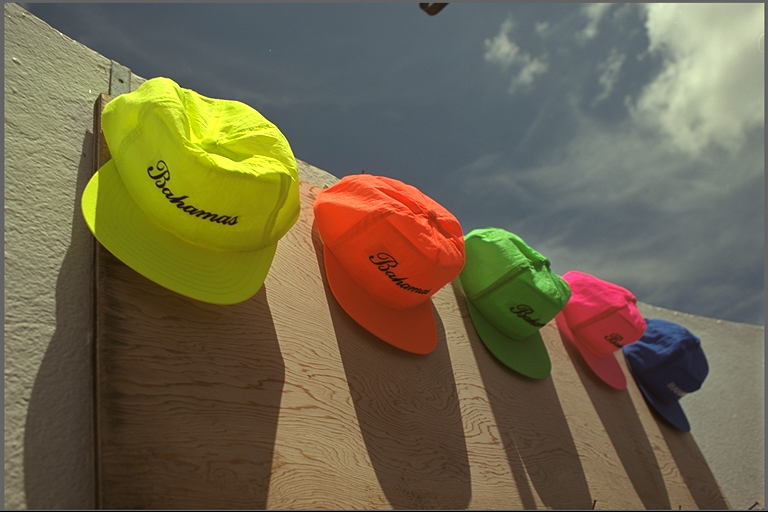} &
	\includegraphics[trim= 5cm 7cm 16.2cm 5.5cm, clip=true, scale=0.65]{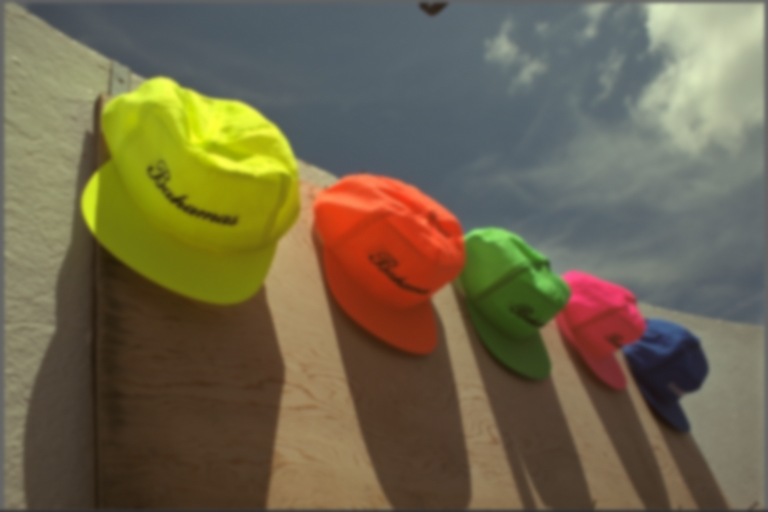}\\
	Clean & Blurred and noisy
\end{tabular}
\begin{tabular}{ccc}
	\includegraphics[trim= 5cm 7cm 16.2cm 5.5cm, clip=true, scale=0.65]{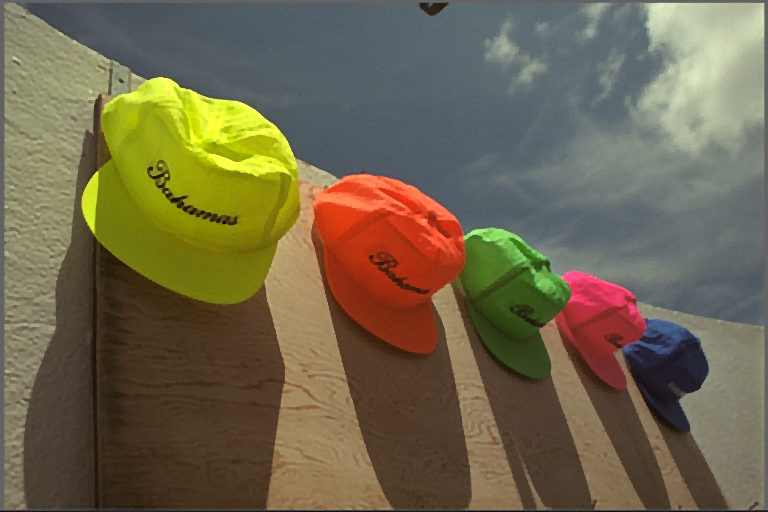} & 
	\includegraphics[trim= 5cm 7cm 16.2cm 5.5cm, clip=true, scale=0.65]{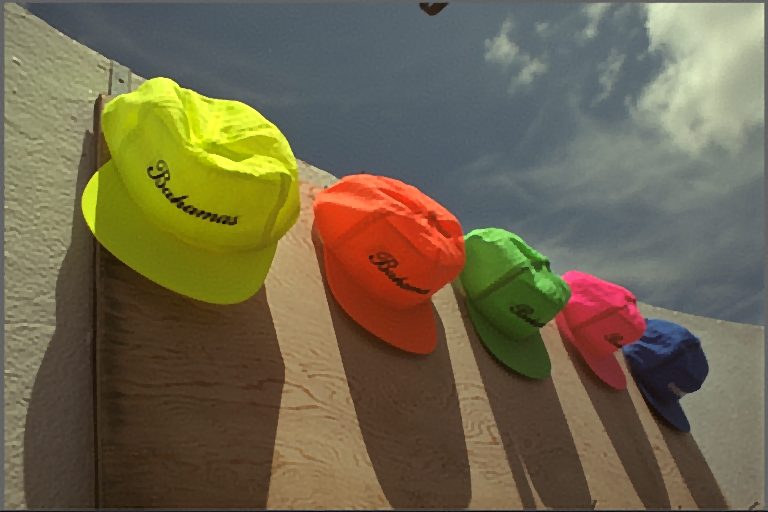} &
	\includegraphics[trim= 5cm 7cm 16.2cm 5.5cm, clip=true, scale=0.65]{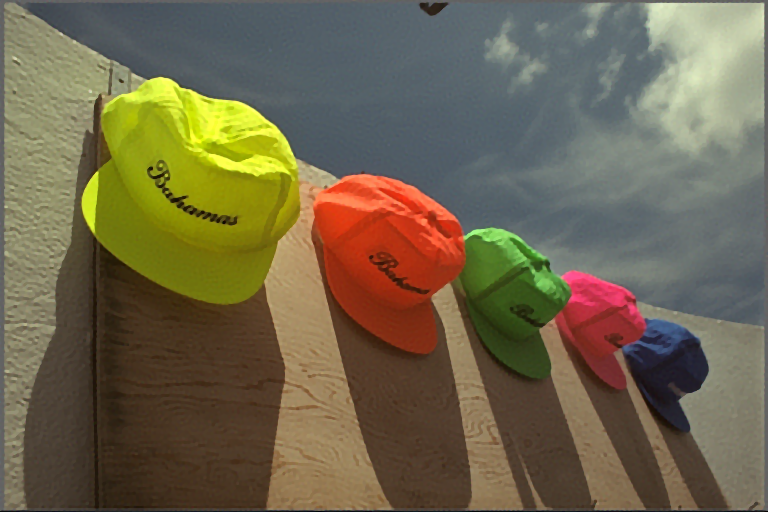} \\
	$\begin{array}{c} \ell^{1,1,1}(col,der,pix) \\ \text{PSNR} = 32.16 \end{array}$ & $\begin{array}{c}\ell^{2,1,1}(col,der,pix) \\ \text{PSNR} = 32.56 \end{array}$  & $\begin{array}{c} \ell^{2,2,1}(col,der,pix) \\ \text{PSNR} = 32.76\end{array}$ \\
	\includegraphics[trim= 4.75cm 7cm 16.2cm 5.5cm, clip=true, scale=0.65]{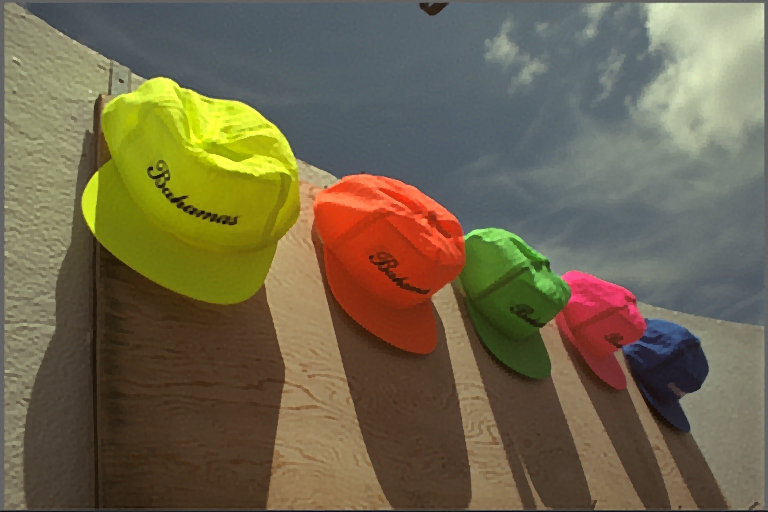} &
	\includegraphics[trim= 5cm 7cm 16.2cm 5.5cm, clip=true, scale=0.65]{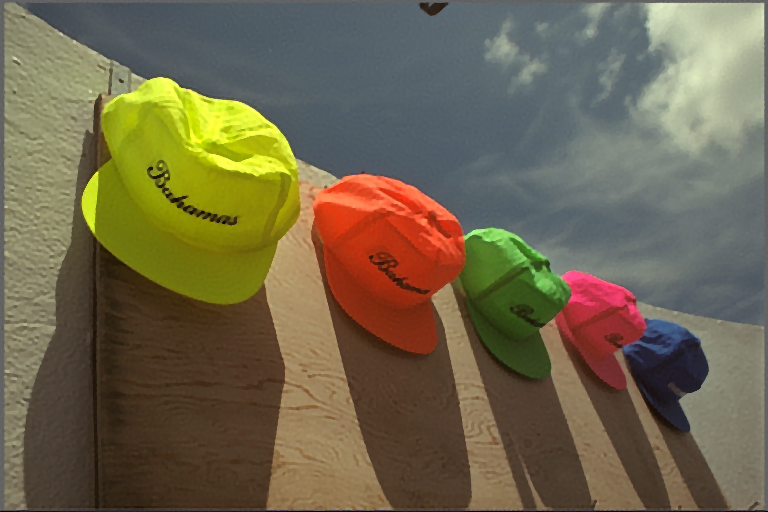} &
	\includegraphics[trim= 5cm 7cm 16.2cm 5.5cm, clip=true, scale=0.65]{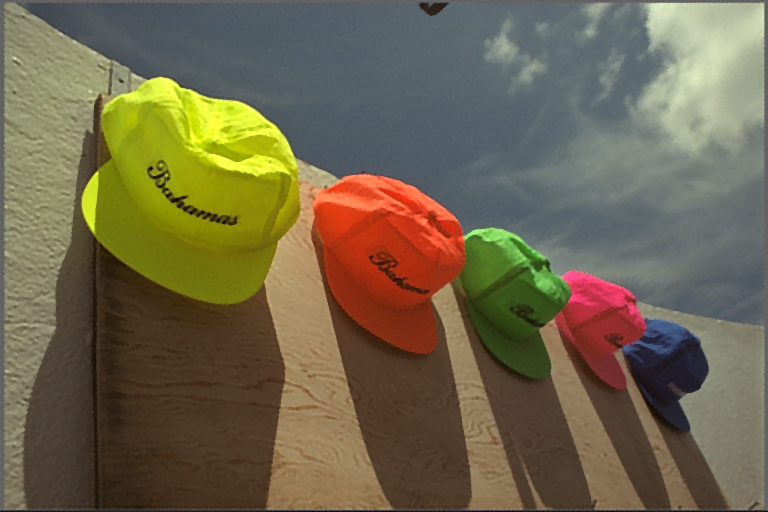} \\
	$\begin{array}{c} \ell^{\infty,1,1}(col,der,pix) \\ \text{PSNR} = 32.59 \end{array}$ & $\begin{array}{c}\ell^{\infty,2,1}(col,der,pix) \\ \text{PSNR} = 32.71 \end{array}$ & $\begin{array}{c} \ell^{\infty,\infty,1}(col,der,pix) \\ \text{PSNR} = 32.45\end{array}$ \\
	\includegraphics[trim= 5cm 7cm 16.2cm 5.5cm, clip=true, scale=0.65]{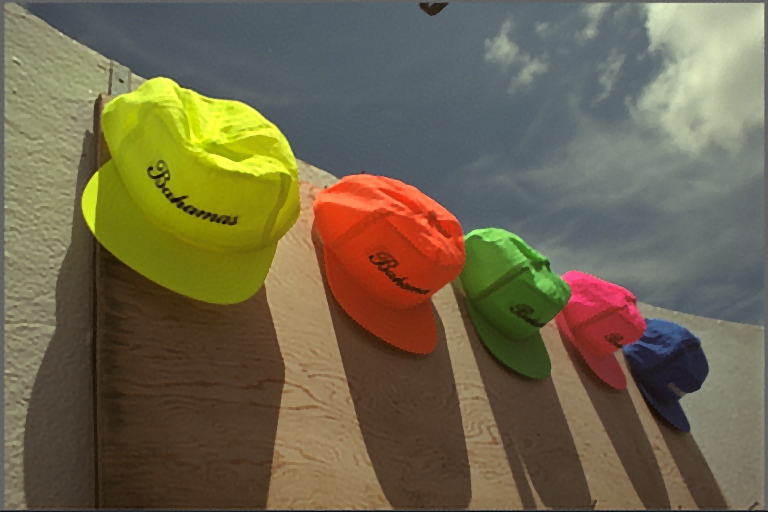}&
	\includegraphics[trim= 5cm 7cm 16.2cm 5.5cm, clip=true, scale=0.65]{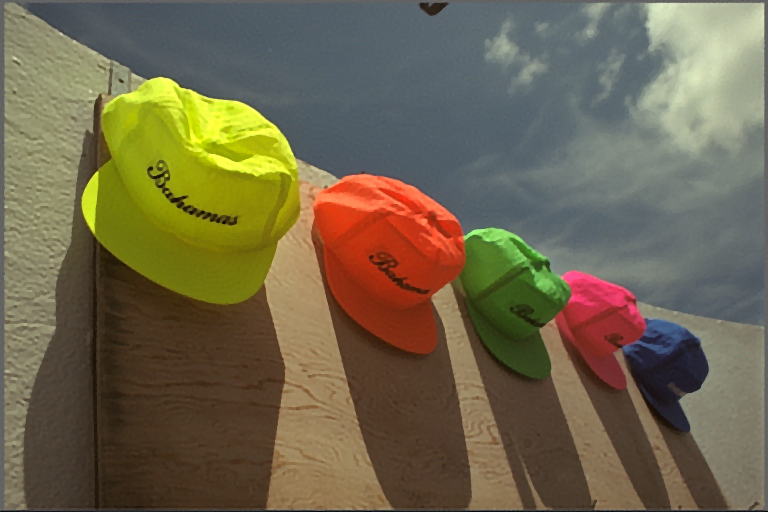} &
	\includegraphics[trim= 5cm 7cm 16.2cm 5.5cm, clip=true, scale=0.65]{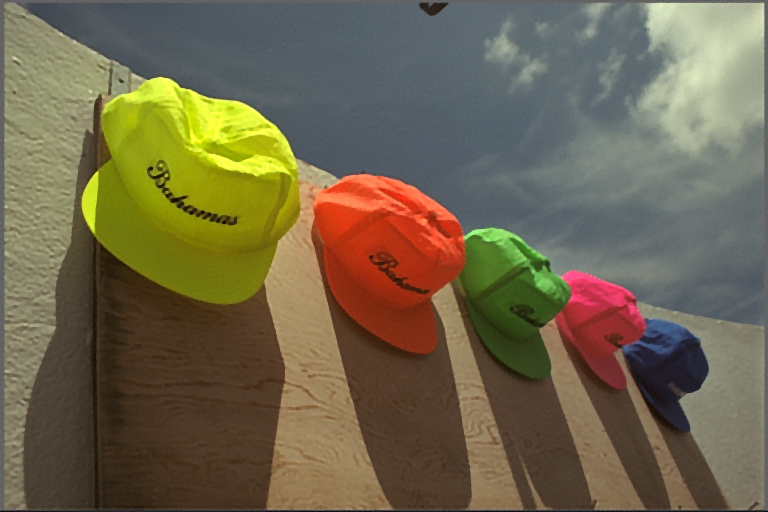} \\
	$\begin{array}{c} \ell^{2,\infty,1}(der,col,pix) \\ \text{PSNR} = 32.67 \end{array}$ & $\begin{array}{c} \left(S^1(col,der), \ell^1(pix)\right) \\ \text{PSNR} = 32.77 \end{array}$ & $\begin{array}{c} \left(S^{\infty}(col,der), \ell^1(pix)\right) \\ \text{PSNR} = 32.57 \end{array}$ 
\end{tabular}
\caption{Close-ups of the ground truth, the input blurred and noisy data (Gaussian convolution of s.d. $2$ and further additive Gaussian noise of s.d. $0.5$), and the restored images each method provided on the third Kodak image. For each CTV regularization, the value of $\lambda$ which gave the best PSNR value was determined experimentally. The PSNR value for each result is noted below the image. We observe that the blur has been almost suppressed  in all cases although some spatial details cannot be recovered from the corrupted data. As expected from TV based models, the restored images tend to be piecewise smooth. Note also that the visual and numerical differences are not as great as in the denoising case. However, it seems that isotropic diffusion is more suitable for deblurring images with very little noise since $\ell^{2,2,1}$ and the new-proposed $\ell^{\infty,2,1}$ provide the best results together with the nuclear norm $(S^1,\ell^1)$.}
\label{fig:deblurring}
\end{center}
\end{figure}

\subsection{Image Inpainting}
Image inpainting is the process of filling-in lost data in a known region of an image. Although during the last years a lot of effort has been put into the development of powerful image priors, we are interested in the TV based image inpainting model \cite{ChanShen2001}, which is limited to inpainting the geometric structure at unknown pixels.

Let $\mathcal{I}\subset\R^{N}$ be the inpainting domain, that is, the set of all pixels in the image where the intensity value of all color channels is unknown. Therefore, the primal problem we focus on is given by
\begin{equation}\label{eq:TVinpainting}
\min_{\mathbf{u}\in X}  \dfrac{\lambda}{2}\|\mathbf{u}-\mathbf{f}\|^2_{\R^N\setminus\mathcal{I}} + \|K\mathbf{u} \|_{\vec{b},a},
\end{equation}
where $\|\cdot\|_{\R^N\setminus\mathcal{I}}$ denotes the Euclidean norm at known pixels. We see that the proximity operator of $G(\mathbf{u})=\frac{\lambda}{2}\|\mathbf{u}-\mathbf{f}\|^2_{\R^N\setminus\mathcal{I}}$ is 
$$
\widehat{\mathbf{u}} = \prox_{\tau G}(\mathbf{u}) \: \Leftrightarrow\: \widehat{u}_{i,k} = \left\lbrace \begin{array}{ll} u_{i,k} & \text{if } i\in\mathcal{I},\vspace{0.2cm} \\ \dfrac{u_{i,k}+\tau\lambda f_{i,k}}{1+\tau\lambda} & \text{otherwise}.\end{array}\right.
$$

For the comparative quality assessment in image inpainting, we used a mask with random scribbles. In Figure \ref{fig:inpainting} we show the optimal result in terms of the highest PSNR provided by each CTV regularization on parts of the twentieth Kodak image. Since the image domain which is to be filled in is thin, pretty good numerical results are in general obtained. Indeed, all methods exhibit excellent PSNR since an increase of about $20$ dB is reached (the value of the input data is $20.58$). Concerning $\ell^{p,q,r}$ norms, one realizes that isotropic regularization performs significantly better than anisotropic filtering. In this setting, observe that $\ell^{1,1,1}$, $\ell^{2,1,1}$, and $\ell^{\infty,1,1}$ provide the lowest PSNR values as well the worst inpainted images from visual quality assessment. On the other hand,  CTV methods based on $\ell^{2,2,1}$, $(S^{\infty}, \ell^1)$, $(S^1, \ell^1)$, and $\ell^{2,\infty,1}$ norms are significantly superior to all other regularizations both visually -- compare the results at the edge separating the two gray regions with different color scheme -- and in terms of the metric. Accordingly, TV-based inpainting prefers straight contours as they have minimal total variation, but it is less successful for recovering curved boundaries. In this setting, one sees that all methods perfectly recover the color edge in the propeller of the plane, but they fail at its yellow boundary.

\begin{figure}[!htbp]
\begin{center}
\begin{tabular}{cc}
	\includegraphics[trim= 7.5cm 6.5cm 15cm 7cm, clip=true, scale=0.84]{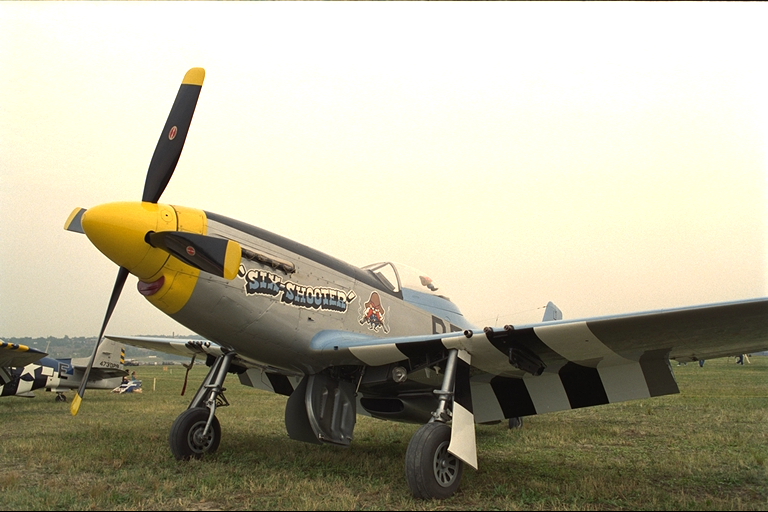} &
	\includegraphics[trim= 7.5cm 6.5cm 15cm 7cm, clip=true, scale=0.84]{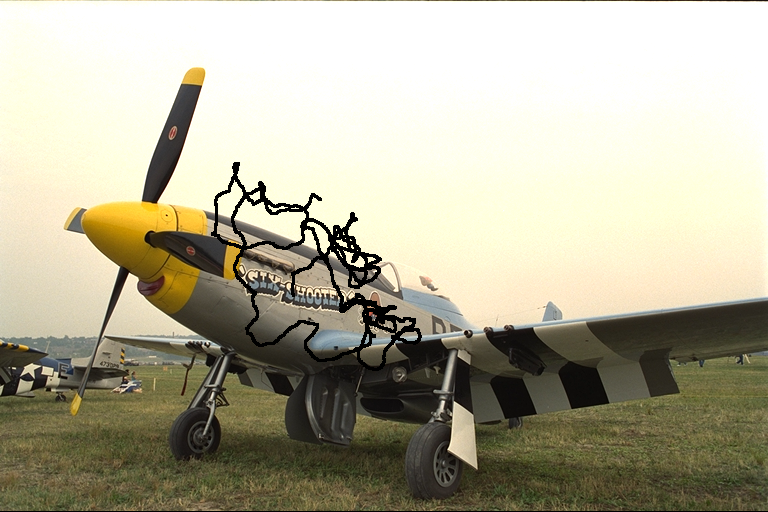}\\
	Clean & Masked
\end{tabular}
\begin{tabular}{ccc}
	\includegraphics[trim= 7.5cm 6.5cm 15cm 7cm, clip=true, scale=0.84]{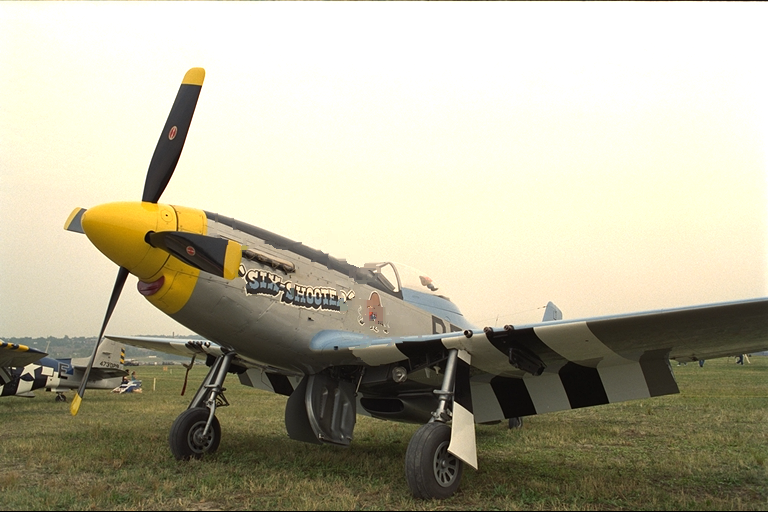} & 
	\includegraphics[trim= 7.5cm 6.5cm 15cm 7cm, clip=true, scale=0.84]{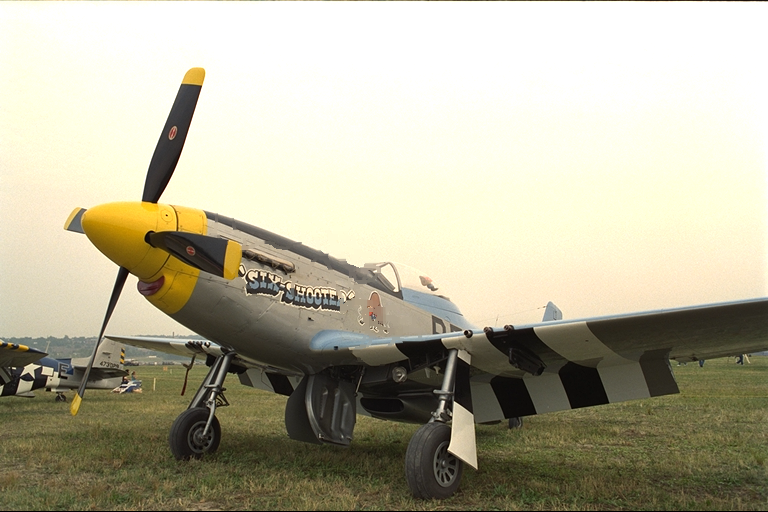} &
	\includegraphics[trim= 7.5cm 6.5cm 15cm 7cm, clip=true, scale=0.84]{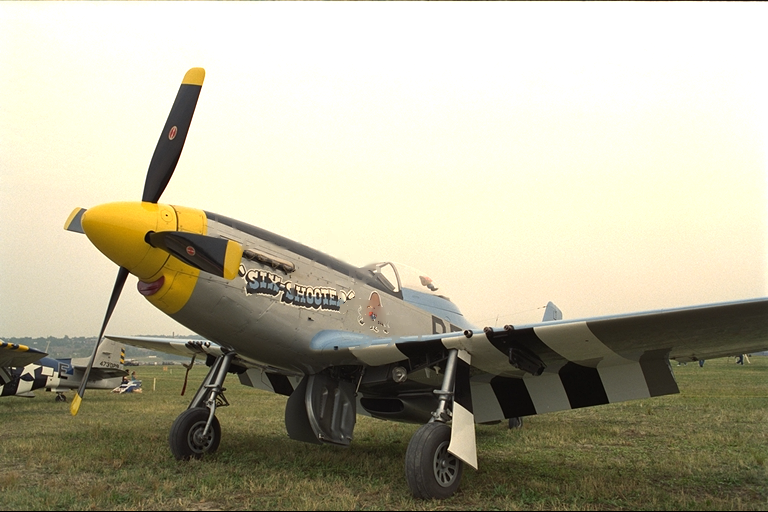} \\
	$\begin{array}{c} \ell^{1,1,1}(col,der,pix) \\ \text{PSNR} = 37.97 \end{array}$ & $\begin{array}{c}\ell^{2,1,1}(col,der,pix) \\ \text{PSNR} = 38.15 \end{array}$  & $\begin{array}{c} \ell^{2,2,1}(col,der,pix) \\ \text{PSNR} = 39.15\end{array}$ \\
	\includegraphics[trim= 7.5cm 6.5cm 15cm 7cm, clip=true, scale=0.84]{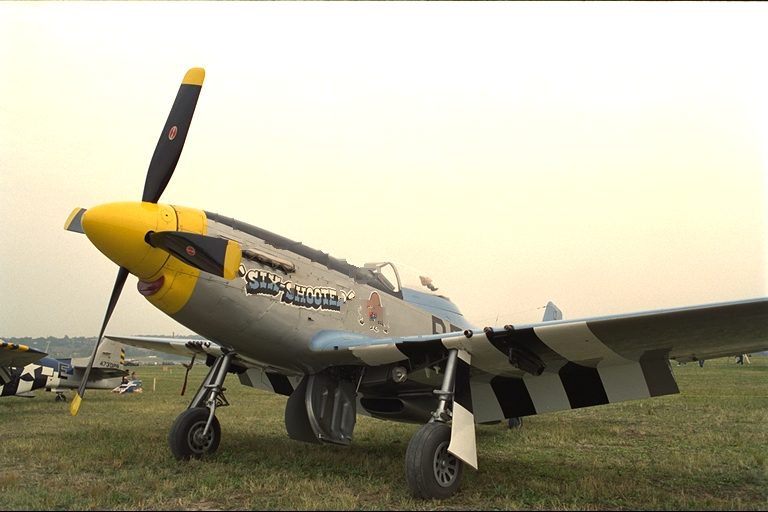} &
	\includegraphics[trim= 7.5cm 6.5cm 15cm 7cm, clip=true, scale=0.84]{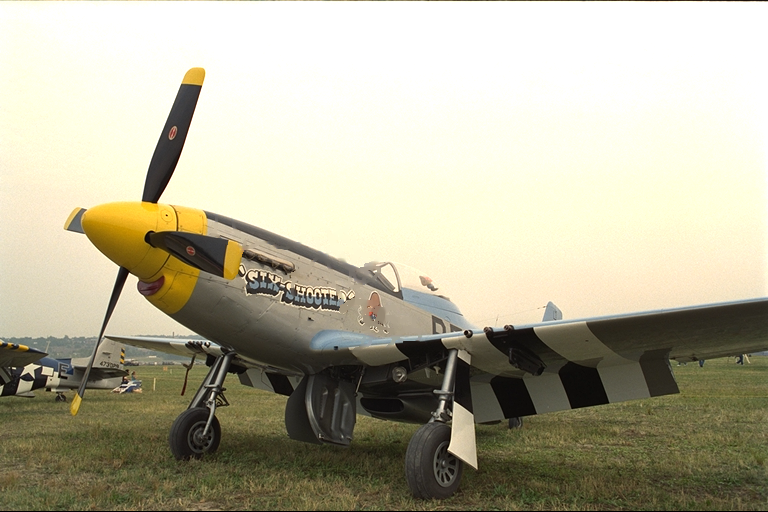} &
	\includegraphics[trim= 7.5cm 6.5cm 15cm 7cm, clip=true, scale=0.84]{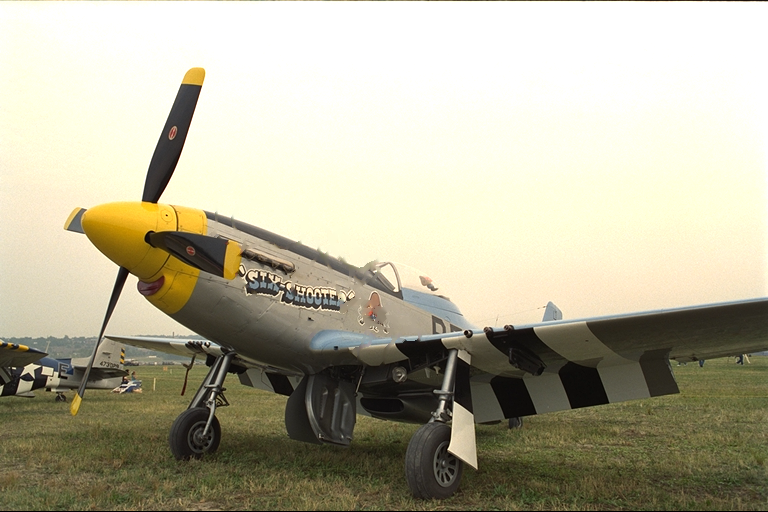} \\
	$\begin{array}{c} \ell^{\infty,1,1}(col,der,pix) \\ \text{PSNR} = 37.84 \end{array}$ & $\begin{array}{c}\ell^{\infty,2,1}(col,der,pix) \\ \text{PSNR} = 38.23 \end{array}$ & $\begin{array}{c} \ell^{\infty,\infty,1}(col,der,pix) \\ \text{PSNR} = 37.88\end{array}$ \\
	\includegraphics[trim= 7.5cm 6.5cm 15cm 7cm, clip=true, scale=0.84]{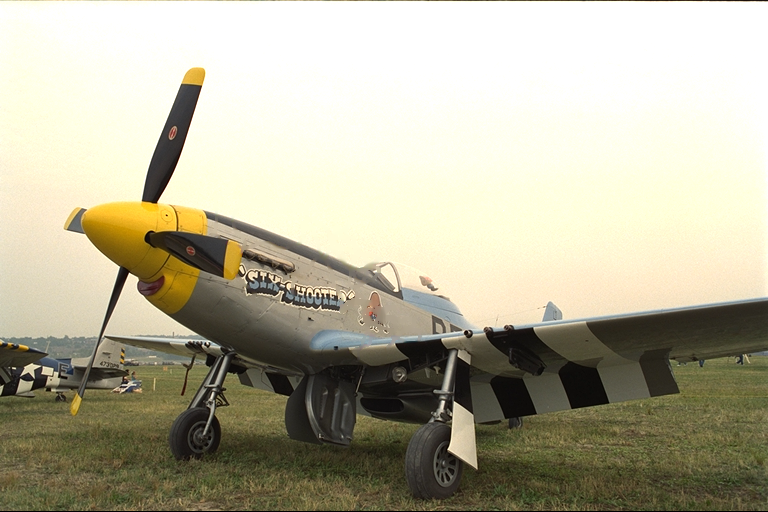}&
	\includegraphics[trim= 7.5cm 6.5cm 15cm 7cm, clip=true, scale=0.84]{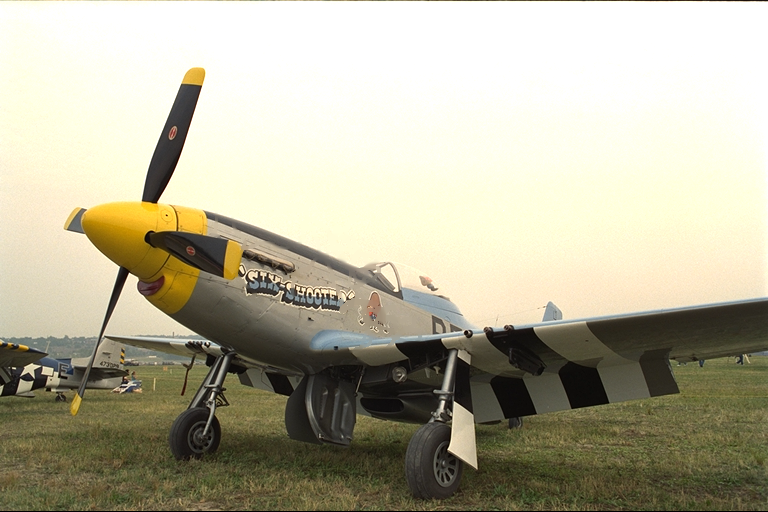} &
	\includegraphics[trim= 7.5cm 6.5cm 15cm 7cm, clip=true, scale=0.84]{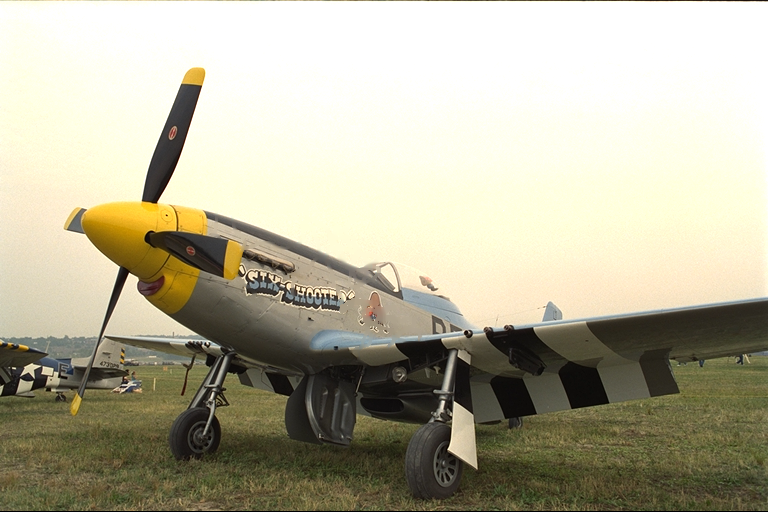} \\
	$\begin{array}{c} \ell^{2,\infty,1}(der,col,pix) \\ \text{PSNR} = 38.95 \end{array}$ & $\begin{array}{c} \left(S^1(col,der), \ell^1(pix)\right) \\ \text{PSNR} = 39.09 \end{array}$ & $\begin{array}{c} \left(S^{\infty}(col,der), \ell^1(pix)\right) \\ \text{PSNR} = 39.10 \end{array}$ 
\end{tabular}
\caption{Close-ups of the ground truth, the input masked image with random scribble, and the inpainted images provided by the minimization of \eqref{eq:TVL2denoising} on the twentieth Kodak image. For each method, the value of $\lambda$ which gave the best PSNR value was determined experimentally. The PSNR value for each result is noted below the image. In this case, we realize that anisotropic diffusion severely compromises the perfomance of $\ell^{p,q,r}$ norms. Indeed, $\ell^{1,1,1}$, $\ell^{2,1,1}$ and $\ell^{\infty,1,1}$ provide the lowest PSNR values as well the worst inpainted images from visual quality assessment. On the contrary, $\ell^{2,2,1}$, $(S^{\infty},\ell^1)$, $(S^1,\ell^1)$ and $\ell^{2,\infty,1}$ give similar results and are able to better recover sharp edges damaged by the mask.} 
\label{fig:inpainting}
\end{center}
\end{figure}

{\bf CIE-$\text{L}^{*}\text{a}^{*}\text{b}^{*}$ space.} All previous experiments were performed using the standard RGB color space. The CIE-$\text{L}^{*}\text{a}^{*}\text{b}^{*}$ is a perceptually uniform color space, a property which the common RGB model does not have, describing all the colors visible to the human eye. The three coordinates $L^{*}$, $a^{*}$ and $b^{*}$ represent the lightness of the color, its position between red/magenta and green, and its position between yellow and blue, respectively. Contrary to RGB color systems, in this space the color differences which one perceives correspond to distances when measuring colorimetrically.

Figure \ref{fig:cielab} shows the results of minimizing \eqref{eq:TVinpainting}  on both RGB and CIE-$\text{L}^{*}\text{a}^{*}\text{b}^{*}$ color spaces by means of $\CTV-\ell^{2,\infty,1}$ regularization, which benefits from the superiority of isotropic diffusion as demonstrated in Figure \ref{fig:inpainting}. We observe that the choice of the uniform color space leads to a slight but visually noticeable improvement. Indeed, the bleeding of red across edges in RGB space vanishes when transforming the image into CIE-$\text{L}^{*}\text{a}^{*}\text{b}^{*}$ before inpainting. Accordingly, the PSNR gain is not negligible. 

\begin{figure}[!htbp]
\begin{center}
\begin{tabular}{ccc}
	\includegraphics[trim= 7.5cm 8.9cm 18.6cm 8.3cm, clip=true, scale=3.9]{inpainting/clean.png} &
	\includegraphics[trim= 7.5cm 8.9cm 18.6cm 8.3cm, clip=true, scale=3.9]{inpainting/l2inf1.png} &
	\includegraphics[trim= 7.5cm 8.9cm 18.6cm 8.3cm, clip=true, scale=3.9]{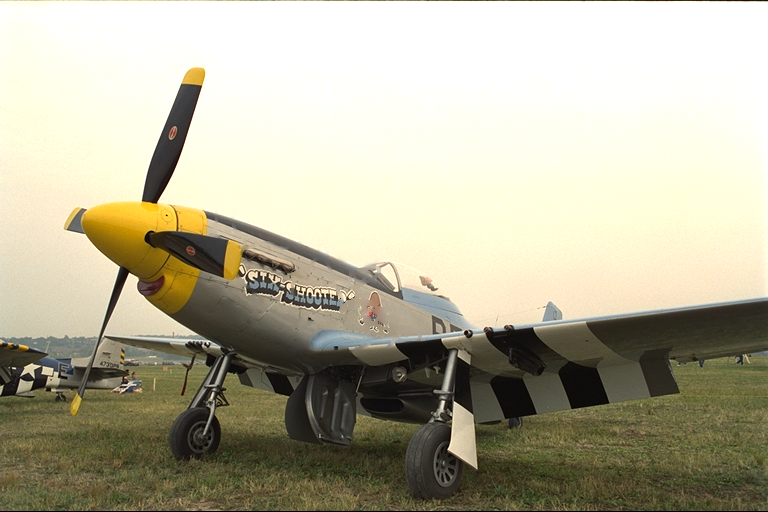} \\
	Clean & $\begin{array}{c} \ell^{2,\infty,1} \text{ in RGB space}\\ \text{PSNR} = 38.95\end{array}$ & $\begin{array}{c} \ell^{2,\infty,1} \text{ in CIELab space}\\ \text{PSNR} = 39.18\end{array}$ 
\end{tabular}
\caption{Close-ups of the ground truth and the inpainted images provided by the minimization of \eqref{eq:TVinpainting} on the twentieth Kodak image. In each case, the value of $\lambda$ which gave the best PSNR value was determined experimentally. First, we note that the PSNR gain in the CIE-$\text{L}^{*}\text{a}^{*}\text{b}^{*}$ space is not at all negligible. But, more importantly, the visual quality assessment demonstrates that a bleeding of red across edges appears in the RGB space. This effect vanishes when the inpainting is carried out in the perceptually uniform CIE-$\text{L}^{*}\text{a}^{*}\text{b}^{*}$ color space.} 
\label{fig:cielab}
\end{center}
\end{figure}

\section{Conclusions}\label{sec:conclusions}
Considering the discrete setting, we have proposed to view the gradient of a multispectral image as a three dimensional matrix or tensor with the dimensions corresponding to the spatial extend, the directional derivatives considered as linear operators containing the differences to other pixels, and the color channels. We have then introduced collaborative total variation as the regularization that arises from taking different norms along each dimension. In particular, we have proposed to use collaborative norms such as $\ell^{p,q,r}$ and $(S^p,\ell^q)$, leading to very different properties of the regularization. We have provided relevant mathematical characterizations of the dual norm, the subdifferential and the proximal mapping of the proposed penalizations, which play a direct role in computing optimality conditions of several regularized problems. We have further proved, using the generalized concept of singular vectors, than an $\ell^{\infty}$ coupling leads to the strongest channel correlation, makes the most prior assumptions, and has the greatest potential to reduce color artifacts.

In experiments, we have demonstrated the wide applicability of the collaborative total variation to general inverse problems like denoising, deblurring and inpainting. For the numerical computation of the solution we have used the primal-dual hybrid gradient algorithm  and stated all proximity operators of the considered CTV regularizations. From the above standards, we have exhibited the superiority of the $\ell^{\infty,1,1}$ norm for a stronger suppression of color artifacts, and of the isotropic regularizations for filling in thin regions. 

\bibliographystyle{siam}
\bibliography{references}

\appendix

\vspace{1cm}
\section{Singular Vector Analysis}\label{app:singVectors}

We give here the mathematical details regarding the construction of singular vectors as discussed in Section \ref{sec:singVectors}. Let us remark that the following analysis could be done in a continuous setting with weak derivatives and distributions as long as there is a finite number of points at which the linearity of the functions $l$ changes. Since our whole discussion about collaborative norms has been dealing with the discrete case, we limit the proofs in this section to the discrete setup, too. 

Let $l: \{x_1, ..., x_N\} \rightarrow [-1,1]$ be a discretization of a piecewise linear function such that the piecewise linearity only changes at $\{-1,1\}$. More precisely, define the slope as well as the discrete derivative operator at each point as $s_i := D_x l(x_i) = l(x_{i})-l(x_{i-1})$. Let $D_x^T$ denote the adjoint operator of $D_x$ defined by analogy with the continuous setting: $\langle D^T_xl, f\rangle = \langle l, Df\rangle$. One checks easily that $D^T_xl(x_i)= l(x_i)-l(x_{i+1})$. Then, we require either $s_{i+1}=s_i$, which means that we are in the piecewise linear part, or $|l(x_i)|= 1$, which means that we are at a point where the type of linearity changes. As a first step, let us state the following lemma which will be needed in all following proofs.

\begin{lemma}\label{lem:signOfLinearFunction}
Using definitions and notations above, $s_i-s_{i+1}>0$ implies $l(x_i)=1$, and $s_i-s_{i+1}<0$ implies $l(x_i)=-1$. In particular, $l(x_i) D_xD^T_xl(x_i) = |s_i-s_{i+1}|$ and $l(x_i)=\sign(s_i-s_{i+1})$ whenever $s_i\neq s_{i+1}$. 
\end{lemma}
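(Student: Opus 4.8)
The plan is to reduce the whole lemma to a single discrete second-derivative identity, and then to pin down the sign of $l(x_i)$ using nothing more than the pointwise bound $|l|\leq 1$. First I would compute $D_x D_x^T l(x_i)$ explicitly. Unwinding the definitions $s_i = l(x_i)-l(x_{i-1})$ and $D_x^T l(x_i) = l(x_i)-l(x_{i+1})$, a direct calculation gives
$$
D_x D_x^T l(x_i) = \big(l(x_i)-l(x_{i+1})\big) - \big(l(x_{i-1})-l(x_i)\big) = 2l(x_i) - l(x_{i-1}) - l(x_{i+1}) = s_i - s_{i+1},
$$
so $D_x D_x^T$ is exactly the (negative) discrete Laplacian and coincides with the jump $s_i-s_{i+1}$ of the slope. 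This identity already reduces the two \emph{In particular} assertions to the single claim that $l(x_i)=\sign(s_i-s_{i+1})$ whenever $s_i\neq s_{i+1}$: multiplying that equality by $l(x_i)$ yields $l(x_i)\,D_x D_x^T l(x_i) = l(x_i)(s_i-s_{i+1}) = |s_i-s_{i+1}|$, while for $s_i = s_{i+1}$ both sides vanish trivially.

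Next I would prove the two sign implications. Assume $s_i-s_{i+1}>0$. Because $s_i\neq s_{i+1}$, the point $x_i$ cannot lie in a region where the slope is constant, so the standing hypothesis on $l$ forces $|l(x_i)|=1$, i.e. $l(x_i)\in\{-1,+1\}$. To rule out $l(x_i)=-1$, I would argue by contradiction: using $l(x_{i-1})\geq -1$ and $l(x_{i+1})\geq -1$ one obtains
$$
s_i-s_{i+1} = 2l(x_i) - l(x_{i-1}) - l(x_{i+1}) \leq -2 + 1 + 1 = 0,
$$
contradicting $s_i-s_{i+1}>0$; hence $l(x_i)=1$. The case $s_i-s_{i+1}<0$ is entirely symmetric: assuming $l(x_i)=1$ and using $l(x_{i\pm 1})\leq 1$ forces $s_i-s_{i+1}\geq 0$, a contradiction, so $l(x_i)=-1$. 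Combining both cases gives $l(x_i)=\sign(s_i-s_{i+1})$ on the set $\{s_i\neq s_{i+1}\}$, which together with the identity above closes the argument.

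I do not anticipate a genuine obstacle, since the result is a short sign analysis built on one algebraic identity; the only real work is organizing it cleanly so that the $|l|\leq 1$ constraint is invoked at exactly the right places. The single point requiring care is the bookkeeping at the endpoints of $\{x_1,\ldots,x_N\}$, where one of the neighbouring values $l(x_{i-1})$ or $l(x_{i+1})$ is not present; this is resolved by the convention already fixed for $D_x$ and $D_x^T$, and since any resulting boundary term still obeys $|l|\leq 1$, the inequalities above remain valid verbatim.
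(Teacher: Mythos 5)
Your proposal is correct and takes essentially the same route as the paper: both invoke the standing hypothesis to get $|l(x_i)|=1$ whenever $s_i\neq s_{i+1}$, rule out the wrong sign by contradiction from the bound $|l|\leq 1$ at the neighbouring points, and obtain the ``in particular'' claims from the identity $D_xD_x^Tl(x_i)=s_i-s_{i+1}$. The only immaterial difference is that you derive the contradiction in one symmetric step, $s_i-s_{i+1}=2l(x_i)-l(x_{i-1})-l(x_{i+1})\leq 0$, whereas the paper chains through $s_i\leq 0$, hence $s_{i+1}<0$, hence $l(x_{i+1})<-1$.
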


\begin{proof}
If $s_i-s_{i+1}>0$, then $|l(x_i)|=1$ due to definition of $l$. Let us suppose that $l(x_i)=-1$. From $|l(x_j)|\leq 1$ for all $j\in\{1,\ldots, N\}$, it follows that $l(x_{i-1})\geq -1$ and, as a consequence, $s_i  \leq 0$. This means that $s_{i+1} = l(x_{i+1})- l(x_i)<0$, that is, $l(x_{i+1})<-1$, which contradicts $|l(x_{i+1})|\leq 1$. We thus deduce that $l(x_i)=1$. The proof for the case $s_i-s_{i+1}<0$ can be done in a similar fashion. 

The additional statement is a simple consequence of the first part along with $D_xD^T_x l(x_i) = s_i-s_{i+1}$ by definition of the operators. 
\end{proof}

For each CTV regularization, the following results show that if $z_k^1$ and $z_k^2$ have some specific expressions, then the associated image $\mathbf{u}=D^T\mathbf{z}$ is a singular vector of the energy $J(\mathbf{u})=\|D\mathbf{u}\|_{\vec{b},a}$.

\begin{theorem} \label{thm:singular111}
For each $k\in\{1,\ldots,C\}$, let $l^1_k$ and $l^2_k$ be discretizations of arbitrary piecewise linear functions in $[-1,1]$, with the properties described previously. Let us consider $z^1_k(x_i,y_j) = c^1_k l^1_k(x_i)$ and $z^2_k(x_i,y_j) = c^2_k l^2_k(y_j)$ such that $c_k^1,c_k^2\in\{0,\pm 1\}$, as well as $u_k(x_i,y_j) = (D_x^T z^1_k)(x_i,y_j) + (D_y^T z^2_k)(x_i,y_j)$. Then, $\mathbf{u} \in \partial J(\mathbf{u})$ for $J(\mathbf{u}) = \|D\mathbf{u}\|_{1,1,1}$.
\end{theorem}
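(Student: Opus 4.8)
The plan is to verify directly that the pair $(\mathbf{u},\mathbf{z})$ satisfies the two optimality conditions in \eqref{eq:subdiff1}, which by Theorem \ref{thm:subdiffnorm} is precisely what certifies $\mathbf{z}\in\partial_{D\mathbf{u}}\big(\|D\mathbf{u}\|_{1,1,1}\big)$ and hence, by the construction in Section \ref{sec:singVectors}, that $\mathbf{u}=D^T\mathbf{z}\in\partial J(\mathbf{u})$. Since all three inner norms are $\ell^1$, the collaborative norm $\|\cdot\|_{1,1,1}$ is simply the sum of the absolute values of all entries of $D\mathbf{u}$, and its dual $\|\cdot\|_{\infty,\infty,\infty}$ is the maximal absolute entry; in particular the hypothesis of Theorem \ref{thm:subdiffnorm} (that $\|\cdot\|_{a^{*}}=\|\cdot\|_{\infty}$ depends only on absolute values) holds trivially.

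The dual-norm constraint is immediate. Each entry of $\mathbf{z}$ is $z^r_k=c^r_k l^r_k$, and since $c^r_k\in\{0,\pm1\}$ and $|l^r_k|\leq 1$ we get $|z^r_k(x,y)|\leq 1$ for all pixels and channels, so that $\|\mathbf{z}\|_{\infty,\infty,\infty}\leq 1$.

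The substance lies in the equality $\langle \mathbf{z},D\mathbf{u}\rangle=\|D\mathbf{u}\|_{1,1,1}$. Here I would first exploit the separable structure of $\mathbf{z}$: because $z^1_k$ depends only on $x$ and $z^2_k$ only on $y$, the image $u_k(x,y)=c^1_k(D_x^T l^1_k)(x)+c^2_k(D_y^T l^2_k)(y)$ splits into an $x$-part and a $y$-part, so that differentiating annihilates the cross terms and leaves $D_x u_k(x,y)=c^1_k\,D_xD_x^T l^1_k(x)$ (independent of $y$) and $D_y u_k(x,y)=c^2_k\,D_yD_y^T l^2_k(y)$ (independent of $x$). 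Substituting these into the inner product and collecting the $x$- and $y$-contributions reduces the computation to a sum of pointwise terms of the form $(c^r_k)^2\, l^r_k\, DD^T l^r_k$.

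The key step is then to recognize each such term as a matching absolute value in the norm. For the $x$-contribution, Lemma \ref{lem:signOfLinearFunction} supplies the pointwise identity $l^1_k(x)\,D_xD_x^T l^1_k(x)=|D_xD_x^T l^1_k(x)|$, and combining it with $(c^1_k)^2=|c^1_k|$ (valid because $c^1_k\in\{0,\pm1\}$) yields $(c^1_k)^2\,l^1_k(x)\,D_xD_x^T l^1_k(x)=|c^1_k\,D_xD_x^T l^1_k(x)|=|D_x u_k(x,y)|$, with the analogous identity for the $y$-contribution. Summing over all pixels and channels then reconstitutes exactly $\|D\mathbf{u}\|_{1,1,1}$. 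I expect the main obstacle to be careful bookkeeping rather than conceptual depth: one must apply Lemma \ref{lem:signOfLinearFunction} at every grid point (including points where the slope does not change, so that both sides vanish), dispose of the degenerate case $c^r_k=0$, and track the trivial summation over the spatial variable along which the relevant derivative is constant, which appears identically on both sides. Because the $\ell^{1,1,1}$ norm decouples completely across channels, the entire argument runs independently for each $k$, which is exactly why the functions $l^r_k$ are allowed to differ from channel to channel.
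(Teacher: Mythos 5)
Your proof is correct and takes essentially the same route as the paper: both verify the two conditions of \eqref{eq:subdiff1}, reduce to $D_x u_k = c_k^1\, D_xD_x^T l_k^1$ after noting that the cross terms vanish, and invoke Lemma \ref{lem:signOfLinearFunction}. The only (cosmetic) difference is that the paper checks the sign identity $\sign(D_x u_k) = z_k^1$ pointwise wherever $Du_k \neq 0$, whereas you verify the equivalent product identity $z_k^1\, D_x u_k = |D_x u_k|$ at every grid point, which absorbs the degenerate cases ($c_k^r = 0$ or $s_i = s_{i+1}$) without a case distinction.
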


\begin{proof}
We aim at proving $\mathbf{u}\in\partial J(\mathbf{u})$ with $J(\mathbf{u})=\|D\mathbf{u}\|_{1,1,1}$. Based on the characterization of the subdifferential of $J$ given in \eqref{eq:subdiff1}, we have to show that $\|\mathbf{z}\|_{\infty,\infty,\infty}\leq 1$, which is obvious due to its construction, together with $\sign(D_x u_k(x_i,y_j)) = z^1_k(x_i,y_j)$ and $\sign(D_y u_k(x_i,y_j)) = z^2_k(x_i,y_j)$ for all $(x_i, y_j)$ at which $Du_k(x_i,y_j) \neq 0$. First, we can assume that $c_k^1\neq 0$ since, otherwise, the statement is trivially satisfied. Now, observe that
\begin{equation*}
\begin{aligned}
D_xu_k(x_i,y_j) &= D_xD^T_x z^1_k(x_i,y_j) + D_xD^T_y z^2_k(x_i,y_j) = c_k^1D_xD^T_x l^1_k(x_i) + c_k^2 D_xD^T_y l^2_k(y_j)\\
&= c_k^1 D_xD^T_x l^1_k(x_i) = c_k^1(s_i-s_{i+1}),
\end{aligned}
\end{equation*}
where $D_xD^T_y l^2_k(y_j) = 0$ since $D^T_y l^2_k(y_j)$ does not depend on $x$. Hence, $Du_k(x_i,y_j)\neq 0$ implies that $s_i\neq s_{i+1}$ and, thus, $l^1_k(x_i)=\sign(s_i-s_{i+1})$ by Lemma \ref{lem:signOfLinearFunction}. It follows that
$$
\sign(D_x u_k(x_i,y_j)) =  \sign\left(c_k^1(s_i-s_{i+1})\right) = c_k^1 l_k^1(x_i)=z_k^1(x_i,y_j),
$$
where in the second transition from last we have used $c_k^1=\sign(c_k^1)$ derived from $c_k^1\in\{-1,+1\}$. The proof of $\sign(D_y u_k(x_i,y_j)) = z_k^2(x_i,y_j)$ is similar and yields the assertion $\langle \mathbf{z}, D\mathbf{u}\rangle=\|D\mathbf{u}\|_{1,1,1}$.
\end{proof}

\begin{theorem} \label{thm:singular211}
Let $l^1$ and $l^2$ be discretizations of arbitrary piecewise linear functions in $[-1,1]$, with the properties described previously. For each $k\in\{1,\ldots, C\}$, let us define $z_k^1(x_i,y_j) =  c^1_k l^1(x_i)$ and $z_k^2(x_i,y_j) = c^2_k l^2(y_j)$ such that $c^1_k, c^2_k \in \mathbb{R}$ with $\|c^1\|_2=\|c^2\|_2=1$, and $u_k(x_i,y_j) =(D_x^T z_k^1)(x_i,y_j) + (D_y^T z_k^2)(x_i,y_j)$. Then, $\mathbf{u}\in\partial J(\mathbf{u})$ for $J(\mathbf{u}) = \|D\mathbf{u}\|_{2,1,1}$.
\end{theorem}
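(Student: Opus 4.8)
The plan is to verify the two conditions in \eqref{eq:subdiff1} that characterize the membership $\mathbf{u} \in \partial J(\mathbf{u})$ when $\mathbf{u} = D^T\mathbf{z}$. Since $J(\mathbf{u}) = \|D\mathbf{u}\|_{2,1,1}$, Theorem \ref{thm:dualnorm} identifies the relevant dual norm as $\ell^{2,\infty,\infty}$, so I must establish (i) $\|\mathbf{z}\|_{2,\infty,\infty} \leq 1$ and (ii) $\langle \mathbf{z}, D\mathbf{u}\rangle = \|D\mathbf{u}\|_{2,1,1}$. The overall skeleton mirrors the proof of Theorem \ref{thm:singular111}, the essential new ingredient being the way the $\ell^2$ coupling across color channels interacts with the normalization $\|c^1\|_2 = \|c^2\|_2 = 1$.

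First I would dispatch the dual-norm bound. Because the innermost norm of $\ell^{2,\infty,\infty}$ is the $\ell^2$ over colors, for each pixel and each derivative direction $r$ one computes $\sqrt{\sum_k |c_k^r l^r|^2} = |l^r|\cdot\|c^r\|_2 = |l^r| \leq 1$, using $\|c^r\|_2 = 1$ and $|l^r|\leq 1$. Taking the remaining $\ell^\infty$ maxima over the derivative and pixel dimensions then yields $\|\mathbf{z}\|_{2,\infty,\infty}\leq 1$.

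Next, following the derivative computation already carried out in the proof of Theorem \ref{thm:singular111}, I would record that $D_x u_k(x_i,y_j) = c_k^1(s_i - s_{i+1})$ and $D_y u_k(x_i,y_j) = c_k^2(\tilde s_j - \tilde s_{j+1})$, where $s_i := D_x l^1(x_i)$ and $\tilde s_j := D_y l^2(y_j)$, the cross terms vanishing because each $z^r_k$ depends on a single spatial variable. Writing $a_i := s_i - s_{i+1}$ and $b_j := \tilde s_j - \tilde s_{j+1}$, the crucial observation is that both the energy and the pairing factor through $\|c^r\|_2$: the inner $\ell^2$ over colors in $\|D\mathbf{u}\|_{2,1,1}$ pulls out $\|c^r\|_2 = 1$, giving $\|D\mathbf{u}\|_{2,1,1} = \sum_{i,j}(|a_i| + |b_j|)$, while the Euclidean contraction over colors in the pairing produces $\sum_k (c_k^r)^2 = \|c^r\|_2^2 = 1$, giving $\langle \mathbf{z}, D\mathbf{u}\rangle = \sum_{i,j}(l^1(x_i) a_i + l^2(y_j) b_j)$. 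Applying Lemma \ref{lem:signOfLinearFunction} in the form $l^1(x_i) a_i = |a_i|$ and $l^2(y_j) b_j = |b_j|$ matches the two expressions and concludes the argument.

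The step that requires the most care is the bookkeeping showing the two normalizations agree: the energy sees $\|c^r\|_2$ (from the $\ell^2$ color norm) whereas the pairing sees $\|c^r\|_2^2$ (from the Euclidean inner product over colors), and these coincide \emph{precisely} because $\|c^r\|_2 = 1$. This is the exact place where the hypothesis $\|c^1\|_2 = \|c^2\|_2 = 1$ enters, and it is what replaces the discrete condition $c_k^r \in \{0,\pm 1\}$ of the $\ell^{1,1,1}$ case; everything else is a direct transcription of the earlier proof.
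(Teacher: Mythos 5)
Your proposal is correct and follows essentially the same route as the paper's proof: verifying the two conditions of \eqref{eq:subdiff1} with the dual norm $\ell^{2,\infty,\infty}$, computing $D_x u_k = c_k^1(s_i - s_{i+1})$ as in Theorem \ref{thm:singular111}, and using Lemma \ref{lem:signOfLinearFunction} so that the energy (which contracts to $\|c^r\|_2$) and the pairing (which contracts to $\|c^r\|_2^2$) coincide under the normalization $\|c^r\|_2 = 1$. Your closing remark identifying this $\|c^r\|_2$ versus $\|c^r\|_2^2$ bookkeeping as the precise role of the hypothesis is exactly the mechanism at work in the paper's displayed computations.
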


\begin{proof}
Similar to Theorem \ref{thm:singular111}, we have to check that $\|\mathbf{z}\|_{2,\infty,\infty}\leq 1$, which follows easily from $\|c^r\|_2=1$ and $|l^r(x)|\leq 1$, as well as $\langle \mathbf{z}, D\mathbf{u}\rangle = \|D\mathbf{u}\|_{2,1,1}$. Using Lemma \ref{lem:signOfLinearFunction}, we see that
\begin{equation*}
\begin{aligned}
\sqrt{\sum_{k=1}^C(D_x u_k(x_i,y_j))^2} &= \sqrt{\sum_{k=1}^C(D_x D_x^T z_k(x_i,y_j))^2} = \sqrt{\sum_{k=1}^C\left(c^1_k (s_i - s_{i+1})\right)^2} \\
&=  |s_i-s_{i+1}|\cdot \|c^1\|_2 = |s_i-s_{i+1}|.
\end{aligned}
\end{equation*}
On the other hand, it follows that
\begin{equation*}
\begin{aligned}
\sum_{k=1}^C z^1_k(x_i,y_j) D_x u_k(x_i,y_j) &= \sum_{k=1}^C (c_k^1)^2 l^1(x_i) D_xD_x^Tl^1(x_i)\\
&= |s_i - s_{i+1}| \cdot\|c^1\|_2^2 = |s_i - s_{i+1}|.
\end{aligned}
\end{equation*}
Therefore, we have obtained $\sqrt{\sum_{k}(D_x u_k(x_i,y_j))^2} = \sum_{k} z^1_k(x_i,y_j)  D_x u_k(x_i,y_j)$ and, similarly, $\sqrt{\sum_{k}(D_y u_k(x_i,y_j))^2} = \sum_{k} z^2_k(x_i,y_j)  D_y u_k(x_i,y_j)$. These equalities prove that $\langle \mathbf{z}, D\mathbf{u}\rangle = \|D\mathbf{u}\|_{2,1,1}$, which yields the result.
\end{proof}

\begin{theorem}\label{thm:singularInfty11}
Let $l^1$ and $l^2$ be discretizations of arbitrary piecewise linear functions in $[-1,1]$, with the properties described previously. For each $k\in\{1,\ldots, C\}$, consider $c^1_k, c^2_k \in \{0, \pm 1\}$ and define
$$
z_k^1(x_i,y_j) = \left\lbrace \begin{array}{ll} \dfrac{c^1_k}{\|c^1\|_0} l^1(x_i) & \text{if } \|c^1\|_0\neq 0, \\ 0 & \text{otherwise,} \end{array}\right.
$$
and
$$
z_k^2(x_i,y_j) = \left\lbrace \begin{array}{ll}\dfrac{c^2_k}{\|c^2\|_0}l^2(y_j) & \text{if } \|c^2\|_0\neq 0, \\ 0 & \text{otherwise.} \end{array}\right.
$$
Let us also define $u_k(x_i,y_j) =(D_x^T z^1_k)(x_i,y_j) + (D_y^T z^2_k)(x_i,y_j)$. Then, $\mathbf{u}\in\partial J(\mathbf{u})$ for $J(\mathbf{u}) = \|D\mathbf{u}\|_{\infty,1,1}$.
\end{theorem}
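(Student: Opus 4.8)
The plan is to verify directly the two conditions in \eqref{eq:subdiff1} that characterize $\mathbf{u}\in\partial J(\mathbf{u})$ for the candidate dual variable $\mathbf{z}=(z^1,z^2)$, following exactly the template of Theorems \ref{thm:singular111} and \ref{thm:singular211}. Since $u_k$ is defined as $D^T\mathbf{z}$, it remains only to check that $\langle\mathbf{z},D\mathbf{u}\rangle=\|D\mathbf{u}\|_{\infty,1,1}$ and $\|\mathbf{z}\|_{\vec{b}^*,a^*}\leq 1$. By Theorem \ref{thm:dualnorm} the dual norm to $\ell^{\infty,1,1}$ is $\ell^{1,\infty,\infty}$, so the second condition reduces to checking that $\sum_{k=1}^C|z^r_k(x_i,y_j)|\leq 1$ for each direction $r$ and each pixel. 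Because $c^r_k\in\{0,\pm1\}$ gives $\sum_k|c^r_k|=\|c^r\|_0$, the normalization yields $\sum_k|z^r_k(x_i,y_j)|=|l^r|\leq 1$ whenever $\|c^r\|_0\neq0$, and the sum vanishes otherwise; hence the dual constraint holds.

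Next I would compute $D\mathbf{u}$. As in the earlier proofs, $D_y^Tz^2_k$ is independent of $x$, so $D_xD_y^Tz^2_k=0$ and therefore $D_xu_k(x_i,y_j)=\frac{c^1_k}{\|c^1\|_0}D_xD_x^Tl^1(x_i)=\frac{c^1_k}{\|c^1\|_0}(s_i-s_{i+1})$ when $\|c^1\|_0\neq0$ (the case $\|c^1\|_0=0$ is trivial, both sides vanishing). Because $|c^1_k|=1$ on the support of $c^1$, all nonzero channel derivatives at a fixed location share the common magnitude $|s_i-s_{i+1}|/\|c^1\|_0$, so that $\max_k|D_xu_k(x_i,y_j)|=|s_i-s_{i+1}|/\|c^1\|_0$.

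The heart of the argument is then to match the inner product to the norm pixelwise. Using $\sum_k(c^1_k)^2=\|c^1\|_0$ together with Lemma \ref{lem:signOfLinearFunction} in the form $l^1(x_i)(s_i-s_{i+1})=|s_i-s_{i+1}|$, one obtains
$$\sum_{k=1}^C z^1_k(x_i,y_j)\,D_xu_k(x_i,y_j)=\frac{l^1(x_i)(s_i-s_{i+1})}{\|c^1\|_0^2}\sum_{k=1}^C(c^1_k)^2=\frac{|s_i-s_{i+1}|}{\|c^1\|_0}=\max_k|D_xu_k(x_i,y_j)|.$$
The same computation in the $y$-direction gives $\sum_k z^2_k D_yu_k=\max_k|D_yu_k|$; summing both identities over the two directions and over all pixels yields $\langle\mathbf{z},D\mathbf{u}\rangle=\|D\mathbf{u}\|_{\infty,1,1}$, which concludes the proof.

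The only genuinely new point relative to the $\ell^1$ and $\ell^2$ cases, and where I expect the conceptual difficulty to lie, is understanding the role of the factor $\|c^r\|_0$. The $\ell^\infty$ channel coupling forces the H\"older inequality between the inner $\ell^\infty$ (over color) and dual $\ell^1$ (over color) norms to be saturated, which requires the dual vector $z^r$ to distribute its unit $\ell^1$ mass uniformly, and with matching signs, over precisely the set of channels attaining the maximal derivative magnitude. The normalization by $\|c^r\|_0$ is exactly what makes $\frac{1}{\|c^r\|_0}\sum_k(c^r_k)^2$ and $\max_k|c^r_k|/\|c^r\|_0$ coincide, so once this alignment is recognized the remaining verification is routine.
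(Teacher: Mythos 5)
Your proposal is correct and follows essentially the same route as the paper's proof: verifying the dual-ball constraint $\|\mathbf{z}\|_{1,\infty,\infty}\leq 1$ via $\sum_k|c_k^r|=\|c^r\|_0$, and matching $\sum_k z_k^r D u_k$ with $\max_k|Du_k|$ pixelwise using Lemma \ref{lem:signOfLinearFunction} together with $\sum_k(c_k^r)^2=\|c^r\|_0$. The computations coincide with those in Appendix \ref{app:singVectors} step for step, including the handling of the trivial case $\|c^r\|_0=0$.
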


\begin{proof}
Once more, we need to show that $\|\mathbf{z}\|_{1,\infty,\infty} \leq 1$, which follows from $|l^r(x)|\leq 1$ and $\sum_k |c_k^r| = \|c^r\|_0$ due to $c_k^r\in\{0,\pm 1\}$, and $\langle \mathbf{z}, D\mathbf{u}\rangle = \|D\mathbf{u}\|_{\infty,1,1}$. The latter is achieved if $\max_k |D_x u_k(x_i,y_j)| = \sum_{k} z^1_k(x,y)  D_x u_k(x_i,y_j)$. In the nontrivial case, $\|c^1\|_0\neq 0$, we obtain
\begin{equation*}
\begin{aligned}
\max_{1\leq k\leq C} |D_x u_k(x_i,y_j)| &= \max_{1\leq k \leq C} \left| D_xD_x^Tz_k^1(x_i,y_j)\right| = \max_{1\leq k\leq C} \left|\dfrac{c^1_k}{\|c^1\|_0} D_xD_x^Tl^1(x_j)\right|\\
&= \max_{1\leq k\leq C} \left|\dfrac{c^1_k}{\|c^1\|_0} (s_i - s_{i+1})\right| = \dfrac{|s_i - s_{i+1}|}{\|c^1\|_0},
\end{aligned}
\end{equation*}
as well as 
\begin{equation*}
\begin{aligned}
 \sum_{k=1}^C z^1_k(x,y)  D_x u_k(x_i,y_j) &=  \sum_{k=1}^C \frac{\left(c^1_k\right)^2}{\|c^1\|_0^2} l^1(x_i)D_xD_x^T l^1(x_i) = \dfrac{|s_i-s_{i+1}|}{\|c^1\|_0^2} \sum_{k=1}^C \left(c_k^1\right)^2\\
&=  \dfrac{|s_i-s_{i+1}|}{\|c^1\|_0^2} \|c^1\|_0 =  \dfrac{|s_i - s_{i+1}|}{\|c^1\|_0},
\end{aligned}
 \end{equation*}
where we used $\sum_k (c_k^1)^2=\|c^1\|_0$ because of $c_k^r\in\{0,\pm 1\}$. Similarly, one shows that $\max_k |D_y u_k(x_i,y_j)| = \sum_{k} z^2_k(x,y) D_y u_k(x_i,y_j)$, which ends the proof.
\end{proof}



\section{Proof of Theorem \ref{th:chainRule}}\label{app:ThmSubdif}

We first prove \eqref{eq:chainRule1}. Let $\xi = \sum_{j} q_j v_j$ be, with $q=(q_1,\ldots q_m) \in \partial g (f(x_0))$ and $v_j \in \partial f_j(x_0)$. From $q\in \partial g(f(x_0))$ one has that
$$
g(f(x)) \geq g(f(x_0)) + \langle q, f(x)-f(x_0)\rangle,  \quad \forall  x\in\R^n,
$$
and each condition $v_j\in \partial f_j(x_0)$ yields
$$
f_j(x) \geq f_j(x_0) + \langle v_j, x-x_0\rangle, \quad \forall x\in\R^n.
$$
By using the above inequalities, we finally obtain
$$
(g\circ f)(x) \geq g(f(x_0)) + \sum_{j=1}^m q_j \langle v_j, x -x_0\rangle = (g\circ f)(x_0) + \langle \xi, x-x_0\rangle, \quad \forall x \in\R^n.
$$

Assume now that $x_0\in\inter\dom (g\circ f)$ and $f_j$ is locally l.s.c for all $j\in\{1,\ldots, m\}$. Let $\partial$, $\widehat{\partial}$ and $\partial^{\infty}$ denote the regular, general and horizon subdifferentials \cite{Rockafellar1998}, respectively. For $q\in\R^m$, we introduce the notation $(qf)(x):=\sum_{j} q_j f_j(x)$. Since $f$ and $g$ are proper and convex functions, \cite[Proposition 8.12]{Rockafellar1998} implies $\widehat{\partial}(qf)(x_0)=\partial(qf)(x_0)$, $\widehat{\partial}g(f(x_0)) = \partial g(f(x_0))$, and $ \widehat{\partial} (g\circ f)(x_0)=\partial (g\circ f)(x_0)$. Furthermore, the properties of $f_j$ allow applying the same proposition to see
$$
\partial^{\infty} f_j(x) = \left\lbrace \xi\in\R^n : \langle \xi, y-x\rangle \leq 0, \: \forall y\in\dom f_j\right\rbrace =\{ 0\}, \quad\forall x\in\R^n,
$$
which is equivalent to $f$ being strictly continuous by \cite[Theorem 9.13]{Rockafellar1998}. Note that  \cite[Proposition 8.12]{Rockafellar1998} also yields
$$
\partial^{\infty} g(f(x_0)) = \left\lbrace \xi\in\R^m : \langle \xi, y-f(x_0)\rangle \leq 0, \: \forall y\in\dom g\right\rbrace,
$$ 
from where one deduces $\partial^{\infty}g(f(x_0))=\{ 0\}$ due to $x_0\in\inter\dom (g\circ f)$. Putting it all together, \cite[Theorem 10.49]{Rockafellar1998} applies and so
\begin{equation*}
\begin{aligned}
\widehat{\partial}\left(g\circ f\right)(x_0) &\subset \bigcup \left\{ \widehat{\partial}(qf)(x_0) \: : \: q\in\widehat{\partial} g\left(f(x_0)\right)\right\},\\
\partial^{\infty}\left(g\circ f\right)(x_0) &\subset \bigcup \left\{ \widehat{\partial}(qf)(x_0) \: : \: q\in\partial^{\infty} g\left(f(x_0)\right)\right\}.
\end{aligned}
\end{equation*}
Finally, the previous two inclusions along with the equivalence of regular and general subdifferentials lead to the equality in \eqref{eq:chainRule1}.

\end{document}